\def\eqref#1{equation~\ref{#1}}
\def\1{\bm{1}}
\DeclareMathAlphabet{\mathsfit}{\encodingdefault}{\sfdefault}{m}{sl}
\SetMathAlphabet{\mathsfit}{bold}{\encodingdefault}{\sfdefault}{bx}{n}
\newcommand{\E}{\mathbb{E}}
\newcommand{\R}{\mathbb{R}}
\DeclareMathOperator*{\argmin}{arg\,min}
\useunder{\uline}{\ul}{}
\theoremstyle{plain}
\newtheorem{theorem}{Theorem}[section]
\theoremstyle{definition}
\newtheorem{definition}[theorem]{Definition}
\theoremstyle{remark}
\newtheorem{remark}[theorem]{Remark}
\title{LOBSTUR: A Local Bootstrap Framework for \\Tuning Unsupervised Representations \\in Graph Neural Networks}
\author{%
  So Won Jeong  \\
  Booth School of Business\\
  The University of Chicago\\
  Chicago, IL 60637 \\
  \texttt{sowonjeong@chicagobooth.edu} \\
  \And
  Claire Donnat \\
  Department of Statistics \\
  The University of Chicago\\
  Chicago, IL 60637 \\
  \texttt{cdonnat@uchicago.edu} \\
}
\begin{document}

\maketitle

\begin{abstract}

Graph Neural Networks (GNNs) are  increasingly used in conjunction with unsupervised learning techniques to learn powerful node representations, but their deployment is hindered by their high sensitivity to hyperparameter tuning and the absence of established methodologies for selecting the optimal models.
To address these challenges, we propose LOBSTUR-GNN ({\bf Lo}cal {\bf B}oot{\bf s}trap for {\bf T}uning {\bf U}nsupervised {\bf R}epresentations in GNNs) i), a novel framework designed to adapt bootstrapping techniques for unsupervised graph representation learning. LOBSTUR-GNN  tackles two main challenges: (a) adapting the bootstrap edge and feature resampling process to account for local graph dependencies in creating alternative versions of the same graph, and (b) establishing robust metrics for evaluating learned representations without ground-truth labels. Using locally bootstrapped resampling and leveraging Canonical Correlation Analysis (CCA) to assess embedding consistency, LOBSTUR  provides a principled approach for hyperparameter tuning in unsupervised GNNs. 
We validate the effectiveness and efficiency of our proposed method through extensive experiments on established academic datasets, showing an 65.9\% improvement in the classification accuracy compared to an uninformed selection of hyperparameters. Finally, we deploy our framework on a real-world application, thereby demonstrating its validity and practical utility in various settings. 
\footnote{The code is available at \href{https://github.com/sowonjeong/lobstur-graph-bootstrap}{github.com/sowonjeong/lobstur-graph-bootstrap}.}

\end{abstract}

\section{Introduction}
With the expanding availability of network and spatial data in the sciences, Graph Neural Networks (GNNs) have emerged as a compelling approach to identify interaction patterns within complex systems. 
Examples include spatial transcriptomics \citep{zhu2018identification, dong2022deciphering}, where graph-based neural networks are used to learn representations of cell neighborhoods that can be correlated with cancer outcomes, and microbiome studies \citep{lamurias2022graph, le2020deep} where they are used for genome assembly or to predict metabolite information from microbes.
In many of these applications, scientists are increasingly interested in using GNNs in conjunction with unsupervised learning techniques for learning informative representations, due to the paucity of available labeled data, or as a way of automatically detecting structure or patterns \citep{zhu2018identification, dong2022deciphering, lamurias2022graph, le2020deep}. Consequently, the last few years have seen the development of a number of unsupervised GNN methods for the sciences \citep{hu2021spagcn,partel2021spage2vec,zhang2020dango,li2021scgslc,ishiai2024graph}, usually adapting known methods for Euclidean data such as tSNE \citep{tsne}, k-Means clustering, or UMAP \citep{mcinnes2018umap} to accommodate graph data and GNNs. These methods typically involve few hyperparameters, but their scope of application is typically confined to a specific data type or use case.

Within the methods community, on the other hand, recent advances in unsupervised node representation learning  seem to have primarily been driven by contrastive learning  \citep{stokes2020deep, zhang2021canonical,you2021graphcontrastivelearningaugmentations}. This popular self-supervised learning framework has indeed demonstrated impressive performance for learning rich and versatile data representations across various domains. 
However, in the graph-setting, despite their impressive performance and promising results on academic benchmarks, these methods are not tuning-free, making them difficult to deploy in real-world applications. In fact, they rely heavily on selecting appropriate values for several of their hyperparameters, but
incorrect hyperparameter values can lead to severely distorted data representations. We illustrate this effect in Figure~\ref{fig:perfromance-varies-by-hyperparam} by showing how  different choices of hyperparameters can decrease the accuracy of a linear classifier (trained on the learned node embeddings of a toy benchmark) from 73\% to 30\%, thus indicating a severe loss in embedding quality. 

Despite the empirical importance of hyperparameter tuning, there is currently no valid hyperparameter selection procedure for unsupervised GNN representation learning. 
In the methods community, new unsupervised learning approaches are commonly tested on established benchmark datasets, with hyperparameters selected based on performance in a downstream node classification task. However, this procedure essentially converts the problem into a supervised learning setting, making it unsuitable for genuinely unsupervised, real-world use cases.

Hyperparameter tuning in unsupervised settings is made difficult by two main challenges: (a) the absence of a clear ground truth or statistical framework for unsupervised learning,
and (b) the lack of an established metric to evaluate the learned embeddings. To our knowledge, the only study that attempts to measure the quality of latent representations is that of \citeauthor{tsitsulin2023unsupervised}, which empirically evaluates various metrics. Yet, without a proper inference framework, pinpointing a suitable metric remains a significant challenge. 

\noindent\textbf{Contributions.} In this paper, we propose the first bootstrapped-based method for selecting hyperparameters for unsupervised GNN representation learning. Hyperparameters can be broadly categorized into two types: (1) those that require tuning within a specific model (e.g. parameters in the loss function) and (2) those that involve tuning across a family of models (e.g. different GNN architectures or sizes). In this paper, we aim to tackle both, as both are essential to self-supervised learning.  
More specifically, 
\begin{enumerate}[noitemsep, topsep=0em]
\item We cast the learning of representations as an estimation problem: we posit that the learned representations correspond to a learned low-dimensional manifold, which must therefore be consistent under a noise model, as explicited in Section~\ref{sec:prob-setup}.
    \item To generate independent copies of the same graph, we propose a bootstrap procedure based on nonparametric modeling of the graph as a graphon \cite{su2020network} (Section~\ref{section:cv-split}).
    \item To evaluate the quality of the embeddings learned on independent copies of the same graph in the absence of labels, we suggest using Canonical Correlation Analysis \citep{hotelling1936relations} as a translation- and rotation-invariant tool to quantify the stability of the learned embedding spaces (Section~\ref{sec:cv-eval}). 
\end{enumerate}

\begin{figure}
    \centering
    \includegraphics[width = \linewidth]{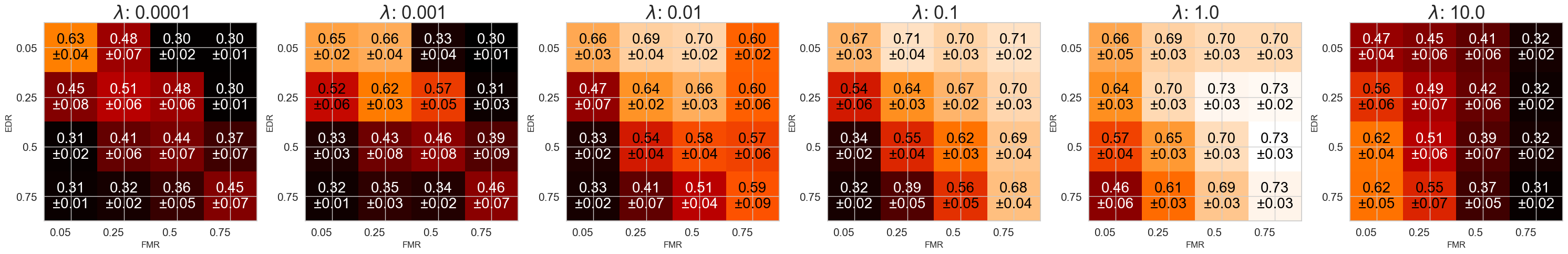}
    \caption{Cora. Evaluation of the CCA-SSG embeddings \cite{zhang2021canonical}, an unsupervised learning method, for each combination of the hyperparameters (loss parameter $\lambda$, edge drop rate (EDR), feature mask rate (FMR)). Each entry denotes the mean and standard deviation of the node classification accuracy of a linear classifier trained on the learned representations (averaged over 20 experiments).}
    \label{fig:perfromance-varies-by-hyperparam}
\end{figure}

\section{Problem Formalization}\label{sec:prob-setup}

Establishing a framework for hyperparameter tuning in unsupervised learning requires us to address two fundamental questions: what are we aiming to estimate, and where does the randomness come from?

In the graph setting, the data is presented in two modalities: a feature matrix, and an adjacency matrix. Unsupervised learning can be framed as learning what information is shared across modalities, and what information is specific to each one in a condensed format. This approach is typically described in the data-integration literature using a latent variable space model \cite{bishop1998latent,hoff2002latent}, which we adapt here for the graph domain.

 \paragraph{Inference setting.} We consider a graph $G$ on $n$ nodes with features $X\in \R^{n \times p},$ and denote by $A \in \{0,1\}^{n \times n}$ its corresponding (binary) adjacency matrix. We assume the graph is sampled from a graphon $W$  (see for instance \citeauthor{gao2015rate}) ---  a non-parametric random graph model--, and that node features are a noisy transformation of the latent variable  $U_i$: 
 
\begin{equation}\label{eq:graphex}
    \begin{split}
    \forall i \in [n], &\ U_i \sim \text{Unif}([0,1]), \\
    \forall j \in [n], &\ A_{ij} \sim \text{Bernoulli}(W(U_i, U_j)) , \\
    &\ X_i \sim g(U_i) + \epsilon_i,
    \end{split}
\end{equation}

where $W(U_i, U_j)$ denotes the graphon function evaluated at the latent positions $U_i$ and $U_j$, and $\epsilon_i$ denotes some independent, mean-zero noise.
This model allows us to reason on the randomness of the generation procedure without making assumptions on the specifics of the graph generation process.
While graphons are known to generate dense graphs,  their output can be sparsified by scaling $W$ by a sparsity factor that tends to 0 as $n\to \infty$, e.g. $\rho_n = \frac{\log(n)}{n}$ \citep{davison2023asymptotics, gaucher2021maximum}.

\paragraph{Learning Conditional or Marginal Representations.} 
In the unsupervised context, we differentiate between two main scenarios based on whether the goal is to learn representations conditioned on the realized $U_i$ or not. \\
\textit{(a) Learning \textbf{marginal data representations} that do not depend on the realized $U_i$s} : In some applications, embeddings are assumed to capture clusters or predict specific outcomes \citep{hu2021spagcn, wu2022space}. In these cases, the learned embeddings serve as a way to extract inherent structure from the data --- such as clusters, an underlying manifold, or more generally, functions of the graphon $W$ ---   that should be consistent across datasets of the same type. For example, in single-cell transcriptomics studies (e.g. \citet{wu2022space} and \citet{hu2021spagcn}), GNNs are employed to learn embeddings that capture patterns in cell-neighborhood interactions. Here, the assumption is that the clusters derived from one dataset should be reproducible on another dataset drawn from the same distribution. An oracle data generation procedure would therefore generate unseen data using the full model in (\ref{eq:graphex}), resampling latent variables $U_i$ to generate new nodes and edges.

\textit{(b) Learning \textbf{conditional data representations}, i.e. conditional on the realized $U_i$s}:  However, in other applications, these assumptions do not hold: there might not necessarily be an obvious notion of ``another similar graph''. In citation networks, for example, where nodes represent users, the focus shifts to learning properties specific to individual nodes. In this case, data generation procedures must accommodate another type of randomness, this time conditioned on $U_i$: randomness arises solely in sampling the edges and features, as per the last two lines of (\ref{eq:graphex}).

In both settings, however, the quality of the learned embeddings might be evaluated based on their reproducibility, or the alignment between the latent structure stemming from representations learned on one dataset to those learned on another. Devising a criterion leveraging this notion would require two main components: (a) a data generation procedure, to create independent draws of the same datasets (Section~\ref{section:cv-split}), and (b) a metric to measure the alignment between representations (Section~\ref{sec:cv-eval}).

\begin{figure*}
    \centering
    \includegraphics[width=\textwidth]{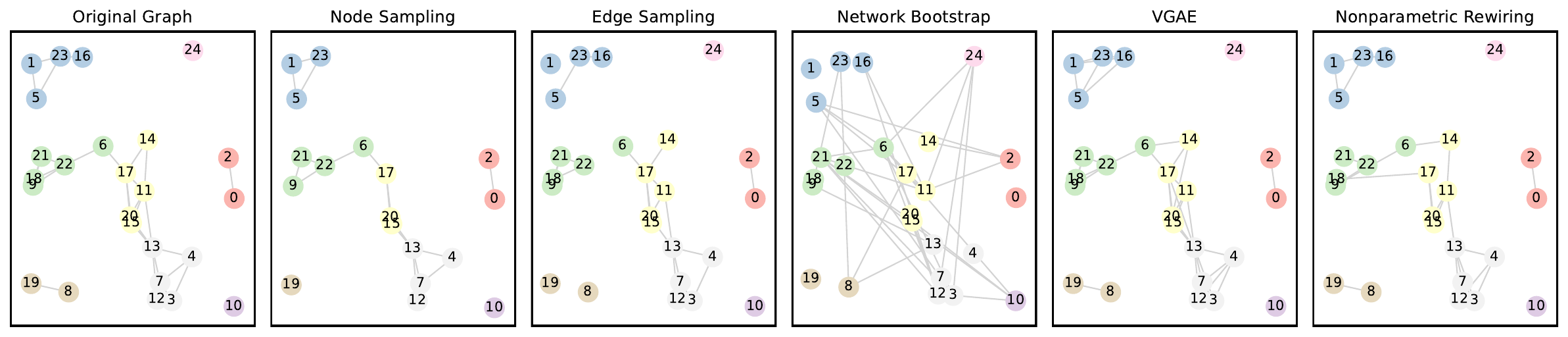}
    \caption{Illustration of different techniques for generating new copies of a simple graph (left-most image). The original graph has a distinctive community structure. Note that node sampling or edge sampling randomly removes either nodes or edges, disrupting the original graph structure.}    
    \label{fig:split-illustration}
\end{figure*}

\section{A Local Graph Bootstrapping Procedure}\label{section:cv-split}


In the GNN literature, data splitting and resampling are usually done in one of two ways: by resampling the nodes or by resampling the edges. 
However, in the unsupervised setting, these two sampling procedures are not necessarily suitable for two reasons. First, this type of sampling can considerably disrupt the structure of the graph (by thinning nodes or edges, respectively), as reflected in Figure~\ref{fig:split-illustration}, Table~\ref{tab:graphon-compare-all} and Figure~\ref{fig:graphon_comparison} in the Appendix --- an indication that the stochastic process underpinning the graph creation is not correctly simulated. Second, these procedures require the specification of the node (respectively edge) drop rate. However, without any theoretical underpinning, it is difficult to set these correctly, as the distance between the original graph and the thinned ones increases monotonously with the drop rate.

\paragraph{Relationship with existing literature.} Relatively few studies have been conducted on extending bootstrapping to network data \citep{li2020network, hoff2007modeling, Chen2018Network, levin2021bootstrapping}. Most of these works assume a specific distribution for the graph, also typically assuming latent variables for each node (e.g. a random product graph \citep{levin2021bootstrapping}). The edges are assumed to be drawn from a Bernoulli distribution with probability $f(U_i, U_j)$, where $f$ is a parametric function. For instance, in the random dot product graph \citep{levin2021bootstrapping}), edges are assumed to be sampled as $A_{ij} \sim \text{Bernoulli}( U_i^\top U_j)$. 
However, these works (a) rely on strong parametric assumptions about the graph’s nature, which, if unverified, can lead to resampled graphs that differ substantially from the original; (b) usually do not address the challenge of simultaneously resampling both edges and node features; and (c) typically generate new graphs conditioned on the imputed latent variables, but are not easily extendable to the marginal case, which would require resampling the node representations $U_i$'s themselves.

By contrast, we propose a nonparametric technique for resampling graphs based on the model detailed in (\ref{eq:graphex}), thus requiring minimal assumptions about the underlying graph structure. 

\subsection{The Oracle Case}
We begin by assuming that, for each latent variable $U_i$, we have oracle knowledge of its 
$k$-nearest neighbors. We denote the resulting directed $k$-nearest neighbor graph as $\mathcal{G}_{knn}$. Under sufficiently smooth functions \(W\) and \(g\) (as defined in the next paragraphs), for a given node $i$, its neighbors in \(\mathcal{G}_{knn}\) have similar distributions, and can thus be viewed as alternative realizations of the same underlying stochastic process (conditioned on $U$).

We leverage this observation to  propose a bootstrap procedure conditioned on the realized \(U_i\):
\begin{enumerate}[noitemsep, topsep=0em]
    \item {\it Feature resampling:} we resample the features of each node by drawing at random a feature vector from one of $k$-nearest neighbors. This ensures preserving the covariance between features by sampling full vectors.
    \item  {\it Edge rewiring:}  
 let $\mathcal{N}_{m}(i)$ denote the $m^{th}$ closest neighbor of node $i$ according to the oracle graph $\mathcal{G}_{knn}$. For each pair of node $(i,j)$, sample an edge with probability $ \hat{p}_{ij}   = \frac{1}{k} \sum_{m=1}^k A_{\mathcal{N}_m(i), j},$ effectively estimating the underlying probability $\mathbb{P}[A_{ij} = 1| U_i, U_j]$.
 An efficient procedure for resampling is presented in Algorithm~\ref{alg:graph}.
\end{enumerate}
 To extend this procedure to generate {\it marginally-resampled} graphs, we propose simply sampling with replacement nodes (which effectively implies resampling the $U_i$), and applying the same procedure as above. The whole procedure is described in more detail in Algorithm~\ref{alg:full}.  In either case (marginally or conditionally on $U$), this framework preserves local latent-space similarities while generating plausible bootstrap replicates of the graph. 

\begin{algorithm}[H]
    \caption{Non-parametric Resampling of Node Features}\label{alg:features}
    \begin{algorithmic}[1]
        \STATE \textbf{Input:} Graph $G$ with node features $\{X_i\}_{i=1}^n$, $\mathcal{G}_{knn}$ $k$-nearest neighbor graph on $U$.
        \FOR{each node $i \in [n]$}
            \STATE Identify the set of neighboring nodes $N(i) = \{j : j \sim i\}$ in the graph $\mathcal{G}_{knn}$,
            \STATE Construct the candidate set for resampling: $C_i = \{X_i\} \cup \{X_j\}_{j \in N(i)}$.
            \STATE Resample the feature vector for node $i$ by selecting a vector uniformly from $C_i$: 
            $X_i^{\text{new}} \sim \text{Unif}(C_i)$.
        \ENDFOR
        \STATE \textbf{Output:} Resampled node features $\{X_i^{\text{new}}\}_{i=1}^n$.
\end{algorithmic}
\end{algorithm}

 The following theorem characterizes the consistency of the procedure in deriving nodes with similar features.
\begin{theorem}
    Assume that $\mathcal{G}_{knn}$, the directed $k$- nearest neighbor graph induced by the latent variable $\{U_i\}_{i=1}^n$ is known, with $k$ such that $\lim_{n \to \infty} \frac{k}{n} = 0$. Suppose $g$ is an  $\alpha$-H{\"o}lder-continuous function on the interval $[0,1]$, so that there exists a constant $C$ such that:
     $|g(U_i)- g(U_j)| \leq C | U_i- U_j|^{\alpha}$ for $\alpha>0$.
     
     Let $\mathcal{X}$ denote the domain of the features (so that for each node $i$, its features are denoted $X_i \in \mathcal{X}$). Then, the procedure described in Algorithm~\ref{alg:features} is asymptotically consistent in that for any set $\mathcal{A} \in \mathcal{X}$:
    $$ \forall j \in \mathcal{N}_{knn}(i), \lim_{n \to \infty} \left | \mathbb{P}(X_{j} \in \mathcal{A}|U_j) -\mathbb{P}(X_{i} \in \mathcal{A}|U_i)\right|= 0,$$
where $\mathcal{N}_{knn}(i)$ denotes any of the $k$-nearest neighbors of $i.$
\end{theorem}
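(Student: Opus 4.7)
The plan is to decompose the error into a function-regularity piece controlled by Hölder continuity and a nearest-neighbor-distance piece controlled by basic order-statistics facts for the uniform distribution. Since $X_i = g(U_i) + \epsilon_i$ with the $\epsilon_i$ independent of the $U$'s and (implicitly) identically distributed with law $F_\epsilon$, the conditional law of $X_i$ given $U_i$ is the translate of $F_\epsilon$ by $g(U_i)$. For any measurable $\mathcal{A} \subseteq \mathcal{X}$,
\begin{equation*}
\bigl|\mathbb{P}(X_i \in \mathcal{A} \mid U_i) - \mathbb{P}(X_j \in \mathcal{A} \mid U_j)\bigr| = \bigl|F_\epsilon(\mathcal{A} - g(U_i)) - F_\epsilon(\mathcal{A} - g(U_j))\bigr|.
\end{equation*}
So it suffices to show (i) $|g(U_i) - g(U_j)| \to 0$ in probability for $j \in \mathcal{N}_{knn}(i)$, and (ii) that small translations of $F_\epsilon$ produce only small changes in $F_\epsilon(\mathcal{A} - \cdot)$.

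For (i), Hölder continuity gives $|g(U_i) - g(U_j)| \leq C|U_i - U_j|^\alpha$, so I only need to control the radius $R_{n,k}(i) = \max_{j \in \mathcal{N}_{knn}(i)} |U_i - U_j|$ of the oracle $k$-nearest neighbor ball around $U_i$. Since the $U_\ell$ are i.i.d. uniform on $[0,1]$, for any fixed $u \in [0,1]$ the number of other samples falling in the interval $(u-r, u+r)$ is Binomial$(n-1, \min(2r,1))$; a one-sided Chernoff bound shows that this count exceeds $k$ with probability tending to $1$ whenever $(n-1)r/k \to \infty$. Taking $r = c\, k/n$ for a constant $c$ large enough and conditioning on $U_i$ yields $R_{n,k}(i) = O_p(k/n)$, which vanishes in probability under the hypothesis $k/n \to 0$. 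Chaining with Hölder continuity gives $|g(U_i) - g(U_j)| = O_p((k/n)^\alpha) \to 0$.

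For (ii), I would make explicit a mild regularity assumption implicit in the statement, namely that $\epsilon$ admits a bounded density $f_\epsilon$, so that a change of variables makes $t \mapsto F_\epsilon(\mathcal{A} - t)$ uniformly $\|f_\epsilon\|_\infty$-Lipschitz. Combining this with (i) closes the proof, yielding convergence in probability (and, up to dominated convergence and the uniform boundedness of the quantity by $1$, also in $L^1$).

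The principal obstacle is precisely the universal quantification ``for any set $\mathcal{A}$.'' Without any regularity on the noise one can construct pathological measurable $\mathcal{A}$ for which arbitrarily small translations of $F_\epsilon$ produce non-vanishing jumps (e.g.\ if $\epsilon$ has atoms), and the stated conclusion simply fails. I would resolve this either by adding the bounded-density hypothesis above, or, alternatively, by restricting the statement to continuity sets of the law of $X$ and reinterpreting the conclusion as weak convergence of the conditional laws, which is the customary phrasing in the kernel-smoothing and local-averaging literature. The remaining ingredients — the $k$-NN radius bound and the Hölder composition — are classical and present no real difficulty.
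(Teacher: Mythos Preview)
Your approach matches the paper's almost step for step: both reduce the problem to bounding $|g(U_i)-g(U_j)|$ via H\"older continuity, bound the $k$-NN radius by $O_p(k/n)$ using the uniform-sample order-statistics argument, and pass to a distributional conclusion. The only real difference is in the closing step: the paper simply invokes Slutsky's lemma to conclude $X_j\mid U_j \overset{d}{\to} X_i\mid U_i$, whereas you write out the translation explicitly as $|F_\epsilon(\mathcal{A}-g(U_i))-F_\epsilon(\mathcal{A}-g(U_j))|$ and then argue via a bounded-density (Lipschitz-in-translation) hypothesis.

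Your observation about the universal quantifier ``for any set $\mathcal{A}$'' is well taken, and in fact you are being more careful than the paper here: Slutsky only delivers weak convergence, i.e.\ the claimed convergence on continuity sets of the limit law, not on arbitrary measurable $\mathcal{A}$. The paper's proof implicitly adopts what you list as your second resolution (reinterpreting the conclusion as weak convergence) without flagging the gap; your first resolution (bounded density on $\epsilon$) is the cleaner fix if one wants the statement to hold as literally written. Either way, the substantive content---H\"older bound plus $k$-NN radius shrinkage---is identical in both arguments.
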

\begin{proof}
Under (\ref{eq:graphex}),
$\forall i, \quad X_i = g(U_i) + \epsilon_i,$
where $\epsilon_i$ is independent, identically distributed centered noise, and $g(U_i)$ is the expectation of $X$ given the latent $U_i$. Since the $\epsilon$ are assumed to be i.i.d,  we can write for any nodes \( i \) and \( j \):
$$X_i \overset{d}{=} g(U_i) + \epsilon_j = g(U_i) + X_j - g(U_j).$$
The quantity $g(U_i)- g(U_j)$ represents the bias in using the expectation $X_j$ to approximate the distribution of $X_i$, and since $g$ is assumed to be H{\"o}lder-continuous:  $\|g(U_i)- g(U_j)\| \leq c | U_i- U_j|^{\alpha}$.

Consider now  $j$ to be chosen to be one of the $k$-nearest neighbors of node $i$. $| U_i- U_j|^{\alpha}$ is a monotonously decreasing function of $n$, and
with high probability (over the distribution of $U_1, \cdots , U_n)$, 
we have $|U_i - U_j|  \leq {c_0 }\frac{k}{n}$, for all $j \in \mathcal{N}(U_i)$, and a constant $c_0$.  Therefore, $ \|g(U_i)- g(U_j)\|$ tends to 0 (in probability) as $n$ goes to $\infty$. Therefore, by Slutsky's lemma, as $n$ goes to $\infty$, $\{ X_{j} |U_j\}_{j \in \mathcal{N}_{knn}(i)}\overset{d}{\to} X_i | U_i. $
\end{proof}

\begin{algorithm}
    \caption{Non-parametric Resampling of Edges}\label{alg:graph}
\begin{algorithmic}[1]
    \STATE \textbf{Input:} Graph $G = (\mathcal{V}, \mathcal{E})$ with $n = |\mathcal{V}|$ nodes; flattened list of edge stems
    $$L = \{ u \mid u \in E[:,0] \} \cup \{ v \mid v \in E[:,1] \},$$
    where $E \in \mathbb{R}^{|\mathcal{E}| \times 2}$, and $k$-nearest neighbor graph $\mathcal{G}_{\text{knn}}$ on $U$.
    \STATE Initialize an empty graph $G'$ with $n$ nodes.
    \WHILE{$\texttt{len}(L) > 0$}
        \STATE Sample a source node $u$ uniformly at random from $L$ and remove it: $u \leftarrow \text{pop}(L).$
        \STATE Sample a target node $v$ from
        $$v \sim L \cap \left( \bigcup_{m=1}^k \mathcal{N}_A\left( \mathcal{N}^{\text{knn}}_m(u) \right) \right),$$
        where $\mathcal{N}_A(i)$ denotes the set of neighbors of node $i$ in $G$, and $\mathcal{N}^{\text{knn}}_m(u)$ denotes the $m$-th nearest neighbor of node $u$ in $\mathcal{G}_{\text{knn}}$.
        \STATE Remove the selected node $v$ from $L$.
        \STATE Add an undirected edge between $u$ and $v$ in $G'$.
    \ENDWHILE
    \STATE \textbf{Output:} Resampled edge structure $\{ A_{ij}^{\text{new}} \}_{i,j=1}^n$.
\end{algorithmic}
\end{algorithm}

The following theorem highlights the consistency of the edge rewiring procedure.

\begin{theorem}\label{theorem:edge}
    Suppose that the $k_n$-nearest neighbor graph $\mathcal{G}_{knn}$ induced by the latent variables $\{U_i\}_{i=1}^n$ is known, where $k_n$ is such that $ \lim_{n\to \infty}\frac{k_n}{n} = 0$, and $\lim_{n \to \infty} k_n =0$. Suppose that $W$ is an $\alpha$-H{\"o}lder graphon function \citep{gao2015rate} (see definition~\ref{def:graphon_norm} in the Appendix) with $\alpha \in (0, 1]$.
    
    Then, the quantity $\hat{p}_{ij}  = \frac{1}{k_n} \sum_{m=1}^{k_n} A_{\mathcal{N}_m(i), j}$ is a consistent estimator of $p_{ij}$ in the sense that: 
    $$ \lim_{n\to\infty}  \hat{p}_{ij} = \mathbb{P}[A_{ij} | U_i, U_j].$$
\end{theorem}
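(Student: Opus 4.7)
The plan is to decompose the estimation error $\hat p_{ij} - W(U_i, U_j)$ into a bias term, stemming from substituting neighbors of $i$ for $i$ itself inside $W$, and a variance term, stemming from the Bernoulli resampling of edges, then bound each separately. Note that by model (\ref{eq:graphex}), $\mathbb{P}[A_{ij} \mid U_i, U_j] = W(U_i, U_j)$, so the target equals $W(U_i, U_j)$. The regime must be read as $k_n \to \infty$ together with $k_n/n \to 0$ (the ``$\lim k_n = 0$'' appears to be a typo, since otherwise the variance term cannot vanish). I would introduce the intermediate quantity
$$\bar W_{ij} := \frac{1}{k_n} \sum_{m=1}^{k_n} W(U_{\mathcal N_m(i)}, U_j),$$
and write
$$\hat p_{ij} - W(U_i, U_j) = \bigl(\hat p_{ij} - \bar W_{ij}\bigr) + \bigl(\bar W_{ij} - W(U_i, U_j)\bigr).$$

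For the variance term, I condition on the full vector $\mathcal U := (U_1,\dots,U_n)$. Given $\mathcal U$, both the $k$-nearest-neighbor set and the means of the Bernoulli draws are deterministic, and the variables $\{A_{\mathcal N_m(i), j}\}_{m=1}^{k_n}$ are mutually independent (after at most one exclusion if $j$ itself appears among the neighbors, which contributes an $O(1/k_n)$ correction). Hoeffding's inequality conditional on $\mathcal U$ then gives, for any $\epsilon>0$,
$$\mathbb P\bigl(|\hat p_{ij} - \bar W_{ij}| > \epsilon \,\big|\, \mathcal U\bigr) \leq 2 \exp(-2 k_n \epsilon^2),$$
which tends to $0$ as $k_n \to \infty$; averaging over $\mathcal U$ yields $\hat p_{ij} - \bar W_{ij} \to 0$ in probability.

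For the bias term, I invoke the $\alpha$-Hölder property of $W$ (Definition~\ref{def:graphon_norm}), restricted to the slice where only the first coordinate varies, to obtain
$$|W(U_{\mathcal N_m(i)}, U_j) - W(U_i, U_j)| \leq C\,|U_{\mathcal N_m(i)} - U_i|^{\alpha}.$$
Since $U_1,\dots,U_n$ are i.i.d.\ $\mathrm{Unif}([0,1])$, a standard order-statistics argument gives $\max_{m \leq k_n} |U_{\mathcal N_m(i)} - U_i| = O_P(k_n/n)$, uniformly on any interior interval of $[0,1]$, so
$$|\bar W_{ij} - W(U_i, U_j)| \leq C\left(\tfrac{k_n}{n}\right)^{\alpha}(1 + o_P(1)) \;\longrightarrow\; 0.$$
Boundary effects (when $U_i$ is within $k_n/n$ of $0$ or $1$) occur on a vanishing probability event and can be folded into the constant.

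Combining the two estimates through Slutsky's lemma delivers $\hat p_{ij} \to W(U_i, U_j) = \mathbb P[A_{ij}\mid U_i, U_j]$ in probability, which is the claimed consistency. The main obstacle is the entanglement between the random neighbor set $\mathcal N_m(i)$, which depends on $\mathcal U$, and the Bernoulli draws $A_{\cdot,j}$; this is resolved cleanly by conditioning on $\mathcal U$ before applying Hoeffding, so that the nearest-neighbor indices become deterministic and the edges become genuinely independent Bernoullis. A secondary technical point is reducing the two-argument Hölder graphon norm to the one-argument bound used above, which is immediate by fixing the second coordinate to $U_j$.
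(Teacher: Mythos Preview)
Your proposal is correct and follows essentially the same route as the paper: both decompose the error into a bias piece (controlled via the $\alpha$-H\"older property of $W$ together with the $O_P(k_n/n)$ spacing of nearest neighbors in the uniform sample) and a fluctuation piece from averaging $k_n$ Bernoulli draws. The only cosmetic difference is that the paper bounds the fluctuation through the MSE, showing $\mathrm{Var}(\hat p_{ij}\mid\mathcal U)\le 1/k_n$, whereas you invoke Hoeffding conditional on $\mathcal U$; both yield the same conclusion, and your observation that the hypothesis should read $k_n\to\infty$ (not $k_n\to 0$) is correct.
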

\begin{proof}
    The proof follows a similar argument to the previous theorem and is deferred to Appendix~\ref{app:proof}.
\end{proof}


\begin{remark} We note that the noise $\epsilon_i$ on the features does not need to be globally identically distributed for the previous construction to hold. Instead, since the procedure only relies on the $k$-nearest neighborhood of each node, it suffices to assume that these properties hold locally.
\end{remark}

\subsection{The Noisy Setting}\label{sec:split-noisy}

The resampling procedure highlighted in the previous paragraph requires oracle knowledge of the $k$-NN graph on the latent $U$. In practice, the graph $\mathcal{G}_{knn}$ has to be estimated from the data. To this end, we suggest two strategies:

\textit{Solution 1: Constructing 2 kNN graphs based on features and topology, respectively.} The theorems established earlier ensure the consistency of our resampling procedure, provided that the kNN graph is constructed independently of the data. Therefore, one approach is to build a kNN graph based on a notion of graph distance (e.g., shortest-path, shared -neighbors, or Jaccard similarity, depending on what metric is adapted for the graph), which can then be used for resampling the node features, while using a kNN graph based on feature similarity for resampling the edges. This approach is expected to perform well as long as the graph distance chosen is reflective of the underlying distance (e.g. shortest-path distance for homophilic graphs, or shared neighbors for more general classes of graphs), and the features are sufficiently informative.\\

{\it Solution 2: Using solely the kNN Graph from the adjacency matrix.} In practice, while solution 1 may work well for feature-based resampling, the kNN induced by the features (and used to resample edges) might not be as reliable as the one induced by the edges (see Table~\ref{tab:cora-exp-noisty-setting} in the Appendix). This is because, in high-dimensional feature spaces, kNN suffers from the curse of dimensionality, making it difficult to ensure the consistency of the kNN graph. As an alternative, one can define the kNN graph solely based on the graph structure for all components of the algorithm. While this approach does not guarantee theoretical consistency in estimating the relevant quantities, it exhibits promising empirical performance, as shown in the experiments.

\subsection{Validation of Bootstrap Samples}

To evaluate the quality of bootstrapped samples, we propose bootstrapping different graphs (synthetic and real), and to compare key graph statistics---including node and edge counts, average degree, and degree distribution---, against those of the original graph.

Table~\ref{tab:graph-stat-simple} summarizes these comparisons for a synthetic graphon function and three well-established graph benchmarks. Additional results for more datasets and graphon settings, including the effect of the choice of $k$, are provided in Appendix~\ref{app:bootstrap-validation}. While not exhaustive, these comparisons help assess whether the structural properties of the original graph are preserved in the bootstrapped samples. In particular, we note that our approach typically produces graphs with a closer average degree and edge count than other methods (see for instance Table~\ref{tab:graphon-compare-all} and Figure~\ref{fig:graphon_comparison} in the Appendix). When the underlying graph is a graphon, our model is in fact very good at reproducing graphs with the similar statistics (see Table~\ref{tab:synthetic-graph-stat-s1}, \ref{tab:synthetic-graph-stat-s2}, \ref{tab:synthetic-graph-stat-s3}, \ref{tab:synthetic-graph-stat-s4}). On real datasets, our method seems to produce reasonable copies of the same graph as well, as reflected by similar average degrees and number of connected components (Table~\ref{tab:graph-stat-real-data} and \ref{tab:graph-stat-real-data-compare-all}). However, the graphon assumption upon which our method relies seems to hit a limit in the ability of the method to reproduce a graph with as many triangles (see results Cora in Table~\ref{tab:graph-stat-simple}).

\begin{table*}[t]
\centering
\resizebox{\textwidth}{!}{%
\begin{tabular}{@{}lcccccccc@{}}
\toprule
\textbf{Statistic} &
  \multicolumn{2}{c}{\textbf{Graphon (n = 500)}} &
  \multicolumn{2}{c}{\textbf{Cora}} &
  \multicolumn{2}{c}{\textbf{Citeseer}} &
  \multicolumn{2}{c}{\textbf{Twitch}} \\
\cmidrule(lr){2-3} \cmidrule(lr){4-5} \cmidrule(lr){6-7} \cmidrule(lr){8-9}
 &
  \textbf{True} &
  \textbf{Ours} &
  \textbf{True} &
  \textbf{Ours} &
  \textbf{True} &
  \textbf{Ours} &
  \textbf{True} &
  \textbf{Ours} \\ \midrule
$|\mathcal{V}|$         & 500   & 500$\pm$0        & 2708  & 2708$\pm$0        & 3327  & 3327$\pm$0         & 1912   & 1912$\pm$0        \\
$|\mathcal{E}|$         & 769   & 757.9$\pm$2.5    & 5278  & 5171.78$\pm$7.34  & 4552  & 4127.78$\pm$10.93  & 31299  & 31082.05$\pm$22.83 \\
Avg. Degree             & 3.08  & 3.03$\pm$0.01    & 3.90  & 3.82$\pm$0.01     & 2.74  & 2.48$\pm$0.01      & 32.74  & 32.51$\pm$0.02     \\
Density                 & 0.01  & 0.01$\pm$0       & 0.00  & 0.00$\pm$0        & 0.00  & 0.00$\pm$0         & 0.02   & 0.02$\pm$0         \\
Clustering Coefficient  & 0.01  & 0.01$\pm$0       & 0.24  & 0.05$\pm$0        & 0.14  & 0.03$\pm$0         & 0.32   & 0.17$\pm$0         \\
Connected Components    & 29.00 & 31$\pm$2         & 78    & 67.91$\pm$6.70    & 438   & 635.09$\pm$11.87   & 1.00   & 1.14$\pm$0.39      \\
Giant Component Size    & 471.00& 467$\pm$3        & 2485  & 2620.80$\pm$10.80 & 2120  & 2418.12$\pm$35.69  & 1912   & 1911.72$\pm$0.77   \\
Assortativity           & -0.04 & -0.08$\pm$0.03   & -0.07 & -0.07$\pm$0       & 0.05  & -0.08$\pm$0        & -0.23  & -0.29$\pm$0        \\
PageRank Sum            & 249.5 & 249.5$\pm$0      & 1353.50 & 1353$\pm$0      & 1663  & 1663$\pm$0         & 955.50 & 955.5$\pm$0        \\
Transitivity            & 0.01  & 0.01$\pm$0       & 0.09  & 0.03$\pm$0        & 0.13  & 0.04$\pm$0         & 0.13   & 0.08$\pm$0         \\
Number of Triangles     & 7     & 5$\pm$2.3        & 1630  & 471$\pm$27        & 1167  & 304.6$\pm$19.59    & 173510 & 105534.51$\pm$1904.54 \\ \bottomrule
\end{tabular}%
}
\caption{Graph statistics for synthetic graphon data, citation networks (Cora, Citeseer), and a social network (Twitch) \citep{huang2023conformalized_gnn}. We generated 500 bootstrap samples and report the mean and standard deviation. The size of the neighborhood ($k$) used for sample generation is fixed at 20. Results for additional datasets and different graphon settings are included in Appendix~\ref{app:bootstrap-validation}.}
\label{tab:graph-stat-simple}
\end{table*}

\section{Evaluation Metrics} \label{sec:cv-eval}
If the generation of independent copies of the same graph poses a significant challenge, determining an appropriate evaluation metric in the absence of known labels poses another. 
 While contrastive learning is based on the idea of turning unsupervised learning into a supervised problem by learning to recognize positive pairs, we note that we cannot use this objective as our hyperparameter tuning criterion. First, the loss function is designed to optimize the model’s internal objective, which may not necessarily reflect meaningful patterns or structures in the data. For example, minimizing the loss in contrastive learning could lead to trivial solutions \citep{hua2021feature, tsitsulin2023unsupervised} that satisfy the objective but fail to capture important relationships in the graph, or the model may overfit to spurious correlations in the data, such as background features in images or noise in graphs \citep{chen2020SimCLR}. Second, the scale of the loss function can vary with different hyperparameters (particularly for those who directly impact the loss function, as for the composite loss used in \citet{zhang2021canonical}), complicating direct comparisons even within the same model architecture. To ensure robust evaluation, it is essential to employ a separate, universal metric that directly evaluates the learned embeddings to assess model performance.

However, due to the nonconvexity of the method, we do not expect the learned embeddings to be close, even when fitted by the same algorithm on the same dataset. Scale and location can in fact vary greatly from one run to the next.
To remedy these issues, we propose here using a procedure based on Canonical Correlation Analysis  (CCA) \citep{hotelling1936relations}. Canonical correlation analysis is a classical method for finding the correspondence between two datasets on the same samples by finding linear transformations of $X$ and $Y$ that maximizes their correlation. The CCA objective can be written as a prediction problem:
\begin{equation}\label{eq-cca-dist}
\begin{aligned}
    \hat{U}, \hat{V} \in 
    &\argmin_{U \in \mathbb{R}^{p_1 \times r},\, V \in \mathbb{R}^{p_2 \times r}} 
    \| XU - YV \|_F^2 \\
    &\text{subject to} \quad 
    U^T \Sigma_X U = I_r, \quad 
    V^T \Sigma_Y V = I_r.
\end{aligned}
\end{equation}
where ${\Sigma}_X$ and ${\Sigma}_Y$ denote the covariance matrices of $X$ and $Y$ respectively.
\begin{remark}
We emphasize that the normalization $U^T{\Sigma}_XU= V^T \Sigma_YV = I_r$ is here indispensable to ensure that the learned mappings between representations remain independent of the varying scales introduced by different GNN representations.
\end{remark}
\begin{remark}
We argue that the linear nature of mappings learned by CCA is not overly restrictive. While deep variants of CCA could be employed,  self-supervised embeddings are often used with simple linear models (e.g. the outputs are processed with a simple linear classifier to produce labels). Thus, restricting our objective to consider linear mappings seems reasonable.
\end{remark}

\subsection{CCA for Aligning Representations}

As we seek to evaluate unsupervised representations, in this section, we assume that we have generated $3 n_b$ independent versions of the dataset with the procedure described in Section~\ref{section:cv-split}. For each $i \in [2n_b]$, we learn an unsupervised representation of the nodes: $H_i = GNN_i(G_i, \theta)$, where $\theta$ indicates the tunable hyperparameters. We propose evaluating the quality of the learned representation by comparing the alignment of the embeddings learned by different models on replicas of the same dataset as per (\ref{eq-cca-dist}).


The solution to the CCA problem (\ref{eq-cca-dist}) has a closed-form expression. Let $U_0, V_0$ be the left and right singular vectors of the cross-covariance matrix:  
$$
\text{corr}(H_i, H_j) = \hat\Sigma_{H_i}^{-1/2} \hat{\Sigma}_{H_iH_j} \hat\Sigma_{H_j}^{-1/2} = U_0 \Lambda_0 V_0^\top,
$$
where $\hat{\Sigma}_{H_i}$ is the empirical covariance of embeddings from dataset $i$, and $\hat{\Sigma}_{H_iH_j}$ is the empirical cross-covariance of embeddings from datasets $i$ and $j $. The solutions to (\ref{eq-cca-dist}) are  
\begin{equation}
\hat{U}(i,j) = \hat \Sigma_{H_i}^{-1/2}U_0, \quad \hat{V}(i,j) = \hat \Sigma_{H_j}^{-1/2}V_0.
\end{equation}
and we can compute the alignment between versions of the dataset as:
\begin{align*}
\text{alignment} = \|H_i \hat{U}(i, j) -H_{j} \hat{V}(i,j) \|_F,
\end{align*}
where the alignment is evaluated and aggregated over the bootstrapped samples $i,j \in [n_b]$, $i\neq j$.

\subsection{Validation of the Evaluation Metric}\label{sec:validation-metric}

\begin{table*}
\centering
\begin{tabular}{@{}cccccccccc@{}}
\toprule
          & \multicolumn{3}{c}{\textbf{Spleen}}                 & \multicolumn{3}{c}{\textbf{TNBC}}                 & \multicolumn{3}{c}{\textbf{CRC}}                  \\ \midrule
$\lambda$ & ACC             & Mean            & SD     & AUC             & Mean           & SD    & AUC             & Mean           & SD    \\ \midrule
0.000001  & 0.4114          & 56,955          & 23,032 & 0.7566          & 4,765          & 3,617 & 0.8039          & 26,385         & 3,812 \\
0.00001   & {\ul 0.4135}          & 58,398          & 30,425 & 0.7487          & 5,217          & 4,284 & 0.8039          & 26,699         & 3,746 \\
0.0001    & \textbf{0.4146} & 40,017          & 12,422 & 0.7249          & 4,734          & 3,348 & 0.8170          & 25,972         & 5,934 \\
0.001     & 0.4128    & \textbf{21,741} & 5,732  & 0.7513          & 3,781          & 1,782 & 0.7974          & 8,844          & 1,893 \\
0.01      & 0.3691          & 42,336          & 1,970  & 0.7328          & 3,425          & 1,566 & 0.8431          & 6,425          & 1,319 \\
0.1       & 0.3986          & 50,351          & 2,550  & \textbf{0.8757} & \textbf{3,149} & 1,111 & \textbf{0.9412} & 5,940          & 1,538 \\
1         & 0.3914          & 55,264          & 2,788  & 0.8307          & 3,516          & 1,309 & {\ul 0.9346}    & \textbf{5,543} & 889   \\
10        & 0.3184          & 61,804          & 2,339  & {\ul 0.8704}    & 3,689          & 1,443 & 0.8627          & 5,951          & 1,301 \\ \bottomrule
\end{tabular}
\caption{For each dataset, the first column reports the downstream task performance, while the second and third columns present the mean and standard deviation of the evaluation metric defined in Equation~\ref{eq-cca-dist}. We adopt the architecture from \citet{zhang2021canonical} and fix all hyperparameters except for $\lambda$ in the CCA-SSG loss (Equation~\ref{eq:cca-ssg-loss}). Using Algorithm~\ref{alg:full}, the minimum average distances are achieved at $\lambda_\text{MS} = 0.001$ for the mouse spleen dataset \citep{goltsev2018deep}, $\lambda_{\text{TNBC}} = 0.1$ for Triple Negative Breast Cancer (TNBC) \citep{keren2018structured}, and $\lambda_\text{CRC} = 1.0$ for colorectal cancer (CRC) \citep{schurch2020coordinated}. Notably, strong downstream performance coincides with improved embedding alignment, as indicated by lower average distances reported in the second column for each dataset. }
\label{tab:bio-eval-metric}
\end{table*}

We evaluate the validity of our metric (\ref{eq-cca-dist}) on three biological datasets: the spleen dataset \citep{goltsev2018deep}, the MIBI-TOF breast cancer dataset \citep{keren2018structured}, and the colorectal cancer (CRC) dataset \citep{schurch2020coordinated}. Each dataset comprises multiple graphs, allowing us to assess the validity of our proposed metric (\ref{eq-cca-dist}) independently of the graph bootstrapping procedure. For instance, the spleen dataset includes samples from 3 mice, while the CRC dataset contains data from 31 patients. Table~\ref{tab:bio-eval-metric} presents both the evaluation of our metric along with the downstream task performance. In addition, visualizations provided in Appendix Figure~\ref{fig:spleen} further support the utility of our metric in guiding the hyperparameter selection (e.g., $\lambda$), effectively recovering biologically meaningful cell microenvironments. Detailed descriptions of the datasets and downstream tasks are provided in Appendix~\ref{sec:bio-apps}.

\subsection{Proposed Hyperparameter Tuning Framework}\label{sec:prop-cv}

We now describe the full procedure, which we call LOBSTUR ({\bf Lo}cal {\bf B}oot{\bf s}trap for {\bf T}uning {\bf U}nsupervised {\bf R}epresentations in GNNs).
The previous two sections have described the two core components of our procedure. We now detail an additional step to safeguard our pipeline against degeneracies.

\paragraph{Adjustment for Dimensional Collapse}
Our proposed alignment metric is grounded in a straightforward statistical method, Canonical Correlation Analysis (CCA). The strength of this method lies in its assessment of {\it correlations} between representations. However, because it accounts for different variances, this method may struggle to accurately reflect the quality of embeddings in the presence of dimensional collapse \citep{hua2021feature}. Dimensional collapse, a phenomenon common in self-supervised representation learning, occurs when the learned representations are confined to a low-dimensional manifold. For example, when training a model with an embedding dimension of $p=2$, dimensional collapse may result in embeddings that lie along a single line (reduced to a one-dimensional representation) or form a blob. In such cases, although the embeddings lack informative structure, their alignment across different samples may still be high, leading to an over-inflated metric.

The \textit{StableRank} metric \citep{tsitsulin2023unsupervised} is defined as $\sum_i \sigma_i^2/{\sigma_1^2}$, where $\sigma_i$ are the singular values of the embeddings $H \in \mathbb{R}^{n \times p}$ in descending order, and assesses the numerical rank of the embedding space. We will use this metric to filter out embeddings that are clearly suboptimal \citep{jing2022understandingdimensionalcollapsecontrastive} before applying our CCA-based metric to tune hyperparameters. An alternative choice for the threshold metric could be \textit{RankMe} proposed by \citet{garrido2023rankmeassessingdownstreamperformance}.

Our full procedure is highlighted in Algorithm~\ref{alg:full}. 

\begin{algorithm}
\caption{Hyperparameter Tuning Procedure}\label{alg:full}
\begin{algorithmic}[1]
    \STATE \textbf{Input:} An input graph $G$ and a set of hyperparameters $\Theta$ from which to choose an optimal value.
    \STATE Create $3n_b$ bootstrap samples of the graph, denoted as $\{\hat{G}_i\}_{i=1}^{3n_b}$ (Algorithm~\ref{alg:features} , \ref{alg:graph}).
    \FOR{each value $\theta \in \Theta$}
    \FOR{$i =1, \ldots, 2n_b$}
        \STATE Train an unsupervised GNN, $f_i(\cdot, \theta)$ on $\hat{G}_i$.
    \ENDFOR
    \FOR{each pair of models $f_i(\cdot, \theta)$ and $f_{i +n_b}(\cdot, \theta)$ with $i\in \{1, \cdots, n_b\}$}
        \STATE Compute the distance between embeddings from models $f_i$ and $f_{i +n_b}$ on the test graph $\hat{G}_{i +2n_b}$:
        $$d_i(\mathbf{\theta}) = \ell \big( f_i(\hat{G}_{i +2n_b}, \theta),  f_{i+n_b}(\hat{G}_{i +2n_b}, \theta) \big), $$ where $\ell(\cdot)$ is some metric, like the one we proposed in Section~\ref{sec:cv-eval}. 
    \ENDFOR
      \ENDFOR
    \STATE Choose the optimal hyperparameters:
    $
    \hat{\mathbf{\theta}} = \underset{\mathbf{\theta} \in \Theta, \textit{StableRank} \geq t}{\text{argmin }} \bar{d}(\mathbf{\theta}),\quad$ where $\bar{d}(\mathbf{\theta})$ is the average distance across $i \in [n_b]$, and $t$ is the \textit{StableRank} threshold.
\end{algorithmic}
\end{algorithm}

\begin{table*}[ht]
\centering
\resizebox{\textwidth}{!}{%
\begin{tabular}{@{}lccccccccc@{}}
\toprule
\textbf{Dataset} &
  \textbf{Default} &
  \textbf{Ours} &
  \textbf{$\alpha$-ReQ} &
  \textbf{pseudo-$\kappa$} &
  \textbf{RankME} &
  \textbf{NESum} &
  \textbf{SelfCluster} &
  \textbf{Stable Rank} &
  \textbf{Coherence} \\ 
\midrule
\multicolumn{10}{c}{\textit{Classification tasks}} \\ \midrule
Cora      & 0.36 & 0.65 & {\ul 0.66} & 0.54 & 0.63 & 0.63 & \textbf{0.69} & 0.59 & 0.47 \\
PubMed    & 0.62 & {\ul 0.81} & 0.75 & 0.75 & 0.75 & 0.75 & \textbf{0.82} & 0.75 & 0.76 \\
Citeseer  & 0.32 & \textbf{0.51} & \textbf{0.51} & \textbf{0.51} & \textbf{0.51} & \textbf{0.51} & {\ul 0.48} & \textbf{0.51} & 0.22 \\
CS        & 0.47 & {\ul 0.79} & \textbf{0.86} & 0.72 & \textbf{0.86} & \textbf{0.86} & \textbf{0.86} & \textbf{0.86} & 0.76 \\
Photo     & 0.29 & 0.73 & {\ul 0.79} & {\ul 0.79} & {\ul 0.79} & {\ul 0.79} & 0.57 & \textbf{0.81} & 0.69 \\
Computers & 0.37 & {\ul 0.57} & 0.45 & {\ul 0.57} & 0.45 & 0.39 & 0.39 & {\ul 0.57} & \textbf{0.65} \\
\midrule
\multicolumn{10}{c}{\textit{Regression tasks}} \\ \midrule
Chicago   & {\ul 0.39} & 0.34 & 0.35 & 0.35 & 0.35 & 0.35 & 0.35 & 0.29 & \textbf{0.40} \\
Anaheim   & 0.13 & \textbf{0.23} & 0.12 & {\ul 0.18} & {\ul 0.18} & 0.12 & \textbf{0.23} & {\ul 0.18} & 0.12 \\
Twitch    & 0.47 & \textbf{0.52} & 0.15 & 0.15 & 0.15 & 0.15 & 0.46 & 0.15 & {\ul 0.48} \\
Education & 0.23 & {\ul 0.26} & \textbf{0.33} & \textbf{0.33} & \textbf{0.33} & \textbf{0.33} & \textbf{0.33} & \textbf{0.33} & {\ul 0.26} \\
\midrule
Avg clf   & 0.41 & \textbf{0.68} & 0.67 & 0.65 & {\ul 0.66} & 0.65 & 0.63 & \textbf{0.68} & 0.59 \\
Avg reg   & 0.30 & \textbf{0.34} & 0.24 & 0.25 & 0.25 & 0.24 & \textbf{0.34} & 0.24 & {\ul 0.32} \\
\bottomrule
\end{tabular}%
}
\caption{Downstream task (classification or regression) performance of the best model and hyperparameters chosen by each criterion.  The best value is bolded and the second best is underlined.  We compare to the BGRL \citep{bgrl} with default hyperparameters ($\text{fmr}=0.5, \text{edr}=0.25, \lambda=10^{-2}$) in the left‑most column.}
\label{tab:exp-summary-downstream-performance}
\end{table*}

\section{Experiments} \label{section:exp}

\noindent{\bf Benchmark Datasets.}
We demonstrate the validity of our entire framework on GNN benchmark datasets such as Cora, Citeseer, and Pubmed. 
We show that hyperparameter and model selection using our suggested framework results in robust, high downstream task performance on benchmark datasets, thereby indicating embeddings of good quality. 
More specifically, we consider the task of learning unsupervised GNN embeddings using four different methods (CCA-SSG, BGRL, DGI, and GRACE, see Appendix~\ref{sec:gnn-review}), and choosing the correct set of hyperparameters in each method.
\textit{Note that we do not look at the classification accuracy ahead of time and use them for choosing the model and hyperparameters}. Instead, we only report them after choosing the model to validate the approach, reflecting a more practical scenario to apply unsupervised GNNs on real datasets. In Table~\ref{tab:exp-summary-downstream-performance}, we report the downstream task performance (classification or regression) of the model chosen by our framework (Algorithm~\ref{alg:full}) and metrics proposed in \citet{tsitsulin2023unsupervised}. Our method shows a robust performance and achieves either the best or the second best performance compared to the existing metrics for 7 out of 10 datasets, and achieving the best overall accuracy. A similar table reporting the performance by different GNN architectures \citep{bgrl,zhang2021canonical, zhu2020GRACE} is presented in Table~\ref{tab:exp-summary-downstream-performance-bgrl}, \ref{tab:exp-summary-downstream-performance-cca-ssg}, \ref{tab:exp-summary-downstream-performance-grace} in the Appendix.





\noindent{\bf Choice of the Threshold.} We turn to the problem of selecting a stable-rank threshold. We suggest using the reasonable lower bound for the latent (effective) dimension as sufficient. For Tables~\ref{tab:exp-summary-downstream-performance} and \ref{tab:exp-summary-spearman-corr-full}, we set the threshold to $t=2$. This choice ensures that the embeddings retain a minimum effective dimensionality, preventing collapse to a single line. Consequently, our alignment metric accurately measures meaningful signal alignment rather than trivial, collapsed patterns. It is important to highlight the trade-off associated with this threshold: setting a higher threshold enhances robustness but may inadvertently exclude beneficial models, while a lower threshold allows greater model diversity but risks increased variability and potential collapse of representations.

\section{Conclusion} \label{section:conclusion}
Hyperparameter tuning for unsupervised GNNs presents two primary challenges -- the complexity of generating multiple samples out of one observed graph and the difficulty in evaluating model performance without labeled data. In this paper, we propose a novel method for performing bootstrapping specifically tailored for unsupervised GNNs, facilitating both hyperparameter tuning and model selection. Although our validation is primarily empirical, we believe that this study highlights a more systematic approach for tuning graph neural networks and machine learning models in general, encouraging further exploration in this direction. Notably, our approach is applicable to graphs of moderate size (a few thousand nodes), but may not scale directly to larger graphs. A potential solution is to partition the graph and bootstrap within blocks. We present preliminary results in the appendix (see Appendix~\ref{sec:split-more}, \ref{sec:scalability}) suggesting the promise of this approach, though adapting the procedure for large-scale graphs remains an open question for future research.

\subsubsection*{Acknowledgments}
This work was completed in part with resources provided by the University of Chicago Research Computing Center, and was supported in part through the computational resources and staff contributions provided for the Mercury high performance computing cluster at The University of Chicago Booth School of Business which is supported by the Office of the Dean.


\newpage
\bibliography{main}
\bibliographystyle{apalike}

\appendix
\onecolumn
\newpage
\section{Additional definitions and proofs}

\subsection{Definitions}

Throughout this manuscript, we assume the same conventions as in the general literature on graphon estimation (see, for instance, ~\citet{gao2015rate, gaucher2021maximum}).

In particular, for a function $f: [0,1] \times [0,1] \to [0,1]$,
 the
derivative operator is defined by
\[
\nabla_{jk}f(x,y)=\frac{\partial^{j+k}}{(\partial x)^j(\partial y)^k}f(x,y),
\]
and we adopt the convention $\nabla_{00}f(x,y)=f(x,y)$.
\begin{definition}[H\"{o}lder class for Graphon functions (from \citet{gao2015rate})]\label{def:graphon_norm}
    
The H\"{o}lder norm is defined as
\begin{eqnarray*}
\left\lVert f\right\rVert _{\mathcal{H}_{\alpha}} &=& \max_{j+k\leq
{ \lfloor{\alpha}  \rfloor}}\sup
_{x,y\in\mathcal{D}}\bigl \lvert \nabla_{jk}f(x,y)\bigr \rvert
\\
&&{} +\max
_{j+k={ \lfloor{\alpha}  \rfloor}}\sup_{(x,y)\neq
(x',y')\in\mathcal{D}}\frac
{\left \lvert \nabla_{jk}f(x,y)-\nabla_{jk}f(x',y')\right \rvert
}{(\left \lvert  x-x'\right \rvert +\left \lvert  y-y'\right \rvert )^{\alpha
-{ \lfloor{\alpha}  \rfloor}}}.
\end{eqnarray*}
The H\"{o}lder class is defined by
\[
\mathcal{H}_{\alpha}(M)= \bigl\{\| f\| _{\mathcal{H}_{\alpha}}\leq
M: f(x,y)=f(y,x)\mbox{ for }x\geq y \bigr\},
\]
where $\alpha>0$ is the smoothness parameter and $M>0$ is the size of
the class, which is assumed to be a constant. 
\end{definition}

\begin{definition}[Distance Measures]
For nodes $i,j \in [n]$:
\begin{align*}
d_L(i,j) &= |U_i - U_j| \quad \text{(Latent distance)}\\
d_F(i,j) &= \|X_i - X_j\|_2 \quad \text{(Feature distance)} \\
d_G(i,j) &= \text{length of shortest path from $i$ to $j$} \quad \text{(Graph distance)}
\end{align*}
\end{definition}

Note that in the actual implementation, other graph distances are available as an option, but for the analysis purpose, we assume $D_G(\cdot, \cdot)$ is a shortest-path distance. 

\begin{definition}[k-NN Neighborhoods]
For node $i$:
\begin{align*}
\mathcal{N}_k^U(i) &= \{j: U_j \text{ is among k-nearest neighbors of } U_i\}\\
\mathcal{N}_k^X(i) &= \{j: X_j \text{ is among k-nearest neighbors of } X_i\} \\
\mathcal{N}_k^G(i) &= \{j: \text{node }j \text{ is among k-nearest neighbors of node } i\} 
\end{align*}
where $X_i = g(U_i) + \epsilon_i$ , and $U_i \sim \text{Unif}[0,1]$. The neighborhood is determined by corresponding distance. For example, the neighborhood in the latent space is determined by latent distance. 
\end{definition}

\subsection{Proof of Theorem~\ref{theorem:edge}}\label{app:proof}

\begin{proof}
Letting $\mathcal{N}_k(i)$ denote the $k^{th}$ closest neighbor of node $i$ according to the oracle graph $\mathcal{G}_{knn}$.

For any pair of nodes $(i,j)$, as we are resampling, we are  effectively replacing the underlying connection probability $\mathbb{P}[A_{ij} = 1| U_i, U_j]$ by:
$$    \hat{p}_{ij}  = \frac{1}{K} \sum_{k=1}^K A_{\mathcal{N}_k(i), j} $$
    We decompose the risk of this estimator as: 
   $$\mathbb{E}\left[\left(\mathbb{P}[A_{ij} = 1| U_i, U_j] - \hat{p}_{ij}\right )^2\right] =\text{Bias}^2 + \text{Variance} $$
   where
    \begin{equation}
        \begin{split}
         \qquad       \text{Bias} &= \mathbb{P}[A_{ij} = 1| U_i, U_j] - \E[\hat{p}_{ij}] \\
    &= \frac{1}{k} \sum_{m=1}^k \left(\mathbb{P}[A_{ij} = 1| U_i, U_j]  -  \mathbb{P}[Y_{\mathcal{N}_m(i), j}=1| U_{\mathcal{N}_m(i)}, U_j]\right) \\
                 &   \text{Variance}   =  \E\left[\left( \frac{1}{k} \sum_{m=1}^k (\mathbb{P}[Y_{\mathcal{N}_m(i), j}=1| U_{\mathcal{N}_m(i)}, U_j] -A_{\mathcal{N}_m(i), j}) \right)^2\right]
        \end{split}
    \end{equation}
  By assumption, since $W$ is assumed to be $\alpha$-H{\"o}lder, as emphasized in \citet{gao2015rate}, when $\alpha\in(0,1]$,
a function $f\in\mathcal{H}_{\alpha}(M)$ satisfies the Lipschitz condition

\begin{equation}
\bigl\lvert f(x,y)-f\bigl(x',y'\bigr)\bigr\rvert
\leq M\bigl(\bigl\lvert x-x'\bigr\rvert +\bigl\lvert
y-y'\bigr\rvert \bigr)^{\alpha}, \label{eqLip}
\end{equation}
Therefore, we have:

\begin{equation}
        \begin{split}
   |\text{Bias}| &= \left| \mathbb{P}[A_{ij} = 1| U_i, U_j] - \frac{1}{k} \sum_{m=1}^k \mathbb{P}[Y_{\mathcal{N}_m(i), j}=1| U_{\mathcal{N}_m(i)}, U_j] \right|\\
   &\leq \frac{1}{k} \sum_{m=1}^k M |U_i-U_{\mathcal{N}_m(i)}|^{\alpha}.
        \end{split}
    \end{equation}  

The quantity $| U_i- U_m|^{\alpha}$ (with $m$ a $k$-nearest neighbor of $i$) is a monotonously decreasing function of $n$, and
with high probability (over the distribution of $U_1, \cdots , U_n)$, 
we have $|U_i - U_m|_2  \leq {c_0 }\frac{k}{n}$, for all $m \in \mathcal{N}(U_i)$, and a constant $c_0$.  Therefore, as $n$ goes to infinity, $\lim_{n \to \infty} | \text{Bias} | =0. $

    Similarly, for the variance:
      \begin{equation}
        \begin{split}
   \text{Variance} &= \E\left[\left( \frac{1}{k} \sum_{m=1}^k (\mathbb{P}[Y_{\mathcal{N}_m(i), j}=1| U_{\mathcal{N}_m(i)}, U_j] -A_{\mathcal{N}_m(i), j}) \right)^2\right]\\ 
   &= \frac{1}{k^2} \sum_{m=1}^k \mathbb{P}[Y_{\mathcal{N}_m(i), j}=1| U_{\mathcal{N}_m(i)}, U_j](1-\mathbb{P}[Y_{\mathcal{N}_m(i), j}=1| U_{\mathcal{N}_m(i)}, U_j])\\ 
   &\leq \frac{1}{k}.
        \end{split}
    \end{equation}
    As $k\to \infty$, this converges to 0. 

    This shows that $\hat{p}_{ij}$ is a consistent estimator of $p_{ij}.$
    
\end{proof}

\section{Summary of Selected Unsupervised GNNs}\label{sec:gnn-review}

\noindent\textbf{CCA-SSG}: CCA-SSG \citep{zhang2021canonical} is inspired by statistical canonical correlation analysis(CCA) that constructs the loss on the feature-level rather than instance-level discrimination, which is typical in contrastive methods. They augment the original graph in a random fashion by dropping edges or masking the node features to make a pair of graphs for learning. 
\begin{equation}\label{eq:cca-ssg-loss}
\mathcal{L} = \underbrace{\|\tilde{Z}_A-\tilde{Z}_B\|^2}_{\text{invariance term}} + \lambda \underbrace{\|\tilde{Z}^\top_A\tilde{Z}_A-I\|_F^2 + \|\tilde{Z}^\top_B\tilde{Z}_B -I\|_F^2}_{\text{decorrelation term}}
\end{equation}

Although their model structure is relatively simple and does not require a parametrized mutual information estimator or additional projection network, they still have the issue of choosing hyperparameters(e.g. $\lambda$) which has a non-negligible impact on the model performance. \\
\par

\noindent\textbf{GRACE:} Contrastive learning or self-supervised method has gotten increasing attention as they do not require label availability as supervised GNN does. Deep Graph Contrastive Representation Learning(GRACE) \citep{zhu2020GRACE} is one of the popular graph constrastive learning methods.
\begin{enumerate}
    \item For each iteration, GRACE generates two graph views, $\tilde{G_1}, \tilde{G_2}$, by either randomly removing edges or randomly masking node features. 
    \item Let $U = f(\tilde{X_1}, \tilde{A_1}), V = f(\tilde{X_2}, \tilde{A_2})$ be the embedded representation of two graph views, and their corresponding node features and adjacency matrices. 
    \item Positive samples: For any node $v_i$, its corresponding representation in another view $u_i$ is treated as natural positive pair. 
    \item Negative samples: For given node $v_i$, any nodes in another view $u_{k \neq i}$ are treated as negative pair. 
    \item Node-wise objective: 
    $$ \ell(u_i, v_i) = \text{log} \frac{ e^{\theta(u_i,v_i)/ \tau}} { \underbrace{e^{\theta(u_i,v_i)/\tau}}_\text{the positive pair} + \underbrace{\sum_{k=1}^N \mathbbm{1}_{k \neq i} e^{\theta (u_i,v_k)/\tau}}_\text{inter-view negative pairs} + \underbrace{\sum_{k=1}^N\mathbbm{1}_{k \neq i} e^{\theta(u_i,u_k)/\tau}}_\text{intra-view negative pairs} } $$
    \item Overall loss function: $\ell = \frac{1}{2N}\sum_{i=1}^N \big[\ell(u_i, v_i) + \ell(v_i, u_i) \big]$
    \item Optimization: apply stochastic gradient descent.
\end{enumerate}


\noindent\textbf{DGI:} Deep Graph Infomax \citep{stokes2020deep} is another option for the unsupervised graph representation learning. DGI optimizes the mutual information between the local patch representation of the graph and the overall high-level summaries.
$$
\mathcal{L} = \frac{1}{N+M}\big( \sum_{i=1}^N\mathbb{E}_{(X,A)}[log \mathcal{D}(\vec{h}_i, \vec{s})] + \sum_{j=1}^M\mathbb{E}_{(X,A)}[log(1-\mathcal{D}(\vec{\tilde{h}}_i, \vec{\tilde{s}}]
\big)
$$

\noindent\textbf{BGRL:} Large-Scale Representation Learning on Graphs via Bootstrapping(BGRL) 
 \citep{thakoor2023largescale}
similar to CCA-SSG, BGRL uses node and feature masking to augment the original graph. At the core of BGRL is a bootstrapping mechanism that updates the target representations gradually, borrowing ideas from consistency regularization and contrastive learning. Unlike contrastive learning methods that require negative samples, BGRL avoids the computational overhead associated with negative sampling by using a bootstrapping approach. This involves maintaining two networks: an online network that is updated using gradients and a target network that is slowly updated with the parameters of the online network. This setup encourages the embeddings to become more stable and consistent over iterations.
\begin{enumerate}
    \item Update the online encoder: 
    $$\ell(\theta, \phi) = -\frac{2}{N}\sum_{i=0}^{N-1}\frac{\tilde{Z}_{(1,i)}\tilde{H}^\top_{(2,i)}}{\|\tilde{Z}_{(1,i)}\| \| \tilde{H}^\top_{(2,i)}\| }
    $$
    \item Update the target encoder: $\theta \leftarrow \tau\phi+(1-\tau)\theta$

\end{enumerate}

\textbf{GCA}: Graph Contrastive Learning with Augmentations (GCA) 
 \citep{you2021graphcontrastivelearningaugmentations} introduces a contrastive learning framework designed specifically for graph data. GCA applies data augmentation techniques on both the node features and graph structure, creating different views of the same node. The central idea is to maximize the agreement between the representations of the same node in different augmented views, while ensuring that the representations of different nodes remain distinguishable.

The contrastive loss is designed to encourage the representations of different views, $a$ and $b$ of the same node $i$, with temperature scaling $\tau$.  
$$\mathcal{L}_{\text{GCA}} = \frac{1}{N} \sum_{i=1}^N -\log \frac{\exp(\text{sim}(\mathbf{z}_i^a, \mathbf{z}_i^b) / \tau)}{\sum_{j=1}^N \exp(\text{sim}(\mathbf{z}_i^a, \mathbf{z}_j^b) / \tau)}$$  where $\text{sim}(z_i, z_j) = z_i^Tz_j/(\|z_i\|\cdot \|z_j\|)$ is a cosine similarity.

\noindent\textbf{VGAE:} Variational Graph Autoencoder (VGAE) \citep{kipf2016vgae} is a framework designed for learning graph embeddings through variational inference. It is a probabilistic approach that leverages both graph structure and node features to infer latent node representations. VGAE aims to model the underlying distribution of the graph data, capturing the uncertainty in the embeddings by using a variational autoencoder architecture. This setup allows VGAE to generate robust embeddings that generalize well to unseen nodes or links. The model consists of an encoder that approximates the posterior distribution over latent variables and a decoder that reconstructs the graph from these variables.

The loss function comprises two components: a reconstruction loss that encourages the model to accurately predict the adjacency matrix, and a regularization term in the form of the KL-divergence, which ensures the latent variables follow the prior distribution. 

\begin{enumerate}
    \item Update the encoder by maximizing the evidence lower bound (ELBO): 
    $$\mathcal{L} = \mathbb{E}_{q(Z|X,A)}[\log p(A|Z)] - KL(q(Z|X,A)||p(Z))$$
    \item The prior over the latent variables $Z$ is typically set to a standard Gaussian: $p(Z) = \mathcal{N}(0, I)$.
\end{enumerate}

\section{Additional Literature Review}
\subsection{Cross-Validation}\label{sec:review-cv}
In the supervised learning literature, cross-validation (CV) 
 \citep{hastie01statisticallearning, Cluster2005Tibshirani} stands as a fundamental strategy for selecting hyperparameters and evaluating models. In the usual (Euclidean) setting, this technique involves partitioning the dataset into distinct subsets: a "training set" for model training and a "test set" for its evaluation. The partitioning is justified by the independence between observations, which implies that the subsamples still follow the same distribution as the original data. A commonly used method is $K$-fold cross-validation, where the dataset is divided into $K$ subsets or folds. For simplicity, we assume there are $n$ samples, and each fold has $m$ data points so that $n = K \times m$. We denote a set of index for the $k$-th fold as $I_k$. The model is trained $K$ times, each time using $K-1$ folds for training and the remaining fold for validation. Evaluation of the validation set is performed through an appropriate evaluation function $\ell(\cdot)$ measuring the discrepancy between the observations $y_i$ and their predicted values $\hat{y}_i = \hat{f}(x_i, \theta)$. This loss is usually taken to be the mean squared error(MSE) in the regression case, ($\text{MSE}_k = \frac{1}{m}\sum_{i \in I_k}(y_i-\hat{y}_i)^2$), or to be the classification accuracy in the classification setting. 
By averaging this metric over all $k$ folds, cross-validation provides a reliable estimate of the model's prediction error on unseen data. 

While the implementation and practice of cross-validation is simple and straightforward, its interpretation has only recently been investigated in work by \citet{Bates2024CV}. The authors' key finding is that the cross-validation does not estimate the prediction error for the model trained on a specific dataset but rather the ``average" prediction error across all possible training datasets from the same distribution. 
\begin{equation}\label{eq:cv}
\widehat{Err}^{(CV)}  = \frac{1}{n}\sum_{i=1}^n e_i = \frac{1}{K}\sum_{k=1}^K \frac{1}{m}\sum_{i \in I_k}\ell(\hat{f}(x_i, \hat\theta^{(-k)}),y_i).
\end{equation}
The intuition is the inner summation in Equation~\ref{eq:cv} estimates the prediction error of the model at hand, and the outer summation calculates the empirical average over all possible training sets of the same size.
In the previous equation, $\hat\theta^{(-k)}$ denotes the parameters of the model fitted on all but the $k^{th}$ fold, and $\hat{f}(x_i, \hat\theta^{(-k)})$ indicates the estimator of $y$.

\subsection{Cross-Validation for Unsupervised Learning}
Despite the popularity and simplicity of the cross-validation procedure, its application in unsupervised learning has been relatively underexplored, largely due to the absence of clear evaluation metrics. \citet{perry2009crossvalidation} addressed this gap by examining cross-validation in unsupervised settings and proposing several solutions, with a focus on methods utilizing Singular Value Decomposition (SVD). Among the strategies reviewed, two are particularly relevant for this discussion. The first is a traditional hold-out method, where a portion of the data is set aside for validation, and the second involves treating random elements of the dataset as "missing values." For a detailed explanation of these methods, refer to \citet{perry2009crossvalidation}, Chapter 5. However, it is important to note that these methods were originally designed for conventional, independent, tabular data for unsupervised tasks. In this study, we build on Perry's framework, focusing on its connection to graph neural networks (GNNs) and extending its use to evaluate unsupervised learning methods in the context of GNNs in Section~\ref{sec:cv-eval}.

For the hold-out method, we randomly partition the data $Z \in \mathbb{R}^{n \times p}$ into $\begin{pmatrix} Z_1 \\ Z_2 \end{pmatrix}$, where $Z_1 \in \mathbb{R}^{n_1 \times p}$ is a training set, $Z_2 \in \mathbb{R}^{n_2 \times p}$ is a test set, and $n_1+n_2 = n$. We want to approximate the test data by projecting it onto the principal spaces of the training data. To do so, one can calculate the k-dimensional reduced SVD of $Z_1$, where $\hat{Z}_1(k) = \sqrt(n)\hat{U}_1 \hat{D}_1(k) \hat{V}_1$. Project the test set onto the principal space of $Z_1$. 
$$\hat{Z}_2(k) = Z_2 {\hat{Z}_1(k)}^\top(\hat{Z}_1(k) \hat{Z}_1(k)^\top)^\dagger \hat{Z}_1(k) = Z_2 \hat{V}_1 \hat{V}_1^\top.$$ $X^\dagger$ denotes the pseudo-inverse of $X$. The performance ban be evaluated using $\ell_2$ loss, $\|Z_2-\hat{Z}_2(k)\|_F^2$. Although this method cannot be used in practice because the loss is a decreasing function with $k$, the idea of using projection to compute the projection error for unsupervised tasks was insightful.

The second is called either missing value strategy or Wold hold-outs.  
Instead of simply splitting the data, one could randomly select the indices $I \in \mathcal{I}$, which denote the missing elements. Then, $Z_I = \big\{\begin{array}{lr}
        Z_i & \quad i \in I\\
        * & \text{o.w}
        \end{array}$; similarly, $Z_{\bar{I}} = \big\{\begin{array}{lr}
        Z_i & \quad i \notin I\\
        * & \text{o.w}
        \end{array}$. 
Apply k-rank missing value SVD algorithm to find the decomposition of $Z_{\bar{I}}(k) = U_k D_k V_k^\top$. There are many options\citep{} including the one proposed by \citet{perry2009crossvalidation}. The performance can again be evaluated using $\|U_kD_kV_k^\top - Z_I\|_{F, I}^2$ 

The last method is basically to convert the unsupervised task into the supervised task, and called Gabriele hold-outs. Given the data, we could randomly permute the row and column so that we have the following decomposition $P^\top ZQ = \begin{pmatrix}Z_{11} & Z_{12}\\ Z_{21} & Z_{22}\end{pmatrix}$, where $P$ and $Q$ are the permutation matrices. 
        
There is continuing work on applying this hold-out approach (especially Gabriele's hold-out on clustering analysis \cite{fu2017estimating}.

\subsection{Cross Validation for Network Analysis}

There have been relatively few studies \citep{li2020network, hoff2007modeling, Chen2018Network} on the cross-validation of network data. In \citet{li2020network}, the key assumption for the entire analysis is that the edge is the realization of independent Bernoulli random variables, and the probability of connection $M$, which is realized by the observed adjacency matrix $A$, is approximately of low rank. The edge cross-validation proposed in this study is different from traditional node-splitting methods in that the random dropping applies to the connected pair of nodes. The model by \citet{Chen2018Network} is particularly designed for determining the number of communities within the network data, as well as choosing between the regular stochastic block model and the degree-corrected stochastic block model(DCSBM). The core idea is a block-wise node-pair splitting, which is then combined with an integrated step of community recovery using sub-blocks of the adjacency matrix. 

\citet{leiner2024graph} introduces another cross-validation method for graphs but approaches the problem from a different angle. The study applies data thinning to data following convolution-closed distributions by \citet{neufeld2023data}. This procedure creates data folds that maintain the same distribution as the original data, are independent of each other, and sum to the original random variable. 
A canonical example of it is a normal variable. Given data $X \sim(\mu, \sigma^2)$, with unknown parameter of interest $\mu$. Through data thinning algorithm, we could thin $X$ into  $X^{(1)} \sim N(\epsilon\mu, \epsilon^2\sigma^2)$ and $X^{(2)} \sim N((1-\epsilon)\mu, (1-\epsilon)^2\sigma^2)$, where these two thinned variables are independent to each other. \citet{leiner2024graph} is an extension of this concept to graph data, applying data thinning to node features while treating the adjacency matrix as fixed. 

However, all these statistical methods heavily rely on the certain generation mechanism of underlying networks, such as the stochastic block model \citep{Chen2018Network} or low-rank structure of expected value of adjacency matrix \citep{li2020network}. The assumptions of the aforementioned approaches on which part of the graph is a random component are also different. \citet{leiner2024graph} treats the graph structural component $(V, E)$ as non-random and the node feature as random; however, \cite{li2020network} treats edge as then random realization based on statistical graph generation model, such as stochastic block model.

\subsection{Bootstrap}\label{sec:split-more}

The bootstrap \citep{efron1979bootstrap} has been widely used as a non-parametric method for estimating the distribution of a statistic through resampling with replacement. This method is useful because it does not rely on assumptions about the underlying distribution, making it applicable across various fields where such assumptions are challenging. 
The validity of the bootstrap is supported by its consistency \citep{horowitz2019bootstrap} under mild assumptions, where the bootstrap distribution converges to the true sampling distribution as the sample size increases. However, the validity of the bootstrap relies on having access to independent samples, an assumption violated in the graph case. We thus consider two distinct scenarios, depending on the nature of the graph at hand:
\begin{itemize}
    \item \textit{For graphs with short-range dependencies}, such as for instance, spatial graphs: we propose to apply a graph-based \textbf{block bootstrap} method, inspired by its use in time series and spatial statistics \citep{Dimitris1994Stationary,castillo2019nonparametric}. The block bootstrap is based on the assumption that the dependency structure is well contained within the small neighborhood that we could assume independence among these neighborhoods. We extend the application of the block bootstrap to the graph case here by splitting the graph into smaller (non-overlapping) neighborhoods of size $B$, and creating new graphs based on replacing each of these neighborhoods by sampling with replacement from the total set of possible neighborhoods (see Algorithm~\ref{alg:block-bootstrap}). Similar to the spatial setting \citep{castillo2019nonparametric}, the size of the blocks is crucial to the success of the procedure. To guide the choice of the neighborhood, we propose using descriptive graph statistics (see next section) to generate graphs with similar characteristics.
    \item \textit{For graphs with long-range dependencies}, For non-spatial and homophilic graphs, we propose to use an extension of \textbf{network bootstrap} by \citet{levin2021bootstrapping}. In this work, \citet{levin2021bootstrapping} consider random dot product graphs (RDPG) where the  edge connectivity is determined by the inner product of the latent positions $H$ of two nodes: for each edge $A_{ij}$ between node $i$ and $j$,  $A_{ij} \sim \text{Bernouilli}(H_i^TH_j)$. The crux of this method is that by converting an observed network into its latent positions, we can leverage the independence among its latent variables. In our setting, we propose to extend this setting to larger classes of graphs by learning node representations $H_i = GNN(X, A)$ of the graph (see Algorithm~\ref{alg:bootstrap-network}). 
\end{itemize}

\begin{algorithm}
\caption{Resample Graphs through Block Bootstrap}
    \begin{algorithmic}[1]
    \STATE \textbf{Input:} Spatial coordinates $x\_coord$, $y\_coord$, $grid\_size$, $n\_samples$.
    \STATE \textbf{Output:} Block bootstrapped graphs $samples$.
    \FOR{$i = 1$ to $n\_samples$}
        \STATE \textbf{Step 1: Shuffle Data Points}
        \STATE Create a grid over the spatial domain using coordinates $x\_coord$ and $y\_coord$.
        \STATE Shuffle the grids to create new patched data, $shuffled\_data$.
        \STATE \textbf{Step 2: Convert Shuffled Data to Graphs}
        \STATE Convert $shuffled\_data$ into a graph by the method of choice (e.g. k-NN or radius graph)     
        \STATE Store the graph $samples[i] = G$
    \ENDFOR
    \end{algorithmic}
    \label{alg:block-bootstrap}
\end{algorithm}

\begin{algorithm}
\caption{Resample Graphs through Network Bootstrap}
\begin{algorithmic}[1]
    \STATE \textbf{Input:} Graph $G$, embedding dimension $d$, $n\_samples$, neighborhood size $k$
    \STATE \textbf{Output:} Bootstrapped graph samples $samples$
        \STATE Generate spectral embedding $H$ of adjacency matrix using top $d$ eigenvectors
        \FOR{$i = 1$ to $n\_samples$}
            \STATE Sample indices $bootstrap\_idx$ from $H$ with replacement
            \STATE Generate new graph $\hat{A}$ from bootstrapped latent positions
            \STATE Initialize node features $x$ as zeros in the new graph $\hat{G}$
            \FOR{each node $i$ in $\hat{G}$}
                \STATE Calculate distances from node $i$ to all other nodes in $H$
                \STATE Sort distances and find nearest neighbors (based on neighborhood size $k$)
                \STATE Randomly select a neighbor and assign its features to node $i$
            \ENDFOR
            \STATE Store the generated graph $\hat{G}$ in the sample set $samples$
        \ENDFOR
\end{algorithmic}
\label{alg:bootstrap-network}
\end{algorithm}

\paragraph{Bayesian Bootstrap} \citet{rubin1981bayesian} introduces the Bayesian bootstrap (BB) as a nonparametric alternative to traditional Bayesian inference, sidestepping the need for explicit likelihood functions. Unlike the frequentist bootstrap, which resamples data \textit{with replacement}, the Bayesian bootstrap assigns Dirichlet-distributed random weights to observed data points, generating a posterior distribution for parameters of interest. Specifically, for a dataset $X = \{x_1, x_2, ..., x_n\} $, instead of sampling with replacement as in the frequentist bootstrap, the Bayesian bootstrap draws a random probability vector $p = (p_1, p_2, ..., p_n) $ from a $Dir(1,1,…,1)$ distribution, ensuring that
$\sum_{i=1}^{n} p_i = 1 \quad \text{and} \quad p_i > 0$.
This randomized weighting serves as a Bayesian nonparametric prior, effectively treating the empirical distribution of the data as the prior distribution.

\citet{efron2012bayesian} explores the relationship between Bayesian inference and the parametric bootstrap, demonstrating how frequentist resampling techniques can be adapted to estimate posterior distributions. The key insight of this work is that the parametric bootstrap, traditionally used to approximate sampling distributions, can serve as an efficient computational tool for Bayesian inference when paired with importance sampling. \citet{efron2012bayesian} shows that bootstrap reweighting can be used to transform frequentist confidence intervals into Bayesian credible intervals. This approach provides a bridge between the two paradigms, enabling frequentist methods to yield posterior distributions without relying on Markov Chain Monte Carlo (MCMC) techniques. 

The Bayesian bootstrap provides a perspective for interpreting the proposed {nonparametric graph rewiring}, particularly when edge resampling is guided by shared neighborhood structure. Just as the BB assigns Dirichlet-distributed weights to data points to construct a posterior distribution, the graph rewiring process can be seen as assigning probabilistic weights to edges based on local graph structure, thereby producing alternative realizations of the same graph. In this context, the neighborhood-weighted resampling in LOBSTUR aligns with Bayesian importance sampling, where the rewired edges represent a form of pseudo-posterior distribution over network structures.

\paragraph{Extended VGAE Approach} 
Inspired by \citet{kipf2016vgae}, we tried using the Variational Graph Autoencoder(VGAE) as a new graph sampler. The extension was needed as the original method only reconstructed the adjacency matrix. The proposed loss function includes a feature reconstruction component alongside the edge reconstruction and KL divergence losses. With the edge decoder designed by the original work, the feature decoder generates reconstructed node features, and the reconstruction loss for features is based on the sum of squared differences between the original and reconstructed features. 

The total loss used for training consists of three parts: the KL divergence loss regularizing the latent variables, the edge reconstruction loss, and the feature reconstruction loss, scaled by a regularization parameter $\lambda$. The overall objective is:
$$\text{Total Loss} = \frac{\text{KL}}{n} + \text{loss}_A + \lambda \times \text{loss}_X$$
In our implementation, the parameter $\lambda$ controls the weight of the feature reconstruction in the loss. This allows the model to focus primarily on learning the graph structure while still incorporating node feature information.

\subsubsection{Experiments: Block Bootstrap}

\begin{figure}[H]
    \centering
    \includegraphics[width=0.9\linewidth]{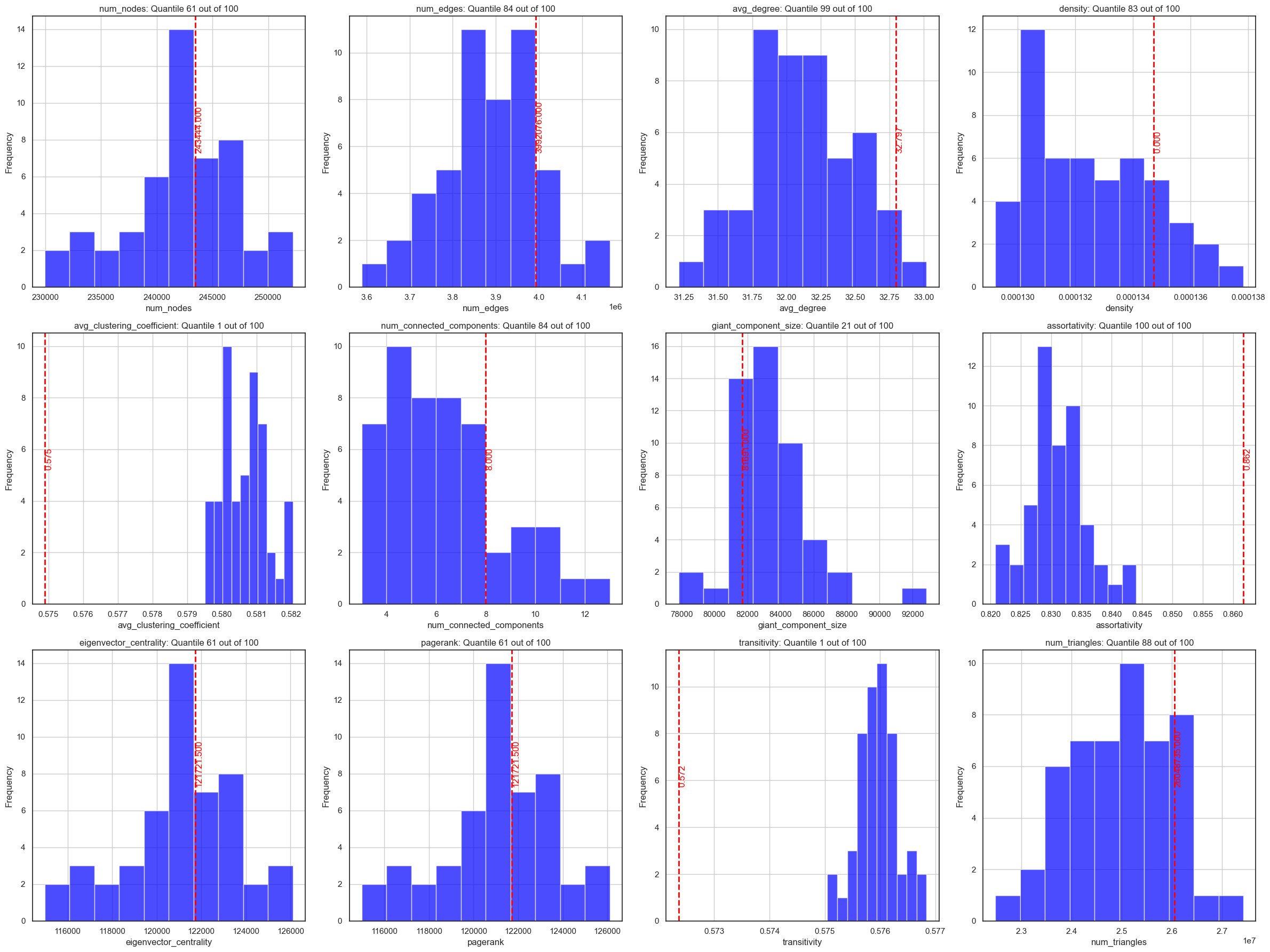}
    \caption{Block Bootstrap for Mouse Spleen data. Distribution of graph statistics of bootstrapped graphs. The principle is to see if the graph statistics of the original graph is within the extremity of the distribution of generated samples. The red dotted line indicates the statistics computed on the original graph. Most of the graph statistics do not lie at the extremity of the distribution of graph statistics by bootstrapped samples.}
    \label{fig:graph-satstitics}
\end{figure}

\begin{figure}[H]
    \centering
    \includegraphics[width=0.9\linewidth]{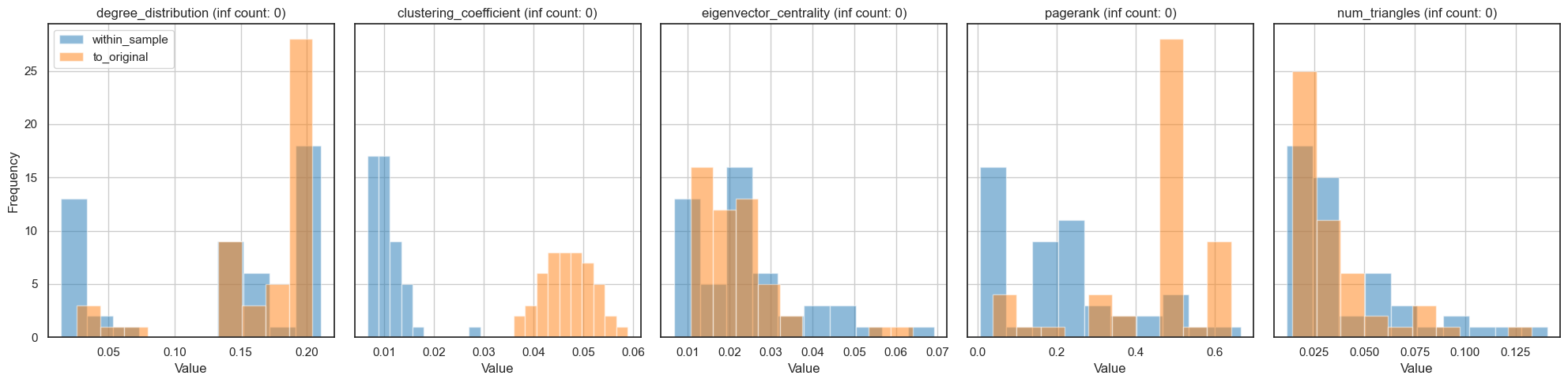}
    \caption{Block Bootstrap for Mouse Spleen data. Distribution of node-level statistics of bootstrapped graphs. The orange-colored distribution represents the JS divergence between the bootstrapped samples and the original graph, and the blue-colored distribution represents among bootstrapped samples divergence. The more the two distributions overlap, the bootstrapped samples `mimic' the original graph well in terms of node-level statistics.}
    \label{fig:node-statistics}
\end{figure}

\subsection{Evaluating Embedding Qualities}
In Section~\ref{sec:cv-eval}, we propose a stability metric. There are few works proposing metrics to evaluate the quality of unsupervised embeddings, although they are not intended for hyperparameter tuning. 

\paragraph{Alignment-based metrics}
Our first family of metrics focuses on measuring how well two embeddings align with each other. Suppose we have two embeddings, $H_i$ and $H_j$, produced by the same learning procedure but on different graph folds. We propose two discretized versions of (\ref{eq-cca-dist}), measuring how much two embeddings align with each other. 

\begin{enumerate}
    \item \textbf{Label Matching}: The first thing we can think of is to make the label from the embedding from each fold, which follows the ``converting to the supervised task'' convention. 
    \begin{enumerate}
        \item Determine the clusters on embeddings using simple clustering algorithm such K-Nearest Neighbor or Gaussian Mixture Model(GMM)
        \item Use widely used clustering evaluation metrics, such as Adjusted Rand Index(ARI) by \citet{hubert1985comparing} or Normalized Mutual Index(NMI), to see the labels from $H_i$ and $H_j$ agree to each other.
    \end{enumerate}
    \item \textbf{Neighborhood Matching}: If the model is able to extract enough of the latent structure of data, the model trained on the different folds of a graph should be similar. With this reasoning, we can evaluate the model by how much of the neighborhoods in the embedding agree with each other. To avoid the usage of data twice, we will evaluate the neighborhood from $H_{i}$ and $H_{j}$ and report the ratio of overlapping neighbors. To construct the neighborhood in the embedding space, we will use the simple k-Nearest algorithm with varying sizes of k. 
For each node on output embeddings, $H_i$ and $H_j$, we first find the m-nearest neighbors. Then for node-level neighbor-kept ratio is defined as  $N_i(m) = \text{\# of overlapped neighbors}/m$, where $m \leq k$ is the neighbor size. Graph-level ratio can be calculated by simply averaging over the nodes, $N(m) = \sum_i N_i(m)$.


\end{enumerate}

\paragraph{Direct Embedding Quality Metrics}
Beyond measuring alignment between two embeddings, one can also evaluate an embedding’s internal quality or degree of collapse. These methods offer a complementary view: even if two embeddings align with each other, they could both be suffering from dimension collapse or poor distribution of singular vectors.

\begin{enumerate}
    \item \textbf{RankMe}: \citet{garrido2023rankmeassessingdownstreamperformance} proposes \textit{RankMe} a metric to measure the effective dimension of embeddings to quantify the embedding collapse in self-supervised learning. To overcome the numerical instability of the exact rank computation, for example, due to round-off error, they propose an alternative to use Shannon entropy of normalized singular values. The formula was originally proposed by \citet{roy2007effective} and then applied to dimension collapse context by \citet{garrido2023rankmeassessingdownstreamperformance}. Formally, 
$$\text{RankMe}(H) = \exp\left( - \sum_{k=1}^{\text{min}(n,p)} p_k \log p_k\right), \text{ with } p_k = \frac{\sigma_k(H)}{\|\sigma(H)\|_1}+\epsilon.$$
    \item \textbf{Metrics proposed in \citet{tsitsulin2023unsupervised}}: \citet{tsitsulin2023unsupervised} further extended the approaches and proposed four different metrics to evaluate the embedding quality in terms of embedding collapse and stability perspective. The key differences between their experiment setting and ours are, first, \citet{tsitsulin2023unsupervised} only consider the graph structure, not the node features, and second, they do not change the model parameters but change the level of perturbation on the structure (edge dropping or node masking).
    Let $H \in \mathbb{R}^{n \times p}$ be an embedding obtained from the trained unsupervised model of choice.
\begin{enumerate}
    \item \textbf{Coherence}: The coherence metric measures how concentrated the rows of the singular vector matrix $U$ are. A low coherence indicates that the energy is spread more uniformly across all rows (good for compressed sensing), while a high coherence suggests that the energy is concentrated in a few rows, which can indicate a poorly distributed set of singular vectors.
    $$ \text{Coherence}(H) = \frac{\max_i \|U_i\|_2^2 \cdot n}{p},$$ where $U \in \mathbb{R}^{n \times p}$ is reduced left singular matrix of $H \in \mathbb{R}^{n \times p}$. 
    \item \textbf{Stable Rank}: It is the quantity called a `numerical rank'' (or effective rank) in numerical analysis. 
    $$\text{Stable Rank}(H) = \frac{\|H\|_F^2}{\|H\|_2^2 },$$ where $\|H\|_F$ denotes the Frobenius norm, $\|H\|_F^2 = \sum \sigma_i^2$, and $\|H\|_2 = \sigma_1$, where $\sigma_1 \geq \sigma_2 \geq \cdots \geq \sigma_n$ denotes the singular values of $H$.
    \item \textbf{Pseudo-condition number}: Let SVD of the embedding $H$ be $H = U\Sigma V^\top$. 
    $$\kappa_p(H) =\|H\|_p\|H^\dagger\|_p\overset{p=2}{=} \frac{\sigma_1}{\sigma_n}$$
    \item \textbf{SelfCluster}: The idea is to estimate how much the embeddings are clustered in the embedding space compared to random distribution on a sphere. Let $\tilde{H} \in \mathbb{R}^{n \times p}$ be the normalized embeddings. $$\text{SelfCluster}(H) = \frac{\|\tilde{H}\tilde{H}^\top\|_F - n - n\times(n-1)/d}{n^2-n - n\times(n-1)/d}$$
\end{enumerate}
\end{enumerate}

\section{Scalability}\label{sec:scalability}
While our framework performs well on graphs of moderate size (up to 19k nodes, e.g., the Pubmed citation network), scalability remains a challenge. The bootstrapping procedure and CCA-based evaluation introduce significant additional computation, which can limit applicability to larger graphs. In particular, when applying our method to the OGBN-Arxiv dataset (over 170k nodes), we encountered substantial runtime challenges that made the process very time-consuming.

The main limitation, however, stems from the need to train multiple graph neural networks (GNNs) during the bootstrapping process. This requirement significantly increases the computational cost, but it is essential to ensure robust hyperparameter selection, especially in high-precision applications such as finance or biomedical domains, where reliability and unbiased evaluation are critical.

To address scalability challenges, we have begun exploring two strategies (1) block bootstrapping where the graph is partitioned into smaller subgraphs and bootstrapping is applied within blocks; and (2) approximate rewiring schemes to reduce computational overhead during resampling. Preliminary results for block bootstrapping, presented in \ref{sec:split-more}, suggest that this direction holds promise. 

\subsection{Alternative Algorithm for Scalability}

\begin{algorithm}[H]
\caption{Approximate Edge Rewiring via $A^2$}\label{alg:approximate-bootstrap}
\begin{algorithmic}[1]
    \STATE \textbf{Input:} Graph $G = (\mathcal{V}, \mathcal{E})$ with $n = |\mathcal{V}|$ nodes, flattened list of edge stems $L = \{ u \mid u \in E[:,0] \} \cup \{ v \mid v \in E[:,1] \}$,
    where $E \in \mathbb{R}^{|\mathcal{E}| \times 2}$, and the squared adjacency matrix $A^2$ representing 2-hop connectivity strengths between nodes.
    \STATE Initialize an empty graph $G' = (\mathcal{V}, \mathcal{E}')$ with $n$ nodes.
    \STATE Compute the sparse adjacency matrix $A$ of $G$ and symmetrize it to ensure it is undirected.
    \STATE Compute the matrix product $A^2 = A \times A$, remove self-loops by setting the diagonal of $A^2$ to zero, and eliminate any zero entries.
    \WHILE{$\texttt{len}(L) > 0$}
        \STATE Sample a source node $u$ uniformly at random from the list $L$, and remove it from $L$.
        \STATE Retrieve the set of candidate nodes $v$ for $u$, where each candidate $v$ satisfies $A^2_{uv} > 0$ and $v \neq u$, and where $v \in L$.
        \STATE If no such candidate exists, discard $u$ and continue to the next iteration.
        \STATE Otherwise, sample a target node $v$ from the set of candidates according to the normalized weights given by $A^2_{uv}$.
        \STATE Remove the sampled node $v$ from the list $L$.
        \STATE Add an undirected edge between $u$ and $v$ in the graph $G'$.
    \ENDWHILE
    \STATE \textbf{Output:} Bootstrapped graph $G'$ with resampled edge structure.
\end{algorithmic}
\end{algorithm}

The original edge rewiring algorithm (Algorithm~\ref{alg:graph} explores a node's local 1-hop neighborhood at each iteration. For a randomly selected node $u$, it first identifies its $k$-nearest neighbors based on some graph-based distance, then for each $k$-nearest neighbor $m$, it retrieves all direct neighbors $\mathcal{N}_A(m)$ in the original graph. The node $u$ samples a new connection $v$ from this dynamically built candidate set, with probabilities weighted by the frequency of appearance across different $m$. This ensures that edge resampling captures local neighborhood information around each node. However, this procedure incurs high computational cost because it needs to explore multiple neighborhoods at every rewiring step. 

The approximate algorithm (Algorithm~\ref{alg:approximate-bootstrap}) instead precomputes the 2-hop neighborhood connectivity of the graph by squaring the adjacency matrix, yielding $A^2$. Here, $A \in \mathbb{R}^{n \times n}$ is the (symmetric) adjacency matrix of the graph, and $A^2_{ij}$ counts the number of distinct 2-hop paths between nodes $i$ and $j$. In this setting, each node $u$ directly samples a target node $v$ from the 2-hop neighbors based on their weighted connection strength given by $A^2_{uv}$. The sampling probability is proportional to the number of 2-hop paths between $u$ and $v$, i.e.,
$$P(v | u) = \frac{A^2_{uv}}{\sum_{v' \in \mathcal{C}(u)} A^2_{uv'}},$$
where $\mathcal{C}(u)$ is the set of candidates for node $u$ with positive 2-hop connectivity and available stems. If no candidates are found, the algorithm discards $u$ and continues.

The relationship between Algorithm~\ref{alg:graph} and the approximate method (Algorithm~\ref{alg:approximate-bootstrap}) depends on the degree of each node and the choice of $k$ for the $k$-nearest neighbor graph. Specifically, whether the candidate set in the original method is larger or smaller than the set of direct neighbors depends on the comparison between a node's degree and the size of $k$. If a node $u$ has a low degree, meaning it is connected to only a few nodes in the original graph, then the $k$-nearest neighbor (k-NN) graph will forcefully connect it to $k$ other nodes based on feature similarity or graph distance, even if $u$ does not have that many direct connections. In this case, the $k$-NN set can be larger than the direct 1-hop neighbor set. The original algorithm supplements the missing local structure by adding neighbors based on external feature similarity rather than existing edges. Consequently, when $\deg(u) < k$, the original rewiring may result in a broader candidate set than the direct neighborhood.

On the other hand, if a node $u$ has a high degree, meaning it is already connected to many nodes in the adjacency graph, then the $k$-nearest neighbor graph selects only a subset of its many neighbors. Here, $k$-NN acts as a filter, choosing the most ``important'' or closest $k$ neighbors, possibly ignoring others. In this case, because $k$ is smaller than the degree, the $k$-NN candidate set becomes smaller than the full direct neighborhood. When $\deg(u) \geq k$, the original algorithm is thus more restrictive compared to simply traversing all direct neighbors.

Therefore, the original algorithm is not always narrower or broader by default; it depends on the relative size of a node’s degree and $k$. This behavior is different from the approximate method using $A^2$, where no such filtering exists. The approximate method (Algorithm~\ref{alg:approximate-bootstrap}) uses all nodes that are reachable in exactly two hops, without considering feature space distances or $k$-nearest neighbor constraints. As a result, the approximate method includes any node with a 2-hop path from a node $u$, potentially adding candidates that would never have been explored in the original method, especially when the node's degree is small and the $k$-NN graph must reach out to faraway nodes.

\newpage
\section{GNN Experiment Details}

We use benchmark datasets for node classification, including Cora, Pubmed, and Citeseer, and test our framework on node regression datasets from \cite{huang2023conformalized_gnn}. We summarize the datasets used to demonstrate the entire hyperparameter tuning procedure in Table~\ref{tab:data-summary}.

We consider various benchmark datasets for node classification tasks, including Cora, Pubmed, Citeseer. Additionally, we have tested our framework on a few datasets for the node regression by \cite{huang2023conformalized_gnn}. To demonstrate our full framework for hyperparameter tuning, we used the following datasets, and their details are summarized in Table~\ref{tab:data-summary}. 

\begin{table}[ht]
\centering
\renewcommand{\arraystretch}{1.2} 
\resizebox{\linewidth}{!}{%
\begin{tabular}{crrrcc}
\toprule
         Dataset &  Num Nodes &  Num Edges &  Num Classes &                                        Description &                             Source \\
\midrule
            Cora &       2708 &       5429 &            7 &         Citation network & PyTorch Geometric\\
        Citeseer &       3327 &       4732 &            6 &         Citation network & PyTorch Geometri \\
          Pubmed &      19717 &      44338 &            3 &         Citation network & PyTorch Geometric \\
    Amazon Photo &       7650 &     119081 &            8 & Product co-purchasing network &    PyTorch Geometric \\
Amazon Computers &      13752 &     245861 &           10 & Product co-purchasing network  &    PyTorch Geometric \\
     Coauthor CS &      18333 &      81894 &           15 & Coauthorship network&  PyTorch Geometric \\
Anaheim & 914 & 3881 & -- & Graph of transportation networks & Conformalized GNN \citep{huang2023conformalized_gnn} \\
ChicagoSketch & 2176 & 15104 & -- & Urban traffic network (sketch) & Conformalized GNN \citep{huang2023conformalized_gnn} \\
County Education & 3234 & 12717 & -- & County-level education metrics (2012) & Conformalized GNN \citep{huang2023conformalized_gnn}\\
Twitch PTBR & 1912 & 3170 & -- & Brazilian Twitch interactions & Conformalized GNN \citep{huang2023conformalized_gnn} \\
\bottomrule
\end{tabular}%
}
\caption{Summary of benchmark datasets used for the experiments, including both classification and regression datasets.}
\label{tab:data-summary}
\end{table}




The followings are tested combinations of hyperparameters, including different types of unsupervised GNN models. 

\begin{itemize}
    \item model: \{CCA-SSG, DGI, BGRL, GRACE\}
    \item feature masking rate (FMR): $\{ 0.05, 0.25, 0.5, 0.75\}$
    \item edge dropping rate (EDR): $\{ 0.05, 0.25, 0.5, 0.75\}$
    \item $\lambda$ (CCA-SSG, BGRL) or $\tau$ (GRACE): $\{10^{-5}, 10^{-4}, 10^{-3}, 10^{-2}, 10^{-1}, 1.0, 10.0 \}$ 
    \item number of layers: 2
    \item hidden dimension: $256$
    \item output dimension ($p$): 8
    \item learning rate: $10^{-3}$
    \item epochs: 500
    \item number of simulations for each dataset ($n_b$): 20 
\end{itemize}

\subsection{Computer Resources Used}\label{computer-resource}
The experiments in this study were conducted using a combination of personal and institutional computational resources. Preliminary analyses and prototyping were performed on a MacBook Pro with an Intel Core i7 processor and 16GB of RAM. For larger-scale experiments, including graph bootstrapping and downstream evaluations, we used high-performance computing resources provided by the institution’s research cluster, which includes access to multi-core CPUs and GPU-enabled nodes. While execution time varied by dataset and task, typical runs for clustering and evaluation completed within a few hours. Detailed resource specifications and runtime profiles are available upon request to support reproducibility.

\section{Additional Tables and Figures}\label{app-additional-figure}

\begin{figure}[H]
    \centering
    \includegraphics[width = \textwidth]{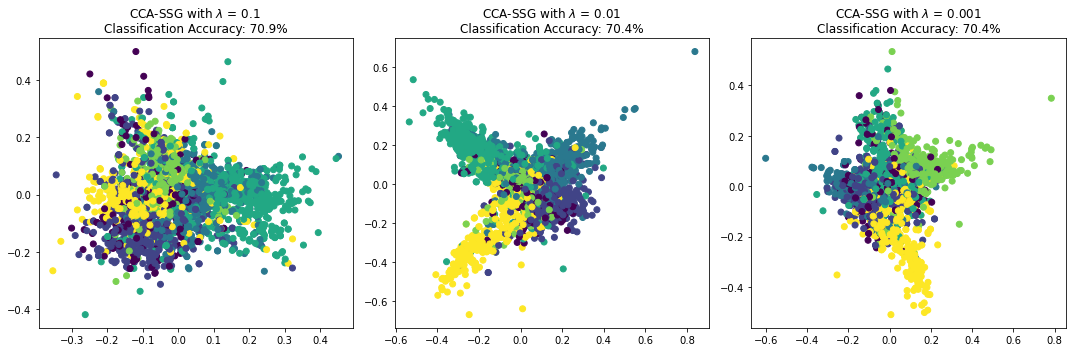}
    \caption{Citeseer: Model trained by different hyperparameters. 2D Visualization through PCA. The learned representations vary by the choice of hyperparameters.}
    \label{fig:citeseer-embeddings}
\end{figure}

\subsection{Validation of Metrics}

\paragraph{Synthetic Datasets.}
The motivation for using synthetic data is that we know the exact data-generating process (DGP), enabling us to replicate the dataset and focus on validating our metric. By controlling the DGP, we remove confounding factors related to real-world data and can better isolate and evaluate the performance of algorithms and metrics.

In this synthetic dataset generation, we create spatially structured data using a simple Gaussian blob. We first define $n$ cluster centers and standard deviations to simulate spatial groupings in a 2D space, which belong to distinct clusters. For each point, we generate a 32-dimensional feature vector, with features generated from Laplacian eigenmap by \cite{belkin2001laplacian}. 
The final dataset includes 2D spatial coordinates, cluster labels, and 32-dimensional feature vectors. We generated 15 copies of graphs following the same (and known) data-generating process. We run the procedure (Algorithm~\ref{alg:full}) and compute the metrics' average and prediction accuracy (Figure~\ref{fig:exp-synthetic}). Our proposed metric matches the clustering alignments (NMI, ARI) and shows a strong negative correlation with accuracy.

\begin{figure}[H]
    \centering
    \begin{subfigure}{0.3\textwidth}
        \includegraphics[width = \linewidth]{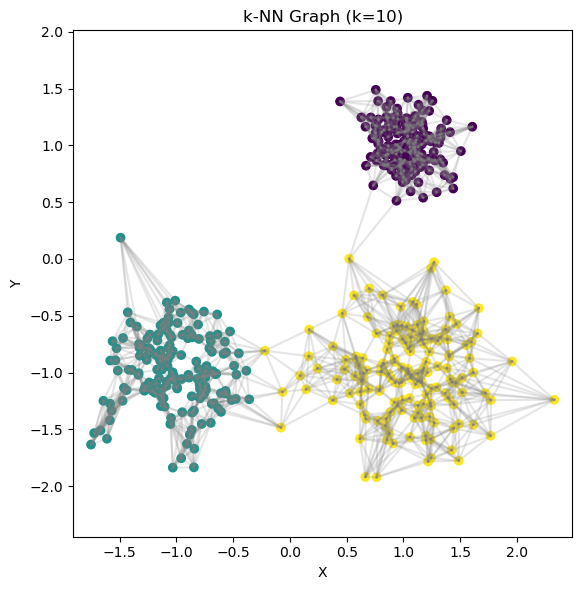}
        \caption{}
    \end{subfigure}
    \begin{subfigure}{0.62\textwidth}
        \includegraphics[width = \linewidth]{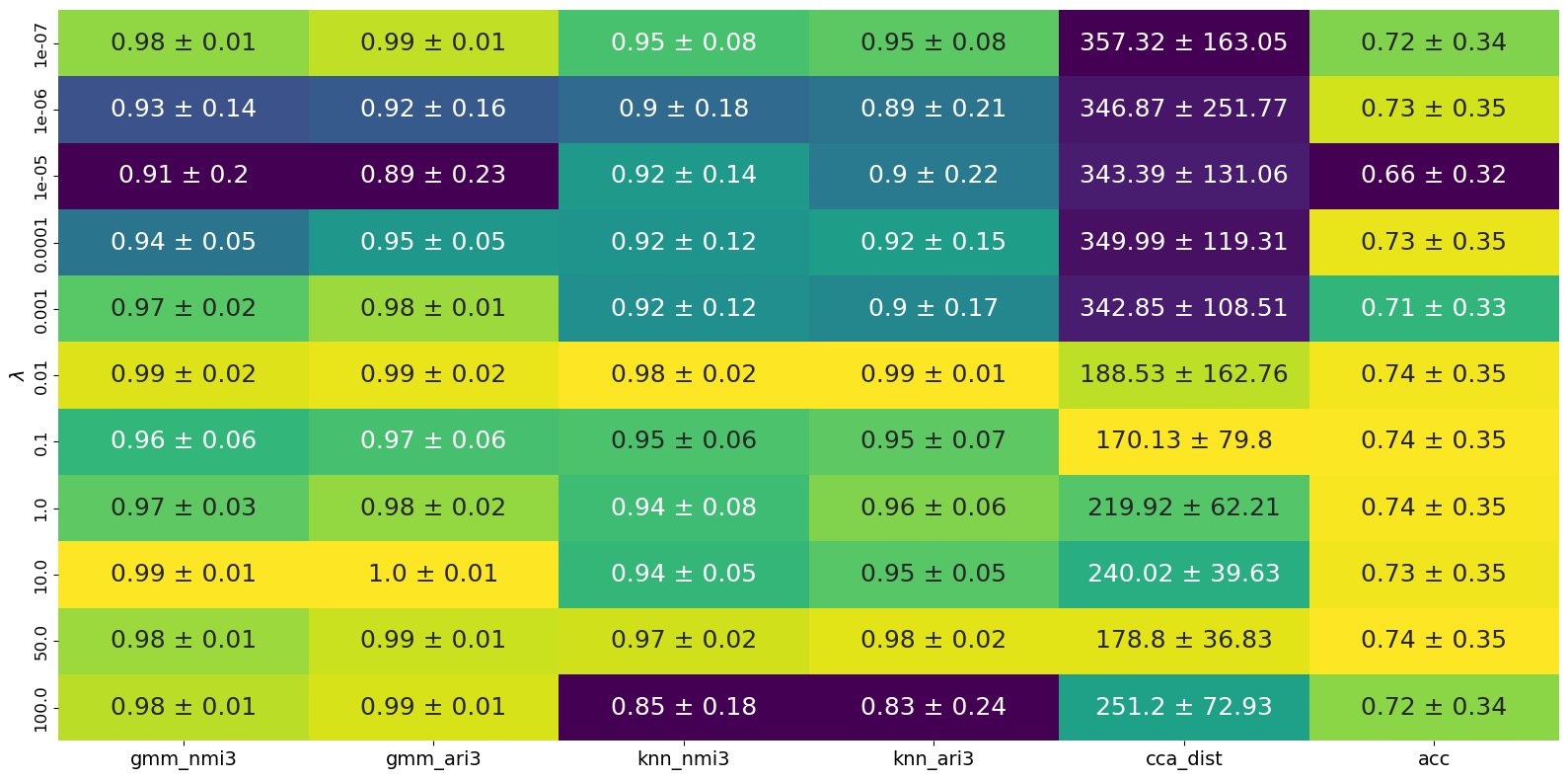}
    \caption{} 
    \end{subfigure}
    \caption{Summary of synthetic dataset and experiment results. The proposed metric and prediction accuracy show a strong negative Spearman rank correlation (-0.71).}
    \label{fig:exp-synthetic}
\end{figure}

\subsubsection{Application to Spatial Single-Cell Datasets}\label{sec:bio-apps}

There is a growing demand for robust computational tools that can extract biologically meaningful representations across heterogeneous samples. In such applications, it is crucial to obtain consistent and high-quality embeddings that generalize across samples while preserving fine-grained spatial structure. Our proposed metric is particularly well-suited for this goal, as it evaluates the stability and informativeness of unsupervised embeddings without requiring labeled data. When annotations are available, we further demonstrate that our method aligns closely with manual labels, exhibiting strong spatial continuity and biological interpretability across a range of datasets.

\paragraph{Mouse Spleen (CODEX)}

We apply our procedure to a high-resolution spatial proteomics dataset of the mouse spleen from \citet{goltsev2018deep}. This dataset, generated using CO-Detection by Indexing (CODEX), provides single-cell spatial and phenotypic profiles of immune cells across intact spleen tissue. With over 30 measured protein markers, it enables precise mapping of cell types, functional states, and spatial interactions at sub-tissue resolution. The dataset preserves key anatomical compartments—including T cell zones, B cell follicles, and red and white pulp—and highlights how spatial arrangement corresponds to immune function, such as structured lymphocyte zones and compartmentalized myeloid populations. We also have an access to the expert annotated lables, which we report the accuracy against it in Table~\ref{tab:bio-eval-metric}. Figure~\ref{fig:spleen} also refelects varying quality of learned embeddings by the choice of $\lambda$.

\begin{figure}[H]
    \centering
    \includegraphics[width=1\linewidth]{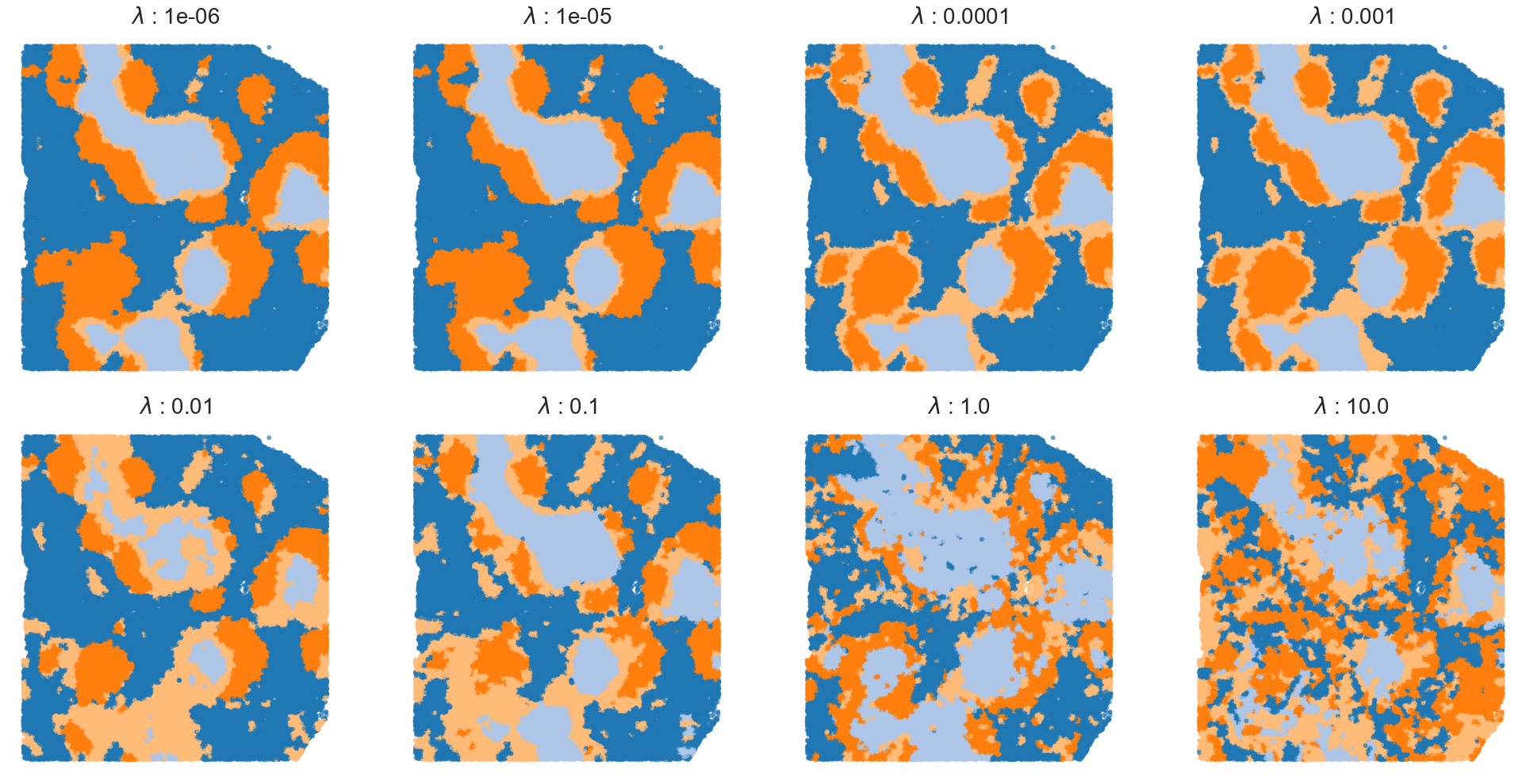}
   \caption{Visualizations of mouse spleen CODEX data based on the output of CCA-SSG model with different $\lambda$ settings. We can observe that depending on the choice of $\lambda$, the quality of expression varies a lot. When $\lambda$ becomes too large, the learned representation fail to recover the underlying cell environments. See Section~\ref{sec:validation-metric} for the setup.}
   \label{fig:spleen}
\end{figure}

\paragraph{Triple Negative Breast Cancer Dataset}
The dataset from \citet{keren2018structured} comprises MIBI-TOF imaging data from 41 TNBC patients, capturing the spatial expression of 36 proteins across tumor, immune, and regulatory markers at subcellular resolution. Tumors are classified into three immune architectures—cold, mixed, and compartmentalized—based on spatial patterns of immune infiltration, cell type organization, and marker expression. Compartmentalized tumors are linked to the best survival outcomes. Mixed tumors, featuring intermingled tumor and immune cells with high CD8+ T cell and checkpoint marker expression, may benefit from immunotherapy. Cold tumors show sparse immune presence and poor prognosis. Among these, the mixed and compartmentalized tumor microenvironments (TMEs) represent favorable immune architectures that the authors aim to recover, as they are identified through a combination of cell type composition, spatial organization, and marker expression profiles. 
We predict such group (mixed vs. comparatmentalized) based on the learned embeddings. The AUC for the prediction is reported in Table~\ref{tab:bio-eval-metric}.

\paragraph{Colorectal Cancer Dataset}

The colorectal cancer (CRC) dataset from \citet{schurch2020coordinated} includes 140 tissue regions from 35 advanced-stage CRC patients, profiled using FFPE-CODEX imaging with 56 protein markers to identify diverse cell populations within the tumor microenvironment (TME). The study identified nine distinct cellular neighborhoods (CNs) through unsupervised clustering of spatial co-occurrence patterns, revealing how the spatial organization of immune and stromal cells shapes immune responses. Two major immune architectures emerged (1) Crohn’s-like reaction (CLR), associated with structured immune infiltration and favorable outcomes, and  (2) diffuse inflammatory infiltration (DII), marked by disorganized immune presence and poor prognosis. These TMEs, distinguished by differences in cell types, spatial arrangements, and marker expression, represent the patterns the authors aim to recover, as they reflect clinically relevant immune organization associated with patient survival. The AUC for the predicting such group (CLR vs. DII) is reported in Table~\ref{tab:bio-eval-metric}.

\subsection{Validation of Bootstrap Samples} \label{app:bootstrap-validation}

\begin{table}[H]
\centering
\resizebox{0.75\textwidth}{!}{%
\begin{tabular}{@{}cccc@{}}
\toprule
\textbf{}                  & \textbf{True} & \textbf{Solution 1} & \textbf{Solution 2 (graph k-NN)} \\ \midrule
Number of Nodes            & 2708          & 2708±0              & 2708±0                           \\
Number of Edges            & 5278          & 5200.54±9           & 5171.78±7.34                     \\
Average Degree             & 3.8980        & 3.84±0.01           & 3.82±0.01                        \\
Density                    & 0.0014        & 0±0                 & 0±0                              \\
Avg Clustering Coefficient & 0.2407        & 0.01±0              & 0.05±0                           \\
Avg Connected Component    & 78            & 13.16±3.26          & 67.91±6.7                        \\
Giant Component Size       & 2485          & 2684.28±6.51        & 2620.38±10.78                    \\
Assortativity              & -0.0659       & -0.06±0             & -0.07±0                          \\
PageRank                   & 1353.5        & 1353.5±0            & 1353.5±0                         \\
Transitivity               & 0.0935        & 0.01±0              & 0.03±0                           \\
Number of Triangles        & 1630          & 133.96±11.62        & 471.48±27.11                     \\ \bottomrule
\end{tabular}%
}
\caption{Graph statistics for Cora illustrating the two solutions suggested in Section~\ref{sec:split-noisy}. We see the clear deviation on graph statistics, especially the average connected component and the number of triangles when we follow the \textit{Solution 1}.}
\label{tab:cora-exp-noisty-setting}
\end{table}

\begin{figure}[H]
    \centering
    \includegraphics[width=\linewidth]{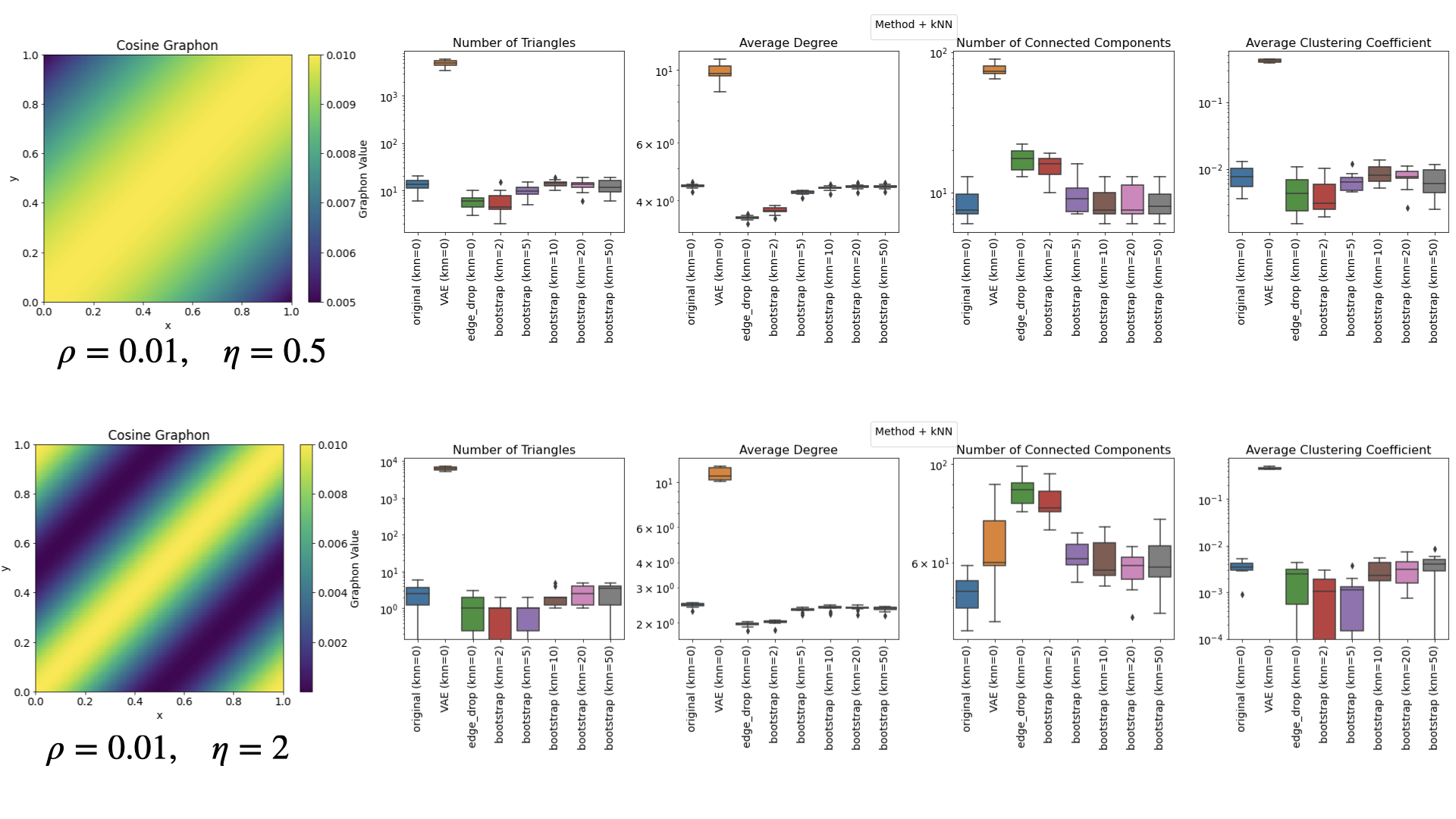}
    \caption{Visualization of the statistics obtained by different methods. The left most plot in each row corresponds to a visualization of the graphon function $W(x,y) = \rho * (1 + \cos(\eta  \pi \cdot (x - y))) / 2$, for $\rho=0.01$ and different values of $\eta$. Each row presents a visualization of 10 instances of a resampling of a graphon generated according to $W$ using different methods.}
    \label{fig:graphon_comparison}
\end{figure}

\begin{table}[H]
\centering
\renewcommand{\arraystretch}{1.2} 
\resizebox{\textwidth}{!}{%
\begin{tabular}{@{}cccccccc@{}}
\toprule
                                      & \textbf{}                  & \textbf{True} & \textbf{Edge Drop} & \textbf{Node Drop} & \textbf{Ours} & \textbf{NB} & \textbf{VAE} \\ \midrule
\multirow{11}{*}{\textbf{Scenario 1}} & Assortativity                & -0.0345 & 0±0         & 0±0          & 0±0         & 0±0          & -0.44±0.05      \\
                                      & Avg ClusterCoefficient       & 0       & 0±0         & 0±0          & 0±0         & 0±0          & 0.12±0.01       \\
                                      & Avg Degree                   & 0.12    & 0.12±0      & 0.1±0.01     & 0.08±0      & 0.04±0.02    & 2.63±0.47       \\
                                      & Density                      & 0.0002  & 0±0         & 0±0          & 0±0         & 0±0          & 0.01±0          \\
                                      & Giant Component Size         & 3       & 2.93±0.26   & 2.53±0.5     & 2.79±0.4    & 4.12±1.36    & 90.08±8.22      \\
                                      & Num Connected Components     & 470     & 471.13±0.87 & 380.72±2.44  & 480±1.15    & 492.81±3.52  & 410.89±8.21     \\
                                      & Num Edges                    & 30      & 28.87±0.87  & 19.28±2.44   & 20±1.15     & 8.82±5.12    & 657.73±118.12   \\
                                      & Num Nodes                    & 500     & 500±0       & 400±0        & 500±0       & 500±0        & 500±0           \\
                                      & Num Triangles                & 0       & 0±0         & 0±0          & 0±0         & 1.48±2.22    & 3060.65±1108.48 \\
                                      & PageRank                     & 249.5   & 249.5±0     & 199.5±0      & 249.5±0     & 249.5±0      & 249.5±0         \\
                                      & Transitivity                 & 0       & 0±0         & 0±0          & 0±0         & 0.33±0.31    & 0.47±0.05       \\ \midrule
\multirow{11}{*}{\textbf{Scenario 2}} & Assortativity              & -0.0227       & -0.02±0.01         & -0.02±0.04         & -0.02±0.03    & 0.01±0.04   & 0.04±0.04    \\
                                      & Avg ClusterCoefficient       & 0.0064  & 0.01±0      & 0.01±0       & 0.01±0      & 0.02±0.01    & 0.16±0.01       \\
                                      & Avg Degree                   & 3.224   & 3.1±0.02    & 2.58±0.07    & 3.2±0.01    & 3.68±0.29    & 3.13±0.09       \\
                                      & Density                      & 0.0065  & 0.01±0      & 0.01±0       & 0.01±0      & 0.01±0       & 0.01±0          \\
                                      & Giant Component Size         & 479     & 475.02±2.21 & 360.97±6.3   & 481.21±2.64 & 407.42±11.16 & 70.91±7.09      \\
                                      & Num Connected Components     & 18      & 21.41±1.81  & 33.23±4.71   & 17.89±1.25  & 91.93±10.74  & 391.57±4.03     \\
                                      & Num Edges                    & 806     & 774.11±4.42 & 515.92±13.17 & 801.17±1.81 & 921±71.47    & 782.12±21.59    \\
                                      & Num Nodes                    & 500     & 500±0       & 400±0        & 500±0       & 500±0        & 500±0           \\
                                      & Num Triangles                & 7       & 6.16±0.89   & 3.59±1.35    & 5.42±2.36   & 68.31±21.3   & 4469.92±189.99  \\
                                      & PageRank                     & 249.5   & 249.5±0     & 199.5±0      & 249.5±0     & 249.5±0      & 249.5±0         \\
                                      & Transitivity                 & 0.0083  & 0.01±0      & 0.01±0       & 0.01±0      & 0.04±0.01    & 0.69±0.01       \\ \midrule
\multirow{11}{*}{\textbf{Scenario 3}} & Assortativity              & -0.0227       & -0.02±0.01         & -0.02±0.04         & -0.01±0.03    & 0.01±0.04   & -0.58±0.06   \\
                                      & Avg ClusterCoefficient       & 0.0064  & 0.01±0      & 0.01±0       & 0.01±0      & 0.02±0.01    & 0.11±0.02       \\
                                      & Avg Degree                   & 3.224   & 3.1±0.02    & 2.58±0.07    & 3.21±0.01   & 3.68±0.29    & 1.28±0.18       \\
                                      & Density                      & 0.0065  & 0.01±0      & 0.01±0       & 0.01±0      & 0.01±0       & 0±0             \\
                                      & Giant Component Size         & 479     & 475.02±2.21 & 360.97±6.3   & 481.2±2.77  & 407.42±11.16 & 49.52±10.72     \\
                                      & Num Connected Components     & 18      & 21.41±1.81  & 33.23±4.71   & 17.84±1.28  & 91.93±10.74  & 421.24±11.14    \\
                                      & Num Edges                    & 806     & 774.11±4.42 & 515.92±13.17 & 801.7±1.61  & 921±71.47    & 321.11±45.67    \\
                                      & Num Nodes                    & 500     & 500±0       & 400±0        & 500±0       & 500±0        & 500±0           \\
                                      & Num Triangles                & 7       & 6.16±0.89   & 3.59±1.35    & 5.74±2.34   & 68.31±21.3   & 798.48±165.04   \\
                                      & PageRank                     & 249.5   & 249.5±0     & 199.5±0      & 249.5±0     & 249.5±0      & 249.5±0         \\
                                      & Transitivity                 & 0.0083  & 0.01±0      & 0.01±0       & 0.01±0      & 0.04±0.01    & 0.46±0.06       \\ \midrule
\multirow{11}{*}{\textbf{Scenario 4}} & Assortativity                & -0.0833 & 0±0         & 0±0          & 0±0         & 0±0          & -0.02±0.11      \\
                                      & Avg ClusteringCoefficient & 0       & 0±0         & 0±0          & 0±0         & 0±0.01       & 0.23±0.01       \\
                                      & Avg Degree                  & 0.104   & 0.1±0       & 0.08±0.01    & 0.07±0.01   & 0.04±0.02    & 7.39±0.37       \\
                                      & Density                      & 0.0002  & 0±0         & 0±0          & 0±0         & 0±0          & 0.01±0          \\
                                      & Giant Component Size       & 3       & 3±0.06      & 2.77±0.42    & 2.58±0.49   & 4.28±1.43    & 81.19±7.89      \\
                                      & Num Connected Components & 474           & 475.1±0.82         & 383.45±2.4         & 482.85±1.29   & 492.15±3.75 & 350.19±9.04  \\
                                      & Num Edges                   & 26      & 24.9±0.82   & 16.55±2.4    & 17.15±1.29  & 9.68±5.66    & 1847.28±91.84   \\
                                      & Num Nodes                   & 500     & 500±0       & 400±0        & 500±0       & 500±0        & 500±0           \\
                                      & Num Triangles               & 0       & 0±0         & 0±0          & 0±0         & 1.75±2.65    & 19727.09±959.62 \\
                                      & PageRank                     & 249.5   & 249.5±0     & 199.5±0      & 249.5±0     & 249.5±0      & 249.5±0         \\
                                      & Transitivity                 & 0       & 0±0         & 0±0          & 0±0         & 0.34±0.32    & 0.75±0.02       \\ \bottomrule
\end{tabular}%
}
\caption{Comparison of all sampling methods on graphon model as posited in Equation~\ref{eq:graphex}. The ground truth graphon is generated with n = 500, p = 150, k = 20. For each methods, 500 (bootstrap) samples are generated. For edge and node drop, we randomly remove 20\% of edges or nodes (and corresponding edges). Our proposed nonparametric bootstrap consistently achieves significant similarities with the ground truth graph under different scenarios. }
\label{tab:graphon-compare-all}
\end{table}

We analyze four scenarios (each in Table~\ref{tab:synthetic-graph-stat-s1}, \ref{tab:synthetic-graph-stat-s2}, \ref{tab:synthetic-graph-stat-s3}, and \ref{tab:synthetic-graph-stat-s4}) of recovering the underlying dependency by our proposed nonparametric bootstrap method either through graph-knn or feature-knn. The graph is generated by the model posited in Equation~\ref{eq:graphex} with varying graphon function $W$ and feature generator $g$.

\begin{table}[H]
\centering
\renewcommand{\arraystretch}{1.4} 
\resizebox{\textwidth}{!}{%
\begin{tabular}{@{}ccccccccc@{}}
\toprule
 &
  \textbf{} &
  \textbf{} &
  \multicolumn{3}{c}{\textbf{graph-kNN}} &
  \multicolumn{3}{c}{\textbf{feature-kNN}} \\ \midrule
 &
  \textbf{} &
  \textbf{Original} &
  \textbf{k = 5} &
  \textbf{k=20} &
  \textbf{k=50} &
  \textbf{k = 5} &
  \textbf{k=20} &
  \textbf{k=50} \\ \midrule
\multirow{11}{*}{\textbf{n= 100}} & Assortativity            &     \cellcolor[HTML]{A6A6A6}    &   \cellcolor[HTML]{A6A6A6}         &   \cellcolor[HTML]{A6A6A6}         & \cellcolor[HTML]{A6A6A6}           & 0±0        & 0±0       & 0±0        \\
                                  & Avg ClusterCoefficient   & 0       & 0±0        & 0±0        & 0±0        & 0±0        & 0±0       & 0±0        \\
                                  & Avg Degree               & 0.02    & 0±0        & 0±0        & 0±0        & 0±0        & 0±0       & 0±0        \\
                                  & Density                  & 0.0002  & 0±0        & 0±0        & 0±0        & 0±0        & 0±0       & 0±0        \\
                                  & Giant Component Size     & 2       & 1±0        & 1±0        & 1±0        & 1±0        & 1±0       & 1±0        \\
                                  & Num Connected Components & 99      & 100±0      & 100±0      & 100±0      & 100±0      & 100±0     & 100±0      \\
                                  & Num Edges                & 1       & 0±0        & 0±0        & 0±0        & 0±0        & 0±0       & 0±0        \\
                                  & Num Nodes                & 100     & 100±0      & 100±0      & 100±0      & 100±0      & 100±0     & 100±0      \\
                                  & Num Triangles            & 0       & 0±0        & 0±0        & 0±0        & 0±0        & 0±0       & 0±0        \\
                                  & PageRank                 & 49.5    & 49.5±0     & 49.5±0     & 49.5±0     & 49.5±0     & 49.5±0    & 49.5±0     \\
                                  & Transitivity             & 0       & 0±0        & 0±0        & 0±0        & 0±0        & 0±0       & 0±0        \\ \midrule
\multirow{11}{*}{\textbf{n=500}}  & Assortativity            & -0.0345 & 0±0        & 0±0        & 0±0        & 0±0        & 0±0       & 0±0        \\
                                  & Avg ClusterCoefficient   & 0       & 0±0        & 0±0        & 0±0        & 0±0        & 0±0       & 0±0        \\
                                  & Avg Degree               & 0.12    & 0±0        & 0.01±0     & 0.02±0     & 0.04±0.01  & 0.08±0    & 0.11±0     \\
                                  & Density                  & 0.0002  & 0±0        & 0±0        & 0±0        & 0±0        & 0±0       & 0±0        \\
                                  & Giant Component Size     & 3       & 2.04±0.82  & 2.32±0.47  & 2.34±0.47  & 2±0        & 2.79±0.4  & 2.94±0.24  \\
 &
  Num Connected Components &
  470 &
  498.96±0.82 &
  497.07±0.82 &
  495.4±0.96 &
  489.19±1.61 &
  480±1.15 &
  472.94±0.86 \\
                                  & Num Edges                & 30      & 1.04±0.82  & 2.93±0.82  & 4.6±0.96   & 10.81±1.61 & 20±1.15   & 27.06±0.86 \\
                                  & Num Nodes                & 500     & 500±0      & 500±0      & 500±0      & 500±0      & 500±0     & 500±0      \\
                                  & Num Triangles            & 0       & 0±0        & 0±0        & 0±0        & 0±0        & 0±0       & 0±0        \\
                                  & PageRank                 & 249.5   & 249.5±0    & 249.5±0    & 249.5±0    & 249.5±0    & 249.5±0   & 249.5±0    \\
                                  & Transitivity             & 0       & 0±0        & 0±0        & 0±0        & 0±0        & 0±0       & 0±0        \\ \midrule
\multirow{11}{*}{\textbf{n=1000}} &
  Assortativity &
  -0.112 &
  -0.39±0.15 &
  -0.36±0.14 &
  -0.33±0.13 &
  0±0 &
  -0.01±0.11 &
  -0.02±0.08 \\
                                  & Avg ClusterCoefficient   & 0       & 0±0        & 0±0        & 0±0        & 0±0        & 0±0       & 0±0        \\
                                  & Avg Degree               & 0.182   & 0.03±0.01  & 0.03±0.01  & 0.04±0.01  & 0.09±0.01  & 0.14±0    & 0.17±0     \\
                                  & Density                  & 0.0002  & 0±0        & 0±0        & 0±0        & 0±0        & 0±0       & 0±0        \\
 &
  Giant Component Size &
  4 &
  3.45±0.5 &
  3.42±0.49 &
  3.46±0.51 &
  3.37±0.49 &
  4.06±0.6 &
  4.23±0.48 \\
 &
  Num Connected Components &
  909 &
  984.21±2.87 &
  983.32±2.82 &
  980.71±2.92 &
  955.1±2.99 &
  928.42±2.25 &
  916.2±1.51 \\
                                  & Num Edges                & 91      & 15.79±2.87 & 16.68±2.82 & 19.29±2.92 & 44.9±2.99  & 71.6±2.24 & 83.81±1.51 \\
                                  & Num Nodes                & 1000    & 1000±0     & 1000±0     & 1000±0     & 1000±0     & 1000±0    & 1000±0     \\
                                  & Num Triangles            & 0       & 0±0        & 0±0        & 0±0        & 0±0        & 0.02±0.13 & 0±0.06     \\
                                  & PageRank                 & 499.5   & 499.5±0    & 499.5±0    & 499.5±0    & 499.5±0    & 499.5±0   & 499.5±0    \\
                                  & Transitivity             & 0       & 0±0        & 0±0        & 0±0        & 0±0        & 0.01±0.04 & 0±0.01     \\ \bottomrule
\end{tabular}%
}
\caption{Scenario 1: the graph structure is highly localized ($W(u,v) = \mathds{1}\{|u - v| < 0.01\}$), leading to disconnected components and the failure of graph-based kNN, while features ($\mathcal{N}(5u, 0.01)$) exhibit a strong correlation with the latent variable $u$, enabling feature-based kNN success. The greyed-out cells indicate values that are unavailable.}
\label{tab:synthetic-graph-stat-s1}
\end{table}

\begin{table}[H]
\centering
\renewcommand{\arraystretch}{1.4} 
\resizebox{\textwidth}{!}{%
\begin{tabular}{@{}ccccccccc@{}}
\toprule
 & \textbf{}                & \textbf{} & \multicolumn{3}{c}{\textbf{graph-kNN}}   & \multicolumn{3}{c}{\textbf{feature-kNN}}  \\ \midrule
 &
  \textbf{} &
  \textbf{Original} &
  \textbf{k = 5} &
  \textbf{k=20} &
  \textbf{k=50} &
  \textbf{k = 5} &
  \textbf{k=20} &
  \textbf{k=50} \\ \midrule
\multirow{11}{*}{\textbf{n= 100}} &
  Assortativity &
  0.2884 &
  -0.24±0.16 &
  -0.12±0.18 &
  -0.13±0.17 &
  -0.01±0.19 &
  -0.11±0.17 &
  -0.1±0.17 \\
 & Avg ClusterCoefficient   & 0         & 0±0          & 0±0         & 0±0         & 0±0         & 0±0          & 0±0          \\
 & Avg Degree               & 0.66      & 0.37±0.03    & 0.49±0.03   & 0.6±0.02    & 0.53±0.03   & 0.63±0.02    & 0.65±0.01    \\
 & Density                  & 0.0067    & 0±0          & 0±0         & 0.01±0      & 0.01±0      & 0.01±0       & 0.01±0       \\
 & Giant Component Size     & 8         & 6.15±1.21    & 6.36±1.4    & 5.97±1.23   & 5.22±1.26   & 5.86±1.25    & 6.09±1.39    \\
 & Num Connected Components & 67        & 81.9±1.59    & 75.54±1.45  & 70.24±1.24  & 73.58±1.56  & 68.5±0.79    & 67.6±0.56    \\
 & Num Edges                & 33        & 18.27±1.57   & 24.54±1.44  & 29.78±1.24  & 26.44±1.55  & 31.52±0.79   & 32.41±0.55   \\
 & Num Nodes                & 100       & 100±0        & 100±0       & 100±0       & 100±0       & 100±0        & 100±0        \\
 & Num Triangles            & 0         & 0±0          & 0.02±0.15   & 0.02±0.13   & 0.01±0.12   & 0.01±0.12    & 0.01±0.12    \\
 & PageRank                 & 49.5      & 49.5±0       & 49.5±0      & 49.5±0      & 49.5±0      & 49.5±0       & 49.5±0       \\
 & Transitivity             & 0         & 0±0          & 0.01±0.03   & 0±0.03      & 0±0.03      & 0±0.02       & 0±0.02       \\ \midrule
\multirow{11}{*}{\textbf{n=500}} &
  Assortativity &
  -0.0359 &
  -0.12±0.03 &
  -0.08±0.03 &
  -0.08±0.03 &
  -0.04±0.03 &
  -0.02±0.03 &
  -0.01±0.04 \\
 & Avg ClusterCoefficient   & 0.0052    & 0±0          & 0.01±0      & 0.01±0      & 0.01±0      & 0.01±0       & 0.01±0       \\
 & Avg Degree               & 3.076     & 2.91±0.02    & 3.03±0.01   & 3.02±0.01   & 2.96±0.02   & 3.06±0.01    & 3.07±0       \\
 & Density                  & 0.0062    & 0.01±0       & 0.01±0      & 0.01±0      & 0.01±0      & 0.01±0       & 0.01±0       \\
 &
  Giant Component Size &
  471 &
  463.59±3.66 &
  466.91±3.48 &
  468.15±3.02 &
  466.85±3.11 &
  469.38±2.64 &
  469.54±2.58 \\
 & Num Connected Components & 29        & 34.37±2.24   & 31.73±1.94  & 31.15±1.79  & 32.07±1.91  & 29.84±1.27   & 29.66±1.23   \\
 & Num Edges                & 769       & 727.53±5.05  & 757.88±2.53 & 756.11±2.92 & 740.32±4.78 & 764.09±1.75  & 767.16±1.04  \\
 & Num Nodes                & 500       & 500±0        & 500±0       & 500±0       & 500±0       & 500±0        & 500±0        \\
 & Num Triangles            & 7         & 4.08±2.01    & 5.03±2.33   & 5.09±2.33   & 4.54±2.05   & 5.37±2.38    & 5.48±2.33    \\
 & PageRank                 & 249.5     & 249.5±0      & 249.5±0     & 249.5±0     & 249.5±0     & 249.5±0      & 249.5±0      \\
 & Transitivity             & 0.0087    & 0.01±0       & 0.01±0      & 0.01±0      & 0.01±0      & 0.01±0       & 0.01±0       \\ \midrule
\multirow{11}{*}{\textbf{n=1000}} &
  Assortativity &
  0.0122 &
  -0.05±0.02 &
  -0.03±0.02 &
  -0.04±0.02 &
  -0.01±0.02 &
  0±0.02 &
  0±0.02 \\
 & Avg ClusterCoefficient   & 0.0078    & 0.01±0       & 0.01±0      & 0.01±0      & 0.01±0      & 0.01±0       & 0.01±0       \\
 & Avg Degree               & 6.594     & 6.37±0.02    & 6.56±0.01   & 6.59±0      & 6.5±0.01    & 6.58±0       & 6.59±0       \\
 & Density                  & 0.0066    & 0.01±0       & 0.01±0      & 0.01±0      & 0.01±0      & 0.01±0       & 0.01±0       \\
 & Giant Component Size     & 999       & 998.74±0.5   & 998.98±0.13 & 999±0.06    & 998.94±0.26 & 998.99±0.08  & 998.99±0.15  \\
 & Num Connected Components & 2         & 2.25±0.5     & 2.02±0.13   & 2±0.06      & 2.06±0.25   & 2.01±0.08    & 2.01±0.1     \\
 & Num Edges                & 3297      & 3185.15±9.77 & 3281.22±3.6 & 3293.46±1.6 & 3248.4±6.06 & 3289.46±2.37 & 3294.19±1.49 \\
 & Num Nodes                & 1000      & 1000±0       & 1000±0      & 1000±0      & 1000±0      & 1000±0       & 1000±0       \\
 & Num Triangles            & 50        & 40.6±6.86    & 47.34±6.66  & 48.53±6.7   & 46.97±6.8   & 50.03±7.75   & 50.54±7.07   \\
 & PageRank                 & 499.5     & 499.5±0      & 499.5±0     & 499.5±0     & 499.5±0     & 499.5±0      & 499.5±0      \\
 & Transitivity             & 0.0069    & 0.01±0       & 0.01±0      & 0.01±0      & 0.01±0      & 0.01±0       & 0.01±0       \\ \bottomrule
\end{tabular}%
}
\caption{Scenario 2 has a well-structured graph ($W(u,v) = 1 - |u - v|$), ensuring graph kNN success, but highly oscillatory features $(\sin(10u) + \mathcal{N}(0, 0.1)$) disrupt feature-based kNN. }
\label{tab:synthetic-graph-stat-s2}
\end{table}

\begin{table}[H]
\centering
\renewcommand{\arraystretch}{1.4} 
\resizebox{\textwidth}{!}{%
\begin{tabular}{@{}ccccccccc@{}}
\toprule
 &
  \textbf{} &
  \textbf{} &
  \multicolumn{3}{c}{\textbf{graph-kNN}} &
  \multicolumn{3}{c}{\textbf{feature-kNN}} \\ \midrule
 &
  \textbf{} &
  \textbf{Original} &
  \textbf{k = 5} &
  \textbf{k=20} &
  \textbf{k=50} &
  \textbf{k = 5} &
  \textbf{k=20} &
  \textbf{k=50} \\ \midrule
\multirow{11}{*}{\textbf{n= 100}} & Assortativity & -0.0344 & -0.15±0.14 & -0.08±0.15 & -0.06±0.14 & -0.05±0.15 & -0.09±0.16 & -0.08±0.15 \\
 &
  Avg ClusterCoefficient &
  0 &
  0±0 &
  0±0.01 &
  0±0.01 &
  0±0.01 &
  0±0.01 &
  0±0.01 \\
 &
  Avg Degree &
  0.78 &
  0.56±0.03 &
  0.66±0.03 &
  0.74±0.02 &
  0.65±0.03 &
  0.75±0.01 &
  0.77±0.01 \\
 &
  Density &
  0.0079 &
  0.01±0 &
  0.01±0 &
  0.01±0 &
  0.01±0 &
  0.01±0 &
  0.01±0 \\
 &
  Giant Component Size &
  13 &
  10.37±2.03 &
  10.83±3.2 &
  13.93±3.78 &
  10.35±3.12 &
  13.05±3.86 &
  13.88±3.95 \\
 &
  Num Connected Components &
  62 &
  72.71±1.53 &
  67.23±1.49 &
  63.09±1.09 &
  67.71±1.65 &
  62.69±0.85 &
  61.78±0.7 \\
 &
  Num Edges &
  39 &
  27.91±1.45 &
  32.95±1.43 &
  37.19±1 &
  32.46±1.63 &
  37.52±0.73 &
  38.46±0.56 \\
 &
  Num Nodes &
  100 &
  100±0 &
  100±0 &
  100±0 &
  100±0 &
  100±0 &
  100±0 \\
 &
  Num Triangles &
  0 &
  0.01±0.09 &
  0.1±0.31 &
  0.12±0.34 &
  0.09±0.31 &
  0.1±0.3 &
  0.11±0.32 \\
 &
  PageRank &
  49.5 &
  49.5±0 &
  49.5±0 &
  49.5±0 &
  49.5±0 &
  49.5±0 &
  49.5±0 \\
 &
  Transitivity &
  0 &
  0±0.01 &
  0.01±0.04 &
  0.01±0.03 &
  0.01±0.04 &
  0.01±0.03 &
  0.01±0.03 \\ \midrule
\multirow{11}{*}{\textbf{n=500}}  & Assortativity & -0.009  & -0.09±0.03 & -0.09±0.03 & -0.11±0.03 & -0.05±0.03 & -0.02±0.04 & -0.02±0.03 \\
 &
  Avg ClusterCoefficient &
  0.00575685 &
  0±0 &
  0.01±0 &
  0.01±0 &
  0.01±0 &
  0.01±0 &
  0.01±0 \\
 &
  Avg Degree &
  3.228 &
  3.07±0.02 &
  3.19±0.01 &
  3.19±0.01 &
  3.15±0.01 &
  3.21±0.01 &
  3.22±0 \\
 &
  Density &
  0.0065 &
  0.01±0 &
  0.01±0 &
  0.01±0 &
  0.01±0 &
  0.01±0 &
  0.01±0 \\
 &
  Giant Component Size &
  479 &
  473.05±3 &
  474.67±2.97 &
  476.17±2.19 &
  476.13±2.16 &
  476.77±2.06 &
  476.69±2.09 \\
 &
  Num Connected Components &
  22 &
  26.44±2 &
  24.65±1.57 &
  23.92±1.34 &
  23.9±1.29 &
  23.14±0.99 &
  23.14±1.01 \\
 &
  Num Edges &
  807 &
  766.71±5.09 &
  796.85±2.43 &
  796.3±2.54 &
  787.46±3.64 &
  802.61±1.64 &
  805.05±1.07 \\
 &
  Num Nodes &
  500 &
  500±0 &
  500±0 &
  500±0 &
  500±0 &
  500±0 &
  500±0 \\
 &
  Num Triangles &
  8 &
  4.08±2.03 &
  5.18±2.31 &
  5.23±2.25 &
  4.94±2.15 &
  5.97±2.44 &
  6.02±2.59 \\
 &
  PageRank &
  249.5 &
  249.5±0 &
  249.5±0 &
  249.5±0 &
  249.5±0 &
  249.5±0 &
  249.5±0 \\
 &
  Transitivity &
  0.0094 &
  0.01±0 &
  0.01±0 &
  0.01±0 &
  0.01±0 &
  0.01±0 &
  0.01±0 \\ \midrule
\multirow{11}{*}{\textbf{n=1000}} &
  Assortativity &
  0.0014 &
  -0.06±0.02 &
  -0.03±0.02 &
  -0.06±0.02 &
  0±0.02 &
  0.01±0.02 &
  0.01±0.02 \\
 &
  Avg ClusterCoefficient &
  0.0077 &
  0.01±0 &
  0.01±0 &
  0.01±0 &
  0.01±0 &
  0.01±0 &
  0.01±0 \\
 &
  Avg Degree &
  6.584 &
  6.36±0.02 &
  6.55±0.01 &
  6.58±0 &
  6.53±0.01 &
  6.57±0 &
  6.58±0 \\
 &
  Density &
  0.0066 &
  0.01±0 &
  0.01±0 &
  0.01±0 &
  0.01±0 &
  0.01±0 &
  0.01±0 \\
 &
  Giant Component Size &
  997 &
  996.68±0.63 &
  996.94±0.31 &
  996.99±0.15 &
  996.97±0.19 &
  996.96±0.28 &
  996.98±0.2 \\
 &
  Num Connected Components &
  4 &
  4.3±0.57 &
  4.04±0.19 &
  4.01±0.1 &
  4.03±0.19 &
  4.02±0.15 &
  4.01±0.11 \\
 &
  Num Edges &
  3292 &
  3179.28±10.22 &
  3276.36±3.69 &
  3288.4±1.53 &
  3262.62±4.73 &
  3285.76±2.2 &
  3289.42±1.38 \\
 &
  Num Nodes &
  1000 &
  1000±0 &
  1000±0 &
  1000±0 &
  1000±0 &
  1000±0 &
  1000±0 \\
 &
  Num Triangles &
  60 &
  47.84±7.03 &
  49.11±7.17 &
  50.94±7.08 &
  50.74±6.88 &
  55.12±6.99 &
  55.54±7.53 \\
 &
  PageRank &
  499.5 &
  499.5±0 &
  499.5±0 &
  499.5±0 &
  499.5±0 &
  499.5±0 &
  499.5±0 \\
 &
  Transitivity &
  0.0082 &
  0.01±0 &
  0.01±0 &
  0.01±0 &
  0.01±0 &
  0.01±0 &
  0.01±0 \\ \bottomrule
\end{tabular}%
}
\caption{Scenario 3 maintains a structured graph ($W(u,v)= 1 - |u - v|$) and smooth feature variation ($\mathcal{N}(5u, 0.01)$), leading to the success of both methods.}
\label{tab:synthetic-graph-stat-s3}
\end{table}

\begin{table}[H]
\centering
\renewcommand{\arraystretch}{1.4} 
\resizebox{\textwidth}{!}{%
\begin{tabular}{@{}ccccccccc@{}}
\toprule
 &
  \textbf{} &
  \textbf{} &
  \multicolumn{3}{c}{\textbf{graph-kNN}} &
  \multicolumn{3}{c}{\textbf{feature-kNN}} \\ \midrule
 &
  \textbf{} &
  \textbf{Original} &
  \textbf{k = 5} &
  \textbf{k=20} &
  \textbf{k=50} &
  \textbf{k = 5} &
  \textbf{k=20} &
  \textbf{k=50} \\ \midrule
\multirow{11}{*}{\textbf{n= 100}} &
  Assortativity &\cellcolor[HTML]{A6A6A6}
   &\cellcolor[HTML]{A6A6A6}
   &\cellcolor[HTML]{A6A6A6}
   &\cellcolor[HTML]{A6A6A6}
   &
  0±0 &
  0±0 &
  0±0 \\
 &
  Avg ClusterCoefficient &
  0 &
  0±0 &
  0±0 &
  0±0 &
  0±0 &
  0±0 &
  0±0 \\
 &
  Avg Degree &
  0.02 &
  0±0 &
  0±0 &
  0±0 &
  0±0 &
  0±0 &
  0±0 \\
 &
  Density &
  0.00020202 &
  0±0 &
  0±0 &
  0±0 &
  0±0 &
  0±0 &
  0±0 \\
 &
  Giant Component Size &
  2 &
  1±0 &
  1±0 &
  1±0 &
  1±0 &
  1±0 &
  1±0 \\
 &
  Num Connected Components &
  99 &
  100±0 &
  100±0 &
  100±0 &
  100±0 &
  100±0 &
  100±0 \\
 &
  Num Edges &
  1 &
  0±0 &
  0±0 &
  0±0 &
  0±0 &
  0±0 &
  0±0 \\
 &
  Num Nodes &
  100 &
  100±0 &
  100±0 &
  100±0 &
  100±0 &
  100±0 &
  100±0 \\
 &
  Num Triangles &
  0 &
  0±0 &
  0±0 &
  0±0 &
  0±0 &
  0±0 &
  0±0 \\
 &
  PageRank &
  49.5 &
  49.5±0 &
  49.5±0 &
  49.5±0 &
  49.5±0 &
  49.5±0 &
  49.5±0 \\
 &
  Transitivity &
  0 &
  0±0 &
  0±0 &
  0±0 &
  0±0 &
  0±0 &
  0±0 \\ \midrule
\multirow{11}{*}{\textbf{n=500}} &
  Assortativity &
  -0.2 &
  0±0 &
  -0.34±0.19 &
  0±0 &
  0±0 &
  0±0 &
  -0.15±0.1 \\
 &
  Avg ClusterCoefficient &
  0 &
  0±0 &
  0±0 &
  0±0 &
  0±0 &
  0±0 &
  0±0 \\
 &
  Avg Degree &
  0.12 &
  0.02±0.01 &
  0.03±0.01 &
  0.04±0.01 &
  0.04±0.01 &
  0.08±0.01 &
  0.11±0 \\
 &
  Density &
  0.00024048 &
  0±0 &
  0±0 &
  0±0 &
  0±0 &
  0±0 &
  0±0 \\
 &
  Giant Component Size &
  3 &
  2.89±0.31 &
  3.11±0.32 &
  3.07±0.26 &
  2.8±0.4 &
  3±0.04 &
  3.16±0.37 \\
 &
  Num Connected Components &
  470 &
  494.14±1.63 &
  491.73±1.63 &
  489.81±1.48 &
  489.6±1.66 &
  480.53±1.37 &
  472.85±1.03 \\
 &
  Num Edges &
  30 &
  5.86±1.63 &
  8.27±1.63 &
  10.19±1.48 &
  10.4±1.66 &
  19.47±1.37 &
  27.15±1.03 \\
 &
  Num Nodes &
  500 &
  500±0 &
  500±0 &
  500±0 &
  500±0 &
  500±0 &
  500±0 \\
 &
  Num Triangles &
  0 &
  0±0 &
  0±0 &
  0±0 &
  0±0 &
  0±0 &
  0±0 \\
 &
  PageRank &
  249.5 &
  249.5±0 &
  249.5±0 &
  249.5±0 &
  249.5±0 &
  249.5±0 &
  249.5±0 \\
 &
  Transitivity &
  0 &
  0±0 &
  0±0 &
  0±0 &
  0±0 &
  0±0 &
  0±0 \\ \midrule
\multirow{11}{*}{\textbf{n=1000}} &
  Assortativity &
  0.14150943 &
  -0.15±0.16 &
  -0.08±0.15 &
  -0.06±0.13 &
  -0.09±0.08 &
  0±0.1 &
  0±0.1 \\
 &
  Avg ClusterCoefficient &
  0 &
  0±0 &
  0±0 &
  0±0 &
  0±0 &
  0±0 &
  0±0 \\
 &
  Avg Degree &
  0.208 &
  0.05±0.01 &
  0.05±0.01 &
  0.06±0.01 &
  0.09±0.01 &
  0.17±0 &
  0.19±0 \\
 &
  Density &
  0.00020821 &
  0±0 &
  0±0 &
  0±0 &
  0±0 &
  0±0 &
  0±0 \\
 &
  Giant Component Size &
  7 &
  6.31±0.8 &
  6.18±0.86 &
  6.21±0.86 &
  3.39±0.6 &
  4.58±0.75 &
  5.22±0.98 \\
 &
  Num Connected Components &
  896 &
  976.5±2.95 &
  975.11±2.75 &
  970.02±2.58 &
  953.18±3.15 &
  916.91±2.48 &
  904±1.75 \\
 &
  Num Edges &
  104 &
  23.71±2.94 &
  25.15±2.76 &
  30.18±2.55 &
  46.82±3.15 &
  83.11±2.48 &
  96.01±1.74 \\
 &
  Num Nodes &
  1000 &
  1000±0 &
  1000±0 &
  1000±0 &
  1000±0 &
  1000±0 &
  1000±0 \\
 &
  Num Triangles &
  0 &
  0±0 &
  0±0 &
  0±0 &
  0±0 &
  0.02±0.13 &
  0.01±0.09 \\
 &
  PageRank &
  499.5 &
  499.5±0 &
  499.5±0 &
  499.5±0 &
  499.5±0 &
  499.5±0 &
  499.5±0 \\
 &
  Transitivity &
  0 &
  0±0 &
  0±0 &
  0±0 &
  0±0 &
  0±0.03 &
  0±0.01 \\ \bottomrule
\end{tabular}%
}
\caption{Scenario 4  combines a fragmented graph ($W(u,v) = \mathds{1}\{|u - v| < 0.01\}$) with oscillatory features ($\sin(10u) + \mathcal{N}(0, 0.1)$), making the problem hard for both graph- and feature-based kNN. The greyed-out cells indicate values that are unavailable.}
\label{tab:synthetic-graph-stat-s4}
\end{table}

\begin{table}[H]
\centering
\renewcommand{\arraystretch}{1.2}
\resizebox{\textwidth}{!}{%
\begin{tabular}{@{}cccccccccc@{}}
\toprule
\textbf{} &
  \textbf{Statistic} &
  \textbf{Original} &
  \textbf{k=3} &
  \textbf{k=5} &
  \textbf{k=7} &
  \textbf{k=10} &
  \textbf{k=15} &
  \textbf{k=20} &
  \textbf{k=50} \\ \midrule
\multirow{11}{*}{Cora} &
  Number of Nodes &
  2708 &
  2708±0 &
  2708±0 &
  2708±0 &
  2708±0 &
  2708±0 &
  2708±0 &
  2708±0 \\
 &
  Number of Edges &
  5278 &
  4793.52±14.73 &
  4962.71±13.23 &
  5035.19±12.52 &
  5087.69±10.33 &
  5154.92±9.11 &
  5171.78±7.34 &
  5196.15±7.42 \\
 &
  Average Degree &
  3.90 &
  3.54±0.01 &
  3.67±0.01 &
  3.72±0.01 &
  3.76±0.01 &
  3.81±0.01 &
  3.82±0.01 &
  3.84±0.01 \\
 &
  Density &
  0.00 &
  0±0 &
  0±0 &
  0±0 &
  0±0 &
  0±0 &
  0±0 &
  0±0 \\
 &
  Avg Clustering Coefficient &
  0.24 &
  0.1±0.01 &
  0.09±0 &
  0.08±0 &
  0.07±0 &
  0.06±0 &
  0.05±0 &
  0.03±0 \\
 &
  Avg Connected Component &
  78 &
  136.32±6.99 &
  114.71±6.76 &
  97.29±7.3 &
  97.52±7 &
  62.44±6.95 &
  67.91±6.7 &
  42.54±6.08 \\
 &
  Giant Component Size &
  2485 &
  2479.92±19.9 &
  2530.81±21.35 &
  2571.64±18.39 &
  2580.51±14.08 &
  2625.44±11.64 &
  2620.38±10.78 &
  2652.33±9.41 \\
 &
  Assortativity &
  -0.07 &
  -0.09±0 &
  -0.09±0 &
  -0.08±0 &
  -0.08±0 &
  -0.08±0 &
  -0.07±0 &
  -0.08±0 \\
 &
  PageRank &
  1353 &
  1353.5±0 &
  1353.5±0 &
  1353.5±0 &
  1353.5±0 &
  1353.5±0 &
  1353.5±0 &
  1353.5±0 \\
 &
  Transitivity &
  0.09 &
  0.04±0 &
  0.04±0 &
  0.04±0 &
  0.03±0 &
  0.03±0 &
  0.03±0 &
  0.02±0 \\
 &
  Number of Triangles &
  1630 &
  716.03±32.22 &
  664.18±29.42 &
  623.48±29.33 &
  575.85±29.4 &
  512.85±27.32 &
  471.48±27.11 &
  319.15±21.65 \\ \midrule
\multirow{11}{*}{Citeseer} &
  Number of Nodes &
  3327 &
  3327±0 &
  3327±0 &
  3327±0 &
  3327±0 &
  3327±0 &
  3327±0 &
  3327±0 \\
 &
  Number of Edges &
  4552 &
  3846.33±15.1 &
  3951.45±13.58 &
  3997.51±13.36 &
  4031.65±12.6 &
  4059.15±11.71 &
  4127.78±10.93 &
  4150.58±11.39 \\
 &
  Average Degree &
  2.74 &
  2.31±0.01 &
  2.38±0.01 &
  2.4±0.01 &
  2.42±0.01 &
  2.44±0.01 &
  2.48±0.01 &
  2.5±0.01 \\
 &
  Density &
  0.00 &
  0±0 &
  0±0 &
  0±0 &
  0±0 &
  0±0 &
  0±0 &
  0±0 \\
 &
  Avg Clustering Coefficient &
  0.14 &
  0.05±0 &
  0.05±0 &
  0.04±0 &
  0.04±0 &
  0.04±0 &
  0.03±0 &
  0.03±0 \\
 &
  Avg Connected Component &
  438 &
  868.13±12.49 &
  848.01±10.31 &
  840.08±11.44 &
  830.07±10.87 &
  795.53±10.75 &
  635.09±11.87 &
  570.92±13.67 \\
 &
  Giant Component Size &
  2120 &
  1934.25±42.19 &
  2010.02±32.77 &
  2043.56±31.79 &
  2063.28±31.07 &
  2098.62±36.32 &
  2418.12±35.69 &
  2585.67±23.96 \\
 &
  Assortativity &
  0.05 &
  -0.02±0.01 &
  -0.01±0.01 &
  -0.01±0.01 &
  0±0.01 &
  0.01±0.01 &
  -0.08±0 &
  -0.1±0 \\
 &
  PageRank &
  1663 &
  1663±0 &
  1663±0 &
  1663±0 &
  1663±0 &
  1663±0 &
  1663±0 &
  1663±0 \\
 &
  Transitivity &
  0.13 &
  0.07±0 &
  0.07±0 &
  0.06±0 &
  0.05±0 &
  0.05±0 &
  0.04±0 &
  0.03±0 \\
 &
  Number of Triangles &
  1167 &
  574.04±29.17 &
  550.58±28.97 &
  520.69±27.43 &
  462.24±27.24 &
  431.58±25.85 &
  304.6±19.59 &
  227.62±16.37 \\ \midrule
\multirow{11}{*}{ChicagoSketch} &
  Number of Nodes &
  2176 &
  2176±0 &
  2176±0 &
  2176±0 &
  2176±0 &
  2176±0 &
  2176±0 &
  2176±0 \\
 &
  Number of Edges &
  15104 &
  14505.33±22.7 &
  14811.91±17.53 &
  14914.4±14.51 &
  14956.5±12.89 &
  14999.55±11.49 &
  15014.23±10.28 &
  15066.34±5.57 \\
 &
  Average Degree &
  13.88 &
  13.33±0.02 &
  13.61±0.02 &
  13.71±0.01 &
  13.75±0.01 &
  13.79±0.01 &
  13.8±0.01 &
  13.85±0.01 \\
 &
  Density &
  0.01 &
  0.01±0 &
  0.01±0 &
  0.01±0 &
  0.01±0 &
  0.01±0 &
  0.01±0 &
  0.01±0 \\
 &
  Avg Clustering Coefficient &
  0.57 &
  0.19±0 &
  0.17±0 &
  0.16±0 &
  0.14±0 &
  0.13±0 &
  0.12±0 &
  0.08±0 \\
 &
  Avg Connected Component &
  1 &
  1±0 &
  1±0 &
  1±0 &
  1±0 &
  1±0 &
  1±0 &
  1±0 \\
 &
  Giant Component Size &
  2176 &
  2176±0 &
  2176±0 &
  2176±0 &
  2176±0 &
  2176±0 &
  2176±0 &
  2176±0 \\
 &
  Assortativity &
  0.65 &
  0.31±0.01 &
  0.3±0.01 &
  0.3±0.01 &
  0.31±0.01 &
  0.31±0.01 &
  0.29±0.01 &
  0.13±0.01 \\
 &
  PageRank &
  1087 &
  1087.5±0 &
  1087.5±0 &
  1087.5±0 &
  1087.5±0 &
  1087.5±0 &
  1087.5±0 &
  1087.5±0 \\
 &
  Transitivity &
  0.56 &
  0.19±0 &
  0.17±0 &
  0.16±0 &
  0.14±0 &
  0.13±0 &
  0.12±0 &
  0.08±0 \\
 &
  Number of Triangles &
  38240 &
  11919.95±119.57 &
  11097.95±105.45 &
  10402.11±108.83 &
  9577.65±106.85 &
  8637.74±102.45 &
  8006.58±95.13 &
  5592.24±76.04 \\ \midrule
\multirow{11}{*}{Twitch\_PTBR} &
  Number of Nodes &
  1912 &
  1912±0 &
  1912±0 &
  1912±0 &
  1912±0 &
  1912±0 &
  1912±0 &
  1912±0 \\
 &
  Number of Edges &
  31299 &
  30803.01±27.13 &
  30985.09±25.09 &
  31049.84±24.48 &
  31076.91±23.3 &
  31084.55±23.33 &
  31082.05±22.83 &
  31058.76±25.23 \\
 &
  Average Degree &
  32.74 &
  32.22±0.03 &
  32.41±0.03 &
  32.48±0.03 &
  32.51±0.02 &
  32.52±0.02 &
  32.51±0.02 &
  32.49±0.03 \\
 &
  Density &
  0.02 &
  0.02±0 &
  0.02±0 &
  0.02±0 &
  0.02±0 &
  0.02±0 &
  0.02±0 &
  0.02±0 \\
 &
  Avg Clustering Coefficient &
  0.32 &
  0.17±0 &
  0.17±0 &
  0.17±0 &
  0.17±0 &
  0.17±0 &
  0.17±0 &
  0.17±0 \\
 &
  Avg Connected Component &
  1.00 &
  1.33±0.55 &
  1.17±0.4 &
  1.2±0.44 &
  1.26±0.5 &
  1.19±0.44 &
  1.14±0.39 &
  1.07±0.25 \\
 &
  Giant Component Size &
  1912 &
  1911.31±1.19 &
  1911.64±0.84 &
  1911.59±0.9 &
  1911.47±1.01 &
  1911.61±0.88 &
  1911.72±0.77 &
  1911.87±0.5 \\
 &
  Assortativity &
  -0.23 &
  -0.31±0 &
  -0.3±0 &
  -0.3±0 &
  -0.3±0 &
  -0.29±0 &
  -0.29±0 &
  -0.28±0 \\
 &
  PageRank &
  955 &
  955.5±0 &
  955.5±0 &
  955.5±0 &
  955.5±0 &
  955.5±0 &
  955.5±0 &
  955.5±0 \\
 &
  Transitivity &
  0.13 &
  0.08±0 &
  0.08±0 &
  0.08±0 &
  0.08±0 &
  0.08±0 &
  0.08±0 &
  0.08±0 \\
 &
  Number of Triangles &
  173510 &
  103368.74±1614.98 &
  102572.23±1741.49 &
  103379.78±1759.64 &
  104007.92±1678.98 &
  105301.1±1862.01 &
  105534.51±1904.54 &
  106230.23±1900.76 \\ \midrule
\multirow{11}{*}{Education} &
  Number of Nodes &
  3234 &
  3234±0 &
  3234±0 &
  3234±0 &
  3234±0 &
  3234±0 &
  3234±0 &
  3234±0 \\
 &
  Number of Edges &
  12717 &
  12449.4±13.31 &
  12567.43±9.6 &
  12593.18±8.39 &
  12606.58±8.34 &
  12616.48±7.7 &
  12615.12±7.79 &
  12608.38±8.42 \\
 &
  Average Degree &
  7.86 &
  7.7±0.01 &
  7.77±0.01 &
  7.79±0.01 &
  7.8±0.01 &
  7.8±0 &
  7.8±0 &
  7.8±0.01 \\
 &
  Density &
  0.00 &
  0±0 &
  0±0 &
  0±0 &
  0±0 &
  0±0 &
  0±0 &
  0±0 \\
 &
  Avg Clustering Coefficient &
  0.43 &
  0.19±0 &
  0.17±0 &
  0.15±0 &
  0.14±0 &
  0.12±0 &
  0.12±0 &
  0.12±0 \\
 &
  Avg Connected Component &
  17.00 &
  2.24±0.5 &
  1.2±0.44 &
  1.13±0.37 &
  1.14±0.36 &
  1.27±0.55 &
  1.64±0.79 &
  4.14±1.56 \\
 &
  Giant Component Size &
  3109 &
  3155.56±2.16 &
  3228.04±20.42 &
  3233.87±0.37 &
  3233.86±0.36 &
  3233.73±0.55 &
  3233.36±0.79 &
  3230.86±1.56 \\
 &
  Assortativity &
  0.14 &
  0.02±0.01 &
  0.02±0.01 &
  0.02±0.01 &
  0.02±0.01 &
  0.02±0.01 &
  0.03±0.01 &
  0.04±0.01 \\
 &
  PageRank &
  1616.50 &
  1616.5±0 &
  1616.5±0 &
  1616.5±0 &
  1616.5±0 &
  1616.5±0 &
  1616.5±0 &
  1616.5±0 \\
 &
  Transitivity &
  0.39 &
  0.19±0 &
  0.16±0 &
  0.15±0 &
  0.13±0 &
  0.12±0 &
  0.12±0 &
  0.11±0 \\
 &
  Number of Triangles &
  6490 &
  5402.03±65.45 &
  4712.47±56.84 &
  4356.54±57.62 &
  3980.89±58.92 &
  3569.63±54.37 &
  3486.89±53.63 &
  3346.53±48.62 \\ \midrule
\multirow{11}{*}{Anaheim} &
  Number of Nodes &
  914 &
  914±0 &
  914±0 &
  914±0 &
  914±0 &
  914±0 &
  914±0 &
  914±0 \\
 &
  Number of Edges &
  3881 &
  3557.23±17.18 &
  3744.01±10.86 &
  3804.45±8.77 &
  3845.48±5.53 &
  3855.53±4.2 &
  3858.74±4.21 &
  3855.93±4.51 \\
 &
  Average Degree &
  8.49 &
  7.78±0.04 &
  8.19±0.02 &
  8.32±0.02 &
  8.41±0.01 &
  8.44±0.01 &
  8.44±0.01 &
  8.44±0.01 \\
 &
  Density &
  0.01 &
  0.01±0 &
  0.01±0 &
  0.01±0 &
  0.01±0 &
  0.01±0 &
  0.01±0 &
  0.01±0 \\
 &
  Avg Clustering Coefficient &
  0.55 &
  0.19±0.01 &
  0.16±0.01 &
  0.14±0 &
  0.12±0 &
  0.11±0 &
  0.09±0 &
  0.05±0 \\
 &
  Avg Connected Component &
  1.00 &
  1.17±0.42 &
  1.1±0.33 &
  1.09±0.29 &
  1.05±0.22 &
  1.02±0.15 &
  1.01±0.1 &
  1.01±0.08 \\
 &
  Giant Component Size &
  914 &
  913.82±0.44 &
  913.89±0.35 &
  913.9±0.36 &
  913.95±0.25 &
  913.98±0.15 &
  913.99±0.1 &
  913.99±0.08 \\
 &
  Assortativity &
  0.71 &
  0.51±0.01 &
  0.45±0.01 &
  0.41±0.01 &
  0.39±0.01 &
  0.39±0.01 &
  0.39±0.01 &
  0.15±0.01 \\
 &
  PageRank &
  456 &
  456.5±0 &
  456.5±0 &
  456.5±0 &
  456.5±0 &
  456.5±0 &
  456.5±0 &
  456.5±0 \\
 &
  Transitivity &
  0.60 &
  0.2±0 &
  0.16±0 &
  0.14±0 &
  0.12±0 &
  0.11±0 &
  0.1±0 &
  0.06±0 \\
 &
  Number of Triangles &
  7162 &
  1968.36±46.38 &
  1779.38±43.05 &
  1575.97±39.58 &
  1438.32±38.23 &
  1270.53±35.34 &
  1163.42±32.96 &
  768.93±28.73 \\ \bottomrule
\end{tabular}%
}
\caption{Graph statistics of bootstrapped samples generated by Algorithm~\ref{alg:graph} with varying neighborhood size ($k$). }
\label{tab:graph-stat-real-data}
\end{table}

\begin{table}[H]
\renewcommand{\arraystretch}{1.2} %
\centering
\resizebox{\textwidth}{!}{%
\begin{tabular}{@{}cccccccc@{}}
\toprule
\textbf{} &
  \textbf{} &
  \textbf{Original} &
  \textbf{Edge Drop} &
  \textbf{Node Drop} &
  \textbf{Ours} &
  \textbf{Network Bootstrap} &
  \textbf{VAE} \\ \midrule
 &
  Assortativity &
  -0.07 &
  -0.07 ± 0 &
  -0.07 ± 0.01 &
  \textbf{-0.07±0} &
  {\ul -0.03 ± 0.03} &
  -0.43 ± 0 \\
 &
  Avg Clustering Coefficient &
  0.24 &
  0.22 ± 0 &
  0.17 ± 0.01 &
  \textbf{0.05±0} &
  {\ul 0.02 ± 0.01} &
  0.5 ± 0 \\
 &
  Avg Degree &
  3.90 &
  3.74 ± 0.01 &
  2.63 ± 0.07 &
  \textbf{3.82±0.01} &
  {\ul 2.11 ± 0.18} &
  10.04 ± 0.05 \\
 &
  Density &
  0.00 &
  0 ± 0 &
  0 ± 0 &
  \textbf{0±0} &
  {\ul 0 ± 0} &
  0 ± 0 \\
 &
  Giant Component Size &
  2485 &
  2457.77 ± 9.09 &
  1819.98 ± 30.29 &
  \textbf{2620.38±10.78} &
  1090.22 ± 41.05 &
  {\ul 1931.32 ± 9.06} \\
 &
  Num Connected Components &
  78 &
  100.12 ± 4.7 &
  587.49 ± 16.74 &
  \textbf{67.91±6.7} &
  1595.93 ± 38.78 &
  {\ul 777.68 ± 9.06} \\
 &
  Num Edges &
  5278 &
  5066.1 ± 11.78 &
  3382.72 ± 87.72 &
  \textbf{5171.78±7.34} &
  {\ul 2857.55 ± 244.83} &
  13598.31 ± 67.26 \\
 &
  Num Nodes &
  2708 &
  2708 ± 0 &
  2569.6 ± 9.32 &
  2708±0 &
  2708 ± 0 &
  2708 ± 0 \\
 &
  Num Triangles &
  1630.00 &
  1441.03 ± 20.11 &
  835.13 ± 63.84 &
  {\ul 471.48±27.11} &
  \textbf{801.19 ± 338.24} &
  78345.91 ± 643.03 \\
 &
  Pagerank &
  1353.50 &
  1353.5 ± 0 &
  1294.43 ± 4.23 &
  1353.5±0 &
  1353.5 ± 0 &
  1353.5 ± 0 \\
\multirow{-11}{*}{Cora} &
  Transitivity &
  0.09 &
  0.09 ± 0 &
  0.09 ± 0.01 &
  0.03±0 &
  {\ul 0.07 ± 0.01} &
  \textbf{0.12 ± 0} \\ \midrule
 &
  Assortativity &
  -0.23 &
  -0.23±0 &
  -0.23±0.01 &
  \textbf{-0.29±0} &
  0±0 &
  {\ul -0.45±0} \\
 &
  Avg Clustering Coefficient &
  0.32 &
  0.25±0 &
  0.26±0.01 &
  {\ul 0.17±0} &
  0±0 &
  \textbf{0.41±0} \\
 &
  Avg Degree &
  32.74 &
  26.19±0 &
  21.87±0.84 &
  \textbf{32.51±0.02} &
  0±0 &
  {\ul 19.89±0.04} \\
 &
  Density &
  0.02 &
  0.01±0 &
  0.01±0 &
  \textbf{0.02±0} &
  0±0 &
  {\ul 0.01±0} \\
 &
  Giant Component Size &
  1912 &
  1883.96±4.76 &
  1506±5.9 &
  \textbf{1911.72±0.77} &
  1±0 &
  {\ul 893.77±7.03} \\
 &
  Num Connected Components &
  1.00 &
  28.12±4.49 &
  325.55±8.23 &
  \textbf{1.14±0.39} &
  1912±0 &
  1016.37±6.87 \\
 &
  Num Edges &
  31299 &
  25039±0 &
  20029.75±776.17 &
  \textbf{31082.05±22.83} &
  0±0 &
  {\ul 19010.87±40.09} \\
 &
  Num Nodes &
  1912 &
  1912±0 &
  1831.36±7.18 &
  1912±0 &
  1912±0 &
  1912±0 \\
 &
  Num Triangles &
  173510 &
  88756.59±980.58 &
  89048.18±9052.57 &
  \textbf{105534.51±1904.54} &
  {\ul 0±0} &
  332319.47±623.05 \\
 &
  Pagerank &
  955.50 &
  955.5±0 &
  921.8±3.18 &
  955.5±0 &
  955.5±0 &
  955.5±0 \\
\multirow{-11}{*}{TwitchPTBR} &
  Transitivity &
  0.13 &
  0.1±0 &
  0.13±0 &
  \textbf{0.08±0} &
  0±0 &
  {\ul 0.31±0} \\  \midrule
 &
  Assortativity &
  0.65 &
  456.5 ± 0 &
  441.21 ± 2.15 &
  \textbf{0.29±0.01} &
  456.5 ± 0 &
  456.5 ± 0 \\
 &
  Avg Clustering Coefficient &
  0.57 &
  0 ± 0 &
  0 ± 0 &
  \textbf{0.12±0} &
  0 ± 0 &
  0 ± 0 \\
 &
  Avg Degree &
  13.88 &
  0.48 ± 0 &
  0.6 ± 0.01 &
  \textbf{13.8±0.01} &
  0 ± 0 &
  {\ul 0.59 ± 0.01} \\
 &
  Density &
  0.01 &
  0 ± 0 &
  0 ± 0 &
  \textbf{0.01±0} &
  {\ul 0 ± 0} &
  {\ul 0 ± 0} \\
 &
  Giant Component Size &
  2176 &
  2175.97 ± 0.18 &
  1739.89 ± 0.44 &
  \textbf{2176±0} &
  1 ± 0 &
  {\ul 703.59 ± 6.77} \\
 &
  Num Connected Components &
  1.00 &
  1.03 ± 0.16 &
  349.83 ± 7.54 &
  \textbf{1±0} &
  2176 ± 0 &
  {\ul 1473.41 ± 6.77} \\
 &
  Num Edges &
  15104 &
  12083 ± 0 &
  9658.73 ± 46.97 &
  \textbf{15014.23±10.28} &
  0 ± 0 &
  {\ul 9156.6 ± 85.77} \\
 &
  Num Nodes &
  2176 &
  2176 ± 0 &
  2088.77 ± 7.54 &
  2176±0 &
  2176 ± 0 &
  2176 ± 0 \\
 &
  Num Triangles &
  38240 &
  19576.4 ± 101.57 &
  19554 ± 325.13 &
  \textbf{8006.58±95.13} &
  0 ± 0 &
  {\ul 83059.43 ± 1685.2} \\
 &
  Pagerank &
  1087.50 &
  1087.5 ± 0 &
  1051.19 ± 3.29 &
  1087.5±0 &
  1087.5 ± 0 &
  1087.5 ± 0 \\
\multirow{-11}{*}{ChicagoSketch} &
  Transitivity &
  0.56 &
  0.45 ± 0 &
  0.56 ± 0 &
  {\ul 0.12±0} &
  0 ± 0 &
  \textbf{0.51 ± 0} \\ \midrule
 &
  Assortativity &
  0.14 &
  0.11 ± 0.01 &
  0.33 ± 0.02 &
  \textbf{0.03±0.01} &
  {\ul 0 ± 0} &
  -0.24 ± 0 \\
 &
  Avg Clustering Coefficient &
  0.43 &
  0.34 ± 0 &
  0.36 ± 0 &
  {\ul 0.12±0} &
  0 ± 0 &
  \textbf{0.37 ± 0} \\
 &
  Avg Degree &
  7.86 &
  6.29 ± 0 &
  5.58 ± 0.02 &
  \textbf{7.8±0} &
  {\ul 0 ± 0} &
  16.63 ± 0.07 \\
 &
  Density &
  0.00 &
  0 ± 0 &
  0 ± 0 &
  \textbf{0±0} &
  0 ± 0 &
  {\ul 0.01 ± 0} \\
 &
  Giant Component Size &
  3109 &
  3103.63 ± 2.38 &
  2478.84 ± 8.39 &
  \textbf{3233.36±0.79} &
  2.34 ± 0.94 &
  {\ul 1826.67 ± 7.86} \\
 &
  Num Connected Components &
  17 &
  25.11 ± 2.6 &
  539.85 ± 8.69 &
  \textbf{1.64±0.79} &
  3231.81 ± 1.71 &
  {\ul 1408.33 ± 7.86} \\
 &
  Num Edges &
  12717 &
  10173 ± 0 &
  8654.72 ± 30.07 &
  \textbf{12615.12±7.79} &
  {\ul 2.94 ± 2.91} &
  26887.73 ± 118.84 \\
 &
  Num Nodes &
  3234 &
  3234 ± 0 &
  3104.69 ± 8.44 &
  3234±0 &
  3234 ± 0 &
  3234 ± 0 \\
 &
  Num Triangles &
  6490 &
  3321.39 ± 32.58 &
  3321.02 ± 37.46 &
  \textbf{3486.89±53.63} &
  {\ul 0.96 ± 2.18} &
  257887.78 ± 2061.14 \\
 &
  Pagerank &
  1616.50 &
  1616.5 ± 0 &
  1562.58 ± 3.73 &
  1616.5±0 &
  1616.5 ± 0 &
  1616.5 ± 0 \\
\multirow{-11}{*}{Eduation} &
  Transitivity &
  0.39 &
  0.32 ± 0 &
  0.39 ± 0 &
  0.12±0 &
  \textbf{0.36 ± 0.48} &
  {\ul 0.35 ± 0} \\ \midrule
 &
  Assortativity &
  0.71 &
  0.6 ± 0.01 &
  0.69 ± 0.02 &
  \textbf{0.39±0.01} &
  \cellcolor[HTML]{A6A6A6} &
  -0.15 ± 0.02 \\
 &
 Avg Clustering Coefficient &
  0.55 &
  0.44 ± 0.01 &
  0.46 ± 0.01 &
  {\ul 0.09±0} &
  \cellcolor[HTML]{A6A6A6} &
  \textbf{0.22 ± 0} \\
 &
Avg Degree &
  8.49 &
  6.79 ± 0 &
  5.66 ± 0.08 &
  \textbf{8.44±0.01} &
  \cellcolor[HTML]{A6A6A6} &
  5.06 ± 0.07 \\
 &
  Density &
  0.01 &
  0 ± 0 &
  0 ± 0 &
  \textbf{0.01±0} &
  \cellcolor[HTML]{A6A6A6} &
  0 ± 0 \\
 &
Giant Component Size &
  914 &
  0.01 ± 0 &
  0.01 ± 0 &
  \textbf{913.99±0.1} &
  \cellcolor[HTML]{A6A6A6} &
  0.01 ± 0 \\
 &
Num Connected Components &
  1.00 &
  0 ± 0 &
  0 ± 0 &
  \textbf{1.01±0.1} &
  \cellcolor[HTML]{A6A6A6} &
  0 ± 0 \\
 &
Num Edges &
  3881 &
  911.49 ± 2.43 &
  727.17 ± 3.63 &
  \textbf{3858.74±4.21} &
  \cellcolor[HTML]{A6A6A6} &
  121.87 ± 13.67 \\
 &
Num Nodes &
  914 &
  2.8 ± 1.45 &
  149.32 ± 5.24 &
  \textbf{914±0} &
  \cellcolor[HTML]{A6A6A6} &
  645.81 ± 5.49 \\
 &
Num Triangles &
  7162 &
  3104 ± 0 &
  2483.09 ± 37.24 &
  \textbf{1163.42±32.96} &
  \cellcolor[HTML]{A6A6A6} &
  2311.09 ± 32.23 \\
 &
Pagerank&
  456.50 &
  914 ± 0 &
  877.31 ± 4.94 &
  \textbf{456.5±0} &
  \cellcolor[HTML]{A6A6A6} &
  914 ± 0 \\
\multirow{-11}{*}{Anaheim} &
  transitivity &
  0.60 &
  3661.41 ± 51.07 &
  3667.47 ± 152.92 &
  \textbf{0.1±0} &
  \cellcolor[HTML]{A6A6A6} &
  14770.14 ± 386.77 \\ \bottomrule
\end{tabular}%
}
\caption{Graph statistics of bootstrapped samples generated by Algorithm~\ref{alg:graph} with $k = 20 $, simple generation with Edge or Node Drop (with $p = 0.2$), and network bootstrap (NB) \citep{levin2021bootstrapping} and the extended VAE approach in Appendix-\ref{sec:split-more}. The best values among our method, NB and VAE are bolded, and the second best values are underlined. The simple split methods largely distort the original connectivity structure, reflected in the Giant Component Size or the Number of Connected Components. Our proposed local bootstrap consistently mimics the original graph in terms of the reported graph statistics. The greyed-out cells indicate values that are unavailable due to instability encountered during training.}
\label{tab:graph-stat-real-data-compare-all}
\end{table}

\begin{remark}
When the graph structure is sufficiently informative for the underlying latent variable distribution (see the differences in Table~\ref{tab:synthetic-graph-stat-s1} and Table~\ref{tab:synthetic-graph-stat-s3} for the graph-kNN case), the bootstrapped graphs show noticeable robustness to the choice of $k$ (the number of nearest neighbors). For example, with 500 nodes and k=20, the bootstrapped graph retains about 98\% of the original edges and recovers approximately 70\% of its triangles (see Table~\ref{tab:synthetic-graph-stat-s2} for detailed statistics). 
Increasing k produces progressively denser graphs with more edges and higher average degrees, but it also tends to lower the clustering coefficient and reduce the number of triangles. Conversely, using a very small k leads to sparser graphs that may preserve more local structure—reflected by higher clustering coefficients and relatively more triangles per edge—but can underrepresent global connectivity, often resulting in many small components. On real datasets, setting $k=20$ typically recovers an edge count close to that of the original graph, though it still underestimates the original triangle count (see Table~\ref{tab:graph-stat-real-data}). Nonetheless, our proposed method produces graphs whose triangle counts more closely match the original than those generated by other methods (see Table~\ref{tab:graph-stat-real-data-compare-all}).
\end{remark}

\subsection{Validation of the Entire Framework (Algorithm~\ref{alg:full})}\label{sec:validation-all-resuts}

\begin{figure}[H]
    \centering
   \begin{subfigure}{\textwidth}
        \includegraphics[width = \linewidth]{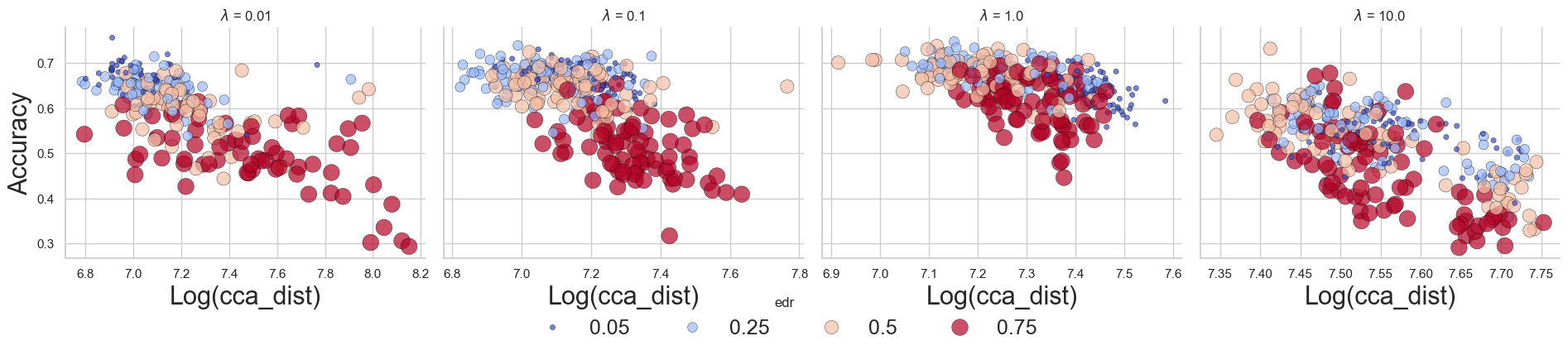}
        \caption{Our CCA-based metrics discussed in Section~\ref{sec:cv-eval}.}
   \end{subfigure}
   \begin{subfigure}{\textwidth}
        \includegraphics[width = \linewidth]{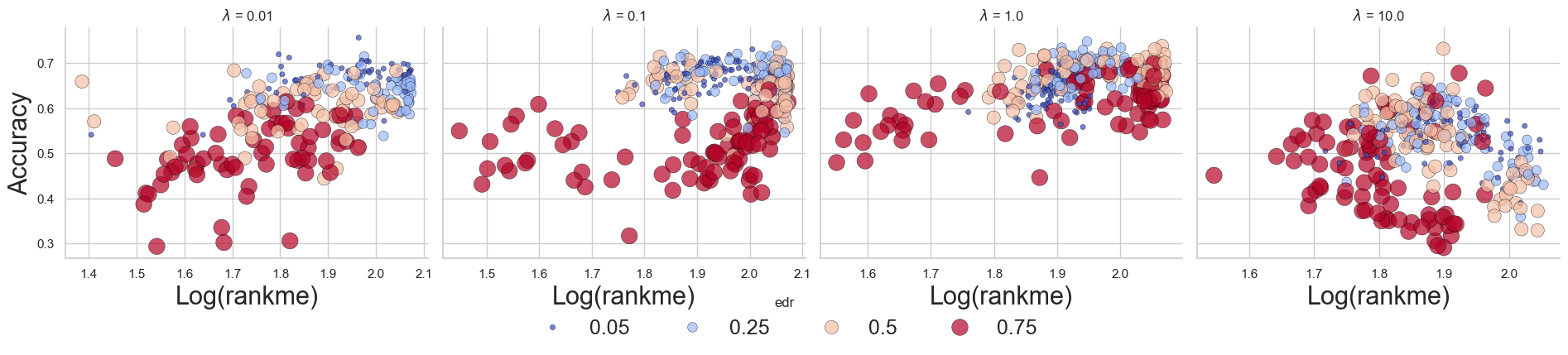}
        \caption{RankMe metric by \citet{garrido2023rankmeassessingdownstreamperformance}}
   \end{subfigure}
   \begin{subfigure}{\textwidth}
        \includegraphics[width = \linewidth]{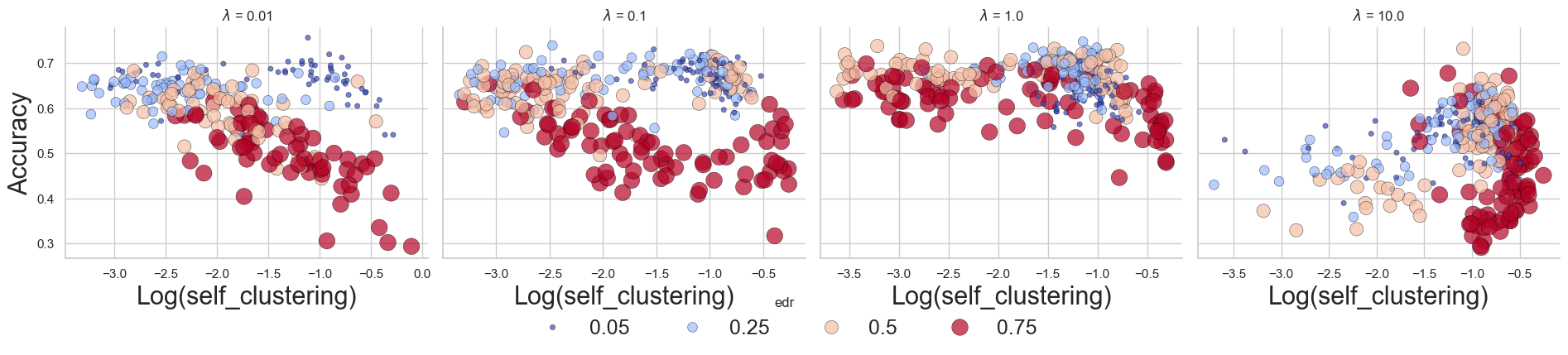}
        \caption{SelfCluster by \citet{tsitsulin2023unsupervised}}
   \end{subfigure}
   \begin{subfigure}{\textwidth}
        \includegraphics[width = \linewidth]{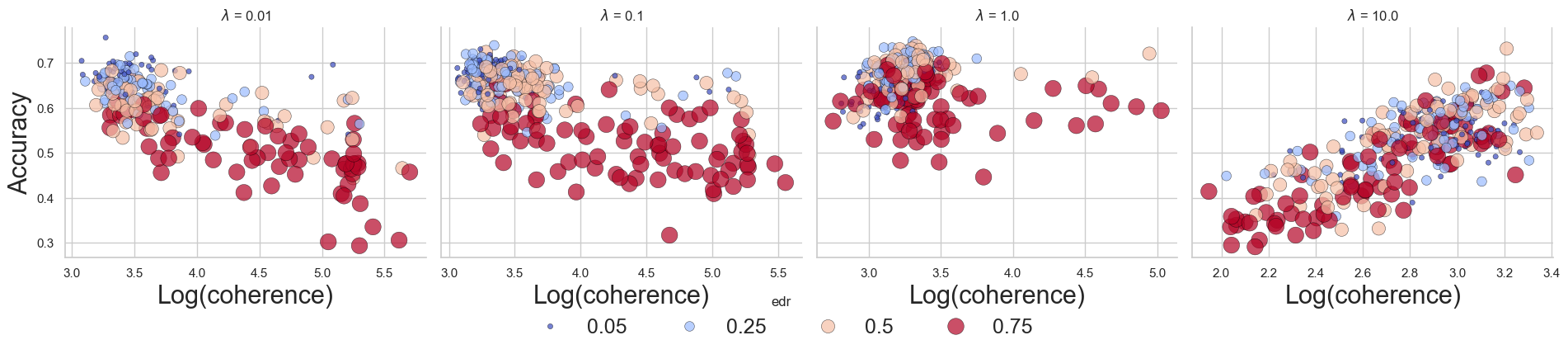}
        \caption{Coherence by \citet{tsitsulin2023unsupervised}}
   \end{subfigure}
   \caption{Visualizations of metrics value and classification accuracy on Cora. The CCA-SSG \citep{zhang2021canonical} model is trained on set of hyperparameter combinations including $\lambda$, edge drop rate (EDR), and feature masking rate (FMR). Each point denotes the each combination of hyperparameters. The color denotes the edge dropping rate with blue dots referring small value ($edr=0.05$) whereas the red dot referring to the large value ($edr=0.75$). In this dataset, a clear negative correlation is noted for our metrics across all combinations of hyperparameters. }
   \label{fig:cora-metrics}
\end{figure}

\begin{table}[H]
\centering
\renewcommand{\arraystretch}{1.4} %
\resizebox{\textwidth}{!}{%
\begin{tabular}{@{}cccccccccc@{}}
\toprule
\textbf{Dataset} &
  \textbf{Default} &
  \textbf{Ours} &
  \textbf{$\alpha$-ReQ} &
  \textbf{pseudo-$\kappa$} &
  \textbf{RankME} &
  \textbf{NESum} &
  \textbf{SelfCluster} &
  \textbf{Stable Rank} &
  \textbf{Coherence} \\ \midrule
Cora      & 0.36          & 0.4           & 0.4           & \textbf{0.57} & {\ul 0.55} & 0.4           & 0.4           & \textbf{0.57} & 0.54          \\
PubMed    & 0.62          & 0.68          & 0.67          & \textbf{0.74} & 0.67       & 0.67          & {\ul 0.69}    & \textbf{0.74} & 0.74          \\
Citeseer &
  0.32 &
  \textbf{0.42} &
  \textbf{0.42} &
  \textbf{0.42} &
  \textbf{0.42} &
  \textbf{0.42} &
  \textbf{0.42} &
  \textbf{0.42} &
  \textbf{0.42} \\
CS        & 0.47          & 0.71          & \textbf{0.82} & {\ul 0.75}    & {\ul 0.75} & \textbf{0.82} & \textbf{0.82} & {\ul 0.75}    & \textbf{0.82} \\
Chicago   & \textbf{0.39} & 0.34          & {\ul 0.35}    & {\ul 0.35}    & {\ul 0.35} & {\ul 0.35}    & {\ul 0.35}    & 0.29          & 0.4           \\
Anaheim   & 0.13          & \textbf{0.23} & 0.12          & {\ul 0.18}    & {\ul 0.18} & 0.12          & \textbf{0.23} & {\ul 0.18}    & 0.12          \\
Education & {\ul 0.23}    & \textbf{0.26} & 0.18          & 0.17          & 0.18       & 0.16          & 0.18          & \textbf{0.26} & 0.21          \\ \midrule
Avg\_clf  & 0.44          & 0.55          & 0.58          & {\ul 0.62}    & 0.60       & 0.58          & 0.58          & {\ul 0.62}    & \textbf{0.63} \\
Avg\_reg  & {\ul 0.25}    & \textbf{0.28} & 0.22          & 0.23          & 0.24       & 0.21          & {\ul 0.25}    & 0.24          & 0.24          \\ \bottomrule
\end{tabular}%
}
\caption{Downstream task (classification or regression) performance of the BGRL with hyperparameters chosen by each criteria. We compare to the BGRL \citep{bgrl} with the default parameters in the left-most column, $\text{fmr} = 0.5 , \text{edr} = 0.25, \lambda = 10^{-2}$. The best value is bolded and the second best is underlined.}
\label{tab:exp-summary-downstream-performance-bgrl}
\end{table}

\begin{table}[H]
\centering
\renewcommand{\arraystretch}{1.4} %
\resizebox{\textwidth}{!}{%
\begin{tabular}{@{}cccccccccc@{}}
\toprule
\textbf{Dataset} &
  \textbf{Default} &
  \textbf{Ours} &
  \textbf{$\alpha$-ReQ} &
  \textbf{pseudo-$\kappa$} &
  \textbf{RankME} &
  \textbf{NESum} &
  \textbf{SelfCluster} &
  \textbf{Stable Rank} &
  \textbf{Coherence} \\ \midrule
Cora      & 0.35       & 0.65          & 0.66          & {\ul 0.67}    & 0.63          & 0.63          & \textbf{0.69} & 0.59          & 0.47       \\
PubMed    & 0.49       & {\ul 0.81}    & \textbf{0.82} & 0.56          & 0.56          & 0.56          & \textbf{0.82} & \textbf{0.82} & 0.76       \\
Citeseer  & 0.38       & {\ul 0.51}    & \textbf{0.53} & {\ul 0.51}    & \textbf{0.53} & \textbf{0.53} & \textbf{0.53} & {\ul 0.51}    & 0.22       \\
CS        & 0.65       & {\ul 0.79}    & \textbf{0.86} & \textbf{0.86} & \textbf{0.86} & \textbf{0.86} & \textbf{0.86} & \textbf{0.86} & 0.76       \\
Photo     & 0.32       & \textbf{0.73} & 0.58          & 0.53          & 0.53          & \textbf{0.73} & \textbf{0.73} & \textbf{0.73} & {\ul 0.69} \\
Computers & 0.42       & 0.57          & \textbf{0.66} & 0.57          & \textbf{0.66} & \textbf{0.66} & \textbf{0.66} & 0.57          & {\ul 0.65} \\
Anaheim &
  {\ul 0.37} &
  \textbf{0.38} &
  \textbf{0.38} &
  \textbf{0.38} &
  \textbf{0.38} &
  \textbf{0.38} &
  \textbf{0.38} &
  \textbf{0.38} &
  \textbf{0.38} \\
Twitch    & 0.47       & \textbf{0.52} & 0.15          & 0.15          & 0.15          & 0.15          & 0.46          & 0.15          & {\ul 0.48} \\
Education & 0.29       & 0.26          & 0.33          & 0.33          & 0.33          & 0.33          & 0.33          & 0.33          & 0.26       \\ \midrule
Avg\_clf  & 0.44       & 0.68          & {\ul 0.69}    & 0.62          & 0.63          & 0.66          & \textbf{0.72} & 0.68          & 0.59       \\
Avg\_reg  & {\ul 0.38} & \textbf{0.39} & 0.29          & 0.29          & 0.29          & 0.29          & \textbf{0.39} & 0.29          & 0.37       \\ \bottomrule
\end{tabular}%
}
\caption{Downstream task (classification or regression) performance of the CCA-SSG with hyperparameters chosen by each criteria. We compare to the CCA-SSG \citep{zhang2021canonical} with the default parameters in the left-most column, $\text{fmr} = 0.5 , \text{edr} = 0.25, \lambda = 10^{-4}$. The best value is bolded and the second best is underlined.}
\label{tab:exp-summary-downstream-performance-cca-ssg}
\end{table}

\begin{table}[H]
\centering
\renewcommand{\arraystretch}{1.4} %
\resizebox{\textwidth}{!}{%
\begin{tabular}{@{}cccccccccc@{}}
\toprule
\textbf{Dataset} &
  \textbf{Default} &
  \textbf{Ours} &
  \textbf{$\alpha$-ReQ} &
  \textbf{pseudo-$\kappa$} &
  \textbf{RankME} &
  \textbf{NESum} &
  \textbf{SelfCluster} &
  \textbf{Stable Rank} &
  \textbf{Coherence} \\ \midrule
Cora &
  {\ul 0.64} &
  \textbf{0.68} &
  0.54 &
  0.54 &
  0.54 &
  0.54 &
  0.5 &
  0.54 &
  0.42 \\
PubMed &
  \cellcolor[HTML]{A6A6A6}&
  {\ul 0.78} &
  0.75 &
  0.75 &
  0.75 &
  0.75 &
  0.75 &
  0.75 &
  \textbf{0.79} \\
Citeseer &
  {\ul 0.53} &
  \textbf{0.55} &
  0.51 &
  0.51 &
  0.51 &
  0.51 &
  0.48 &
  0.51 &
  0.44 \\
CS &
  \cellcolor[HTML]{A6A6A6} &
  \textbf{0.72} &
  \textbf{0.72} &
  \textbf{0.72} &
  \textbf{0.72} &
  \textbf{0.72} &
  \textbf{0.72} &
  \textbf{0.72} &
  0.61 \\
Photo &
  0.71 &
  \textbf{0.83} &
  0.79 &
  0.79 &
  0.79 &
  0.79 &
  0.57 &
  {\ul 0.81} &
  0.41 \\
Computers &
  \cellcolor[HTML]{A6A6A6} &
  \textbf{0.45} &
  \textbf{0.45} &
  \textbf{0.45} &
  \textbf{0.45} &
  {\ul 0.39} &
  {\ul 0.39} &
  {\ul 0.39} &
  {\ul 0.39} \\ \midrule
Avg\_clf &
  0.63 &
  \textbf{0.67} &
  0.63 &
  {\ul 0.63} &
  {\ul 0.63} &
  0.62 &
  0.57 &
  0.62 &
  0.50 \\ \bottomrule
\end{tabular}%
}
\caption{Downstream task (classification or regression) performance of the GRACE with hyperparameters chosen by each criteria. We compare to the GRACE \citep{zhu2020GRACE} with the default parameters in the left-most column, $\text{fmr} = 0.5 , \text{edr} = 0.25, \tau = 1$. The best value is bolded and the second best is underlined. The greyed-out cells indicate values that are unavailable due to instability encountered during training.}
\label{tab:exp-summary-downstream-performance-grace}
\end{table}


\begin{table}[H]
\centering
\renewcommand{\arraystretch}{1.4} %
\resizebox{\textwidth}{!}{%
\begin{tabular}{@{}ccccccccc@{}}
\toprule
          & Ours             & \multicolumn{3}{c}{Literature}            & \multicolumn{4}{c}{\citet{tsitsulin2023unsupervised}}   \\ \midrule
 &
  CCA dist ($\downarrow$) &
  $\alpha$-ReQ ($\downarrow$) &
  pseudo-$\kappa$ ($\uparrow$) &
  RankMe ($\uparrow$) &
  NEsum ($\uparrow$) &
  SelfCluster ($\downarrow$) &
  Stable Rank ($\uparrow$) &
  Coherence ($\downarrow$) \\ \midrule
Cora      & \textbf{-0.6596} & -0.2414          & 0.2036  & 0.1998       & 0.2738          & 0.017            & 0.1146          & {\ul 0.2914}  \\
PubMed    & \textbf{-0.7702} & -0.2379          & 0.1422  & 0.2048       & {\ul 0.4854}    & -0.0103          & 0.0532          & 0.127         \\
Citeseer  & -0.2014          & \textbf{-0.3183} & 0.2842  & 0.2781       & 0.2518          & {\ul -0.3156}    & 0.0609          & -0.295        \\
CS        & 0.1875           & -0.5459          & 0.5356  & {\ul 0.5577} & \textbf{0.5608} & -0.4376          & 0.1899          & -0.3776       \\
Photo     & -0.3797          & -0.2886          & 0.274   & 0.3155       & 0.2698          & \textbf{-0.5009} & {\ul 0.4722}    & -0.3782       \\
Computers & -0.2433          & -0.1943          & 0.0777  & 0.2498       & {\ul 0.2875}    & -0.0714          & \textbf{0.3322} & -0.283        \\
Chicago   & \textbf{-0.1225} & 0.6618           & -0.6887 & -0.6667      & -0.3284         & 0.3701           & -0.451          & 0.1446        \\
Anaheim   & \textbf{-0.1528} & 0.352            & -0.3273 & -0.3091      & -0.1757         & 0.2337           & -0.0864         & {\ul -0.0543} \\
Twitch    & \textbf{-0.5858} & 0.6246           & -0.4868 & -0.5414      & -0.4852         & 0.3034           & -0.2921         & 0.2839        \\
Education & \textbf{-0.3464} & 0.4286           & -0.4315 & -0.3548      & -0.0065         & 0.0171           & 0.0917          & {\ul -0.1247} \\ \midrule
Avg\_clf  & {\ul -0.3445}    & -0.3044          & 0.2529  & 0.3010       & \textbf{0.3549} & -0.2198          & 0.2038          & -0.1526       \\
Avg\_reg  & \textbf{-0.3019} & 0.5168           & -0.4836 & -0.4680      & -0.2490         & 0.2311           & -0.1845         & 0.0624        \\ \bottomrule
\end{tabular}%
}
\caption{Spearman correlation between each metric and the node classification accuracy ($R^2$ if the predicted value is continuous) across different models and sets of hyperparameters. The higher the absolute value is, the better with the sign aligning with the arrow. We note that many other metrics lose their intended direction for some dataset. For example, the high \textit{StableRank} should indicate the better performance but the real relationship turns out to be reversed for Chicago, Aneheim and Twitch dataset. }
\label{tab:exp-summary-spearman-corr-full}
\end{table}

\end{document}